\title{Reinforcement Learning Gradients as Vitamin for Online Finetuning Decision Transformers}
\author{%
  Kai Yan \qquad Alexander G. Schwing \qquad Yu-Xiong Wang\\
  University of Illinois Urbana-Champaign\\
  \texttt{\{kaiyan3, aschwing, yxw\}@illinois.edu} \\
  \url{https://github.com/KaiYan289/RL_as_Vitamin_for_Online_Decision_Transformers}
}
\theoremstyle{plain}
\newtheorem{theorem}{Theorem}[section]
\newtheorem{lemma}{Lemma}
\newtheorem{corollary}{Corollary}
\theoremstyle{definition}
\newtheorem{definition}[theorem]{Definition}
\newtheorem{assumption}[theorem]{Assumption}
\theoremstyle{remark}
\newtheorem{remark}[theorem]{Remark}
\begin{document}

\maketitle

\begin{abstract}
Decision Transformers have recently emerged as a new and compelling paradigm for offline Reinforcement Learning (RL), completing a trajectory in an autoregressive way. While improvements have been made to overcome  initial shortcomings, online finetuning of decision transformers has been surprisingly under-explored. The widely adopted state-of-the-art Online Decision Transformer (ODT) still struggles when pretrained with low-reward offline data. In this paper, we theoretically analyze the online-finetuning of the decision transformer,  showing that the commonly used Return-To-Go (RTG) that's far from the expected return hampers the online fine-tuning process. This problem, however, is well-addressed by the value function and advantage of standard RL algorithms. As suggested by our analysis, in our experiments, we hence find that simply adding TD3 gradients to the finetuning process of ODT effectively improves the online finetuning performance of ODT, especially if ODT is pretrained with low-reward offline data. These findings provide new directions to further improve decision transformers.

\end{abstract}

\section{Introduction}
\label{sec:intro}

While Reinforcement Learning (RL) has achieved great success in recent years~\citep{Silver2017MasteringCA, lee2022multi}, it is known to struggle with several shortcomings, including training instability when  propagating a Temporal Difference (TD) error along long trajectories~\citep{chen2021decision}, low data efficiency when training from scratch~\citep{yan2022ceip}, and limited benefits from more modern neural network architectures~\citep{chebotar2023q}. The latter point differs significantly from other parts of the machine learning community such as Computer Vision~\citep{dosovitskiy2020image} and Natural Language Processing~\citep{brown2020language}. 

To address these issues, Decision Transformers (DTs)~\citep{chen2021decision} have been proposed as an emerging paradigm for RL, introducing more modern transformer architectures into the literature rather than the still widely used Multi-Layer Perceptrons (MLPs). Instead of evaluating state and state-action pairs, a DT considers the whole trajectory as a sequence to complete, and trains on offline data in a supervised, auto-regressive way. Upon inception, DTs have been improved in various ways, mostly dealing with architecture changes~\citep{mao2022transformer}, the token to predict other than return-to-go~\citep{furuta2021generalized}, addressing the problem of being overly optimistic~\citep{paster2022you}, and the inability to stitch together trajectories~\citep{waypoint}. Significant and encouraging improvements have been reported on those aspects.

However, one fundamental issue has been largely overlooked by the community: \textit{offline-to-online RL using decision transformers}, i.e., finetuning of decision transformers with online interactions. Offline-to-online RL~\citep{zhang2023policy,nakamoto2023cal} is a widely studied sub-field of RL, which combines offline RL learning from given, fixed trajectory data and online RL data from interactions with the environment. By first training on offline data and then finetuning, the agent can learn a policy with much greater data efficiency, while calibrating the  out-of-distribution error from  the offline dataset. Unsurprisingly, this sub-field has become popular in recent years.

While there are numerous works in the offline-to-online RL sub-field~\citep{lyu2022mildly, kostrikov2021offline, wu2022supported}, surprisingly few works have discussed the offline-to-online finetuning ability of decision transformers. While there is work that discusses finetuning of  decision transformers predicting encoded future trajectory information~\citep{xie2023future}, and work that finetunes pretrained decision transformers with PPO in multi-agent RL~\citep{meng2021offline}, the current widely adopted state-of-the-art is the Online Decision Transformer (ODT)~\citep{zheng2022online}: the decision transformer training is continued on online data following the same supervised-learning paradigm as in offline RL. However, this method struggles with low-reward data, as well as with reaching expert-level performance due to suboptimal trajectories~\citep{nakamoto2023cal} (also see Sec.~\ref{sec:exp}).

To address this issue and enhance  online finetuning  of decision transformers, we theoretically analyze the decision transformer based on recent results~\cite{brandfonbrener2022does}, showing 
that the commonly used conditioning on a high Return-To-Go (RTG) that's far from the expected return hampers results. To fix, we explore the possibility of using \textit{tried-and-true RL gradients}. Testing on multiple environments, we find that simply combining TD3~\cite{fujimoto2018addressing} gradients with the original auto-regressive ODT training paradigm is surprisingly effective: it improves results of ODT, especially if ODT is pretrained with low-reward offline data. 

Our contributions are summarized as follows:

1) We propose a simple yet effective method to boost the performance of online finetuning of decision transformers, {\color{black} especially if offline data is of medium-to-low quality}; 

2) We theoretically analyze the online decision transformer, explain its ``policy update'' mechanism  when using the commonly applied high target RTG, and point out its struggle to work well with online finetuning; 

3) We conduct experiments on multiple environments, and find that ODT aided by TD3 gradients (and sometimes even the TD3 gradient alone) are surprisingly effective for online finetuning of decision transformers. 

\section{Preliminaries}
\label{sec:prelim}

\textbf{Markov Decision Process.} A Markov Decision Process (MDP) is the basic framework of sequential decision-making. 
An MDP is characterized by five components: the state space $S$, the action space $A$, the transition function $p$, the reward $r$, and either the discount factor $\gamma$ or horizon $H$. MDPs involve an \textit{agent} making decisions in discrete steps $t\in\{0,1,2,\dots\}$. On step $t$, the agent receives the current state $s_t\in S$, and samples an action $a_t\in A$ according to its \textit{stochastic} policy $\pi(a_t|s_t)\in\Delta(A)$, where $\Delta(A)$ is the probability simplex over $A$, or its \textit{deterministic} policy $\mu(s_t)\in A$. Executing the action yields a reward $r(s_t,a_t)\in\mathbb{R}$, and leads to the evolution of the MDP to a new state $s_{t+1}$, governed by the MDP's transition function $p(s_{t+1}|s_t,a_t)$. The goal of the agent is to maximize the total reward $\sum_t\gamma^t r(s_t,a_t)$, discounted by the discount factor $\gamma\in[0, 1]$ for infinite steps, or $\sum_{t=1}^H r(s_t,a_t)$ for finite steps. When the agent ends a complete run, it finishes an \textit{episode}, and the state(-action) data collected during the run is referred to as a \textit{trajectory} $\tau$.

\textbf{Offline and Online RL.} Based on the source of learning data, RL can be roughly categorized into offline and online RL. The former learns from a given finite dataset of state-action-reward trajectories, while the latter learns from trajectories collected online from the environment. The effort of combining the two is called \textit{offline-to-online} RL, which first pre-trains a policy using offline data, and then continues to finetune the policy using online data with higher efficiency. Our work falls into the category of offline-to-online RL. We focus on improving the decision transformers, instead of  Q-learning-based methods which are commonly used in offline-to-online RL.

\textbf{Decision Transformer (DT).} The decision transformer represents a new paradigm of offline RL, going beyond a TD-error framework. It views a trajectory $\tau$ as a sequence to be auto-regressively completed. The sequence interleaves three types of tokens: \textbf{returns-to-go ($\text{RTG}$, the target total return)}, states, and actions. 
At step $t$, the past sequence of context length $K$ is given as the input, i.e., the input is $(\text{RTG}_{t-K},s_{t-k},a_{t-k},\dots,\text{RTG}_t, s_t)$, and an action is predicted by the auto-regressive model, which is usually implemented with a GPT-like architecture~\citep{brown2020language}. The model is trained via supervised learning, considering the past $K$ steps of the trajectory along with the current state and the current return-to-go as the feature, and the sequence of all actions $a$ in a segment as the labels. At evaluation time, a \textbf{desired return $\text{RTG}_{\text{eval}}$} is specified, since the \textbf{ground truth future return {\color{black}$\text{RTG}_{\text{real}}$}} isn't known in advance.

\textbf{Online Decision Transformer (ODT).} ODT has two stages: offline pre-training which is identical to classic DT training, and online finetuning where trajectories are iteratively collected and the policy is updated via supervised learning. Specifically, the action $a_t$ at step $t$ during rollouts is computed by the deterministic policy $\mu^{\text{DT}}(s_{t-T:t},a_{t-T:t-1}, \text{RTG}_{t-T:t},T=T_{\text{eval}},\text{RTG}=\text{RTG}_{\text{eval}})$,\footnote{ODT uses different RTGs for evaluation and online rollouts, but we refer to both as $\text{RTG}_{\text{eval}}$ as they are both expert-level expected returns.} or sampled from the stochastic policy $\pi^{\text{DT}}(a_t|s_{t-T:t},a_{t-T:t-1}, \text{RTG}_{t-T:t},T=T_{\text{eval}},\text{RTG}=\text{RTG}_{\text{eval}})$. Here, $T$ is the context length (which is $T_{\text{eval}}$ in evaluation), and $\text{RTG}_{\text{eval}}\in\mathbb{R}$ is the target return-to-go. The data buffer, initialized with offline data, is gradually replaced by online data during finetuning.

When updating the policy, the following loss (we use the deterministic policy as an example, and thus omit the entropy regularizer) is minimized:
\vspace{-1pt}
\begin{equation}
\label{eq:training-policy}
\sum_{t=1}^{T_{\text{train}}}\left\|\mu^{\text{DT}}\left(s_{0:t}, a_{0:t-1}, \text{RTG}_{0:t}, \text{RTG}=\text{RTG}_{\text{real}}, T=t\right)-a_t\right\|_2^2.
\end{equation}
Note, $T_{\text{train}}$ is the training context length and $\text{RTG}_{\text{real}}$ is the real return-to-go. For better readability, we denote $\{s_{x+1},s_{x+2},\dots,s_y\}$, $x, y\in\mathbb{N}$ as $s_{x:y}$ (i.e., left \textit{exclusive} and right \textit{inclusive}), and similarly $\{a_{x+1},a_{x+2},\dots,a_y\}$ as $a_{x:y}$ and $\{\text{RTG}_{x+1}, \dots, \text{RTG}_y\}$ as $\text{RTG}_{x:y}$. {\color{black}Specially, index $x=y$ represents an empty sequence. For example, when $t=1$, $a_{0:0}$ is an empty action sequence as the decision transformer is not conditioned on any past action.}


One important observation: the decision transformer is inherently \textit{off-policy} (the exact policy distribution varies with the sampled starting point, context length and return-to-go), which effectively guides our choice of RL gradients to off-policy algorithms (see Appendix~\ref{sec:moreRL} for more details).

\textbf{TD3.} Twin Delayed Deep Deterministic Policy Gradient (TD3)~\citep{fujimoto2018addressing} is a state-of-the-art online off-policy RL algorithm that learns a \textit{deterministic} policy $a=\mu^{\text{RL}}(s)$. 
It is an improved version of an actor-critic (DDPG~\citep{Lillicrap2015ContinuousCW}) with three adjustments to improve its stability: 1) \textit{Clipped double Q-learning}, which maintains two critics (estimators for expected return) $Q_{\phi_1}, Q_{\phi_2}:|S|\times|A|\rightarrow\mathbb{R}$ and uses the smaller of the two values (i.e., $\min\left(Q_{\phi_1}, Q_{\phi_2}\right)$) to form the target for TD-error minimization. Such design prevents overestimation of the $Q$-value; 2) \textit{Policy smoothing}, which adds noise when calculating the $Q$-value for the next action to effectively prevent overfitting; and 3) \textit{Delayed update}, which updates $\mu^{\text{RL}}$ less frequently than $Q_{\phi_1}, Q_{\phi_2}$ to benefit from a better $Q$-value landscape when updating the actor. TD3 also maintains a set of \textit{target networks} storing old parameters of the actor and critics that are soft-updated with slow exponential moving average updates from the current, active network. 
In this paper, we adapt this algorithm to fit the decision transformer architecture so that it can be used as an auxiliary objective in an online finetuning process.

\section{Method}
\label{sec:method}
\begin{figure}
    \centering
    \includegraphics[width=0.76\linewidth]{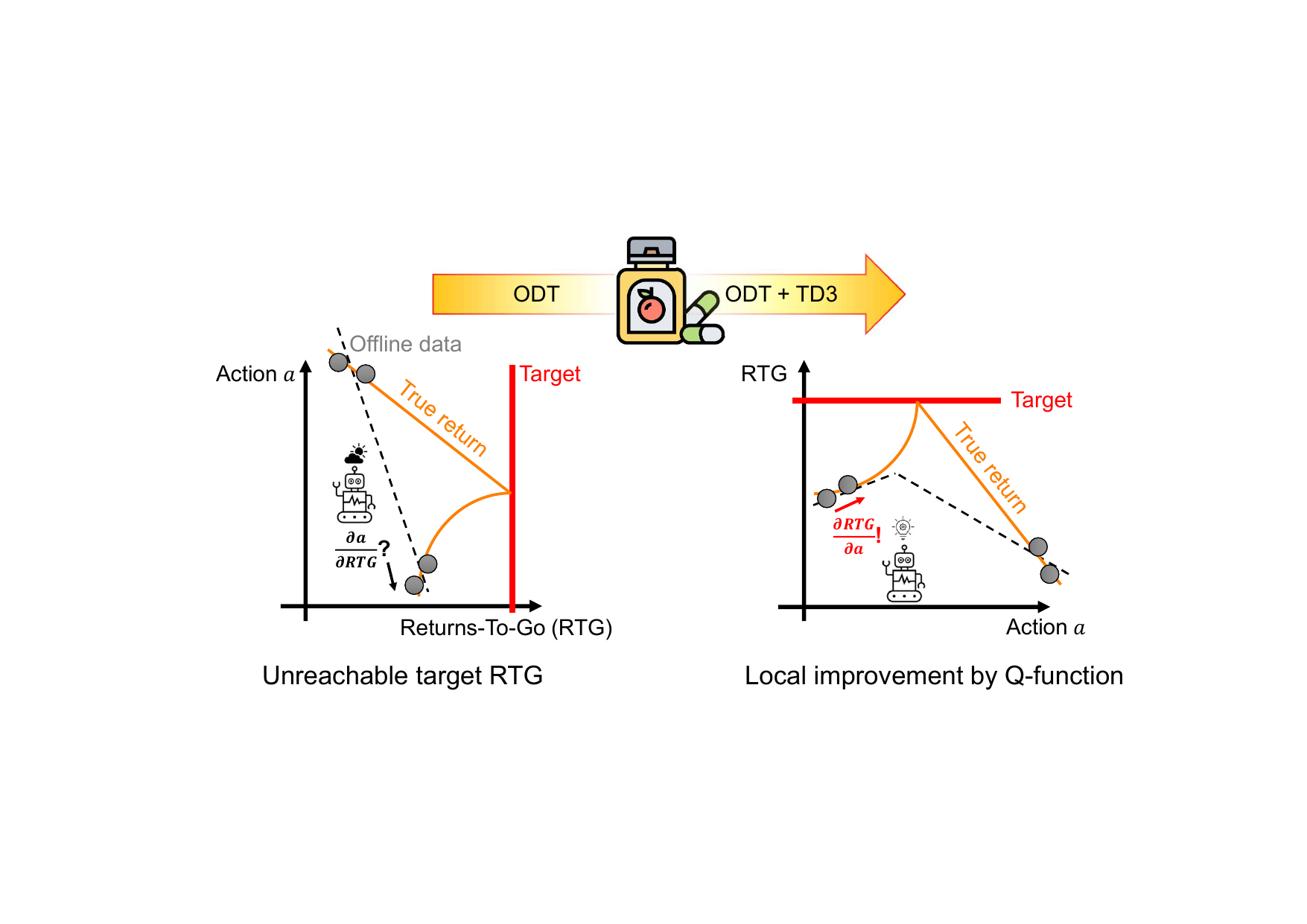}
    \caption{\color{black}An overview of our work, illustrating why ODT fails to improve with low-return offline data and RL gradients such as TD3 could help. The decision transformer yields gradient $\frac{\partial a}{\partial \text{RTG}}$, but local policy improvement requires the opposite, i.e., $\frac{\partial \text{RTG}}{\partial a}$. Therefore, the agent cannot recover if the current policy conditioning on high target RTG does not actually lead to high real RTG, which is very likely when the target RTG is too far from the pretrained policy and out-of-distribution. By adding a small coefficient for RL gradients, the agents can improve locally, which leads to better performance.} 
    \label{fig:teaser}
\end{figure}

This section is organized as follows: we will first provide intuition why RL gradients aid online finetuning of decision transformers (Sec.~\ref{sec:moti_new}), and present our method of adding TD3 gradients (Sec.~\ref{sec:RL_new}). To further justify our intuition, we provide a theoretical analysis on how ODT fails to improve during online finetuning when pre-trained with low-reward data (Sec.~\ref{sec:rtgfail_new}).

\subsection{Why RL Gradients?}
\label{sec:moti_new}

In order to understand why RL gradients aid online finetuning of decision transformers, let us consider an MDP which only has a single state $s_0$, one step, a one dimensional action $a\in [-1, 1]$ (i.e., a bandit with continuous action space) and a simple reward function $r(a)=(a+1)^2$ if $a\leq 0$ and $r(a)=1-2a$ otherwise, {\color{black} as illustrated in Fig.~\ref{fig:teaser}.} In this case, a trajectory can be represented effectively by a scalar, which is the action. If the offline dataset for pretraining is of low quality, i.e., all actions in the dataset are either close to $-1$ or $1$, then the decision transformer will obviously not generate trajectories with high RTG after offline training. As a consequence, during online finetuning, the new rollout trajectory is very likely to be uninformative  about how to reach $\text{RTG}_{\text{eval}}$, since it is too far from $\text{RTG}_{\text{eval}}$. 
Worse still, it cannot improve locally either, 
which requires 
$\frac{\partial \text{RTG}}{\partial a}$. However, the decision transformer
\textit{yields exactly the inverse}, i.e., $\frac{\partial a}{\partial \text{RTG}}$. Since the transformer is  not invertible (and even if the transformer is invertible, often the ground truth $\text{RTG}(a)$ itself is not), we cannot easily estimate the former from the latter. Thus, the hope for policy improvement relies heavily on the generalization of RTG, i.e., policy yielded by high $\text{RTG}_{\text{eval}}$ indeed leads to better policy without any data as evidence, which is not the case with our constructed MDP and dataset.

In contrast, applying traditional RL for continuous action spaces to this setting, we either learn a value function $Q(s_0, a):\mathbb{R}\rightarrow\mathbb{R}$, which effectively gives us a direction of action improvement $\frac{\partial Q(s_0, a)}{\partial a}$ (e.g., SAC~\citep{Haarnoja2018SoftAO}, DDPG~\citep{Lillicrap2015ContinuousCW}, TD3~\citep{fujimoto2018addressing}), or an advantage $A(s_0, a)$ that highlights whether focusing on action $a$ improves or worsens the policy (e.g., AWR~\citep{peng2019advantage}, AWAC~\citep{nair2020accelerating}, IQL~\citep{kostrikov2021offline}). Either way provides a direction which suggests how to change the action locally in order to improve the (estimated) return. In our experiment illustrated in Fig.~\ref{fig:improve} (see Appendix~\ref{sec:detail} for details), we found that RL algorithms like DDPG~\cite{Lillicrap2015ContinuousCW} can easily solve the aforementioned MDP while ODT fails.

Thus, adding RL gradients aids the decision transformer to improve from given low RTG trajectories. 
While one may argue that the self-supervised training paradigm of ODT~\cite{zheng2022online} can do the same by ``prompting'' the decision transformer to generate a high RTG trajectory, such paradigm is still unable to effectively improve the policy of the decision transformer pretrained on data with low RTGs. We provide a theoretical analysis for this in Sec.~\ref{sec:rtgfail_new}. {\color{black}In addition, we also explore the possibility of fixing this problem using other existing algorithms, such as JSRL~\citep{uchendu2023jump} and slowly growing RTG (i.e., curriculum learning). However, we found that those algorithms cannot address this problem well. See Appendix~\ref{sec:more_explore} for ablations.}


\begin{figure}[t]
    \centering
    \subfigure[Reward Curves]{
    \includegraphics[height=2.4cm]{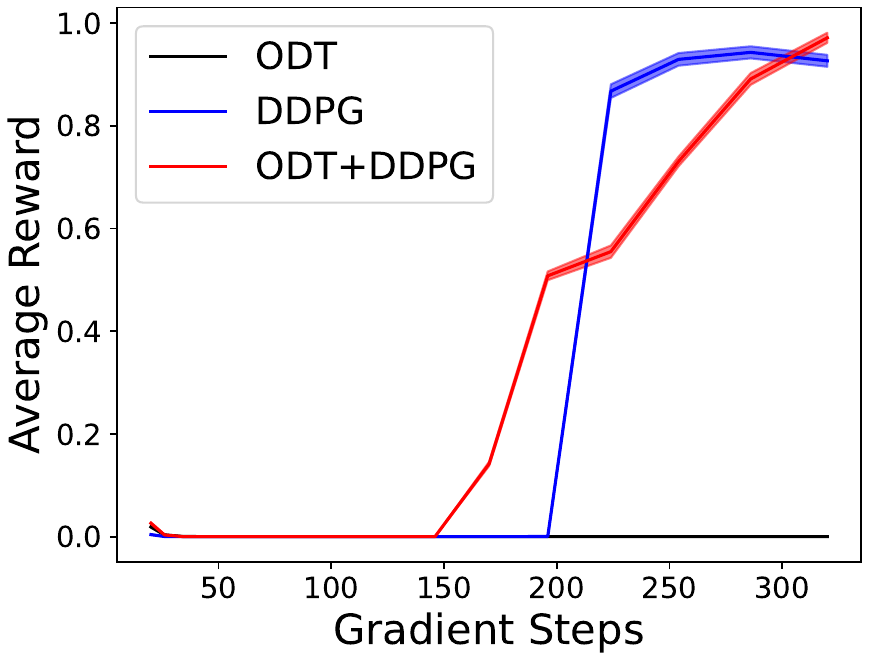}}
    \subfigure[Rollout Distribution]{
    \includegraphics[height=2.4cm]{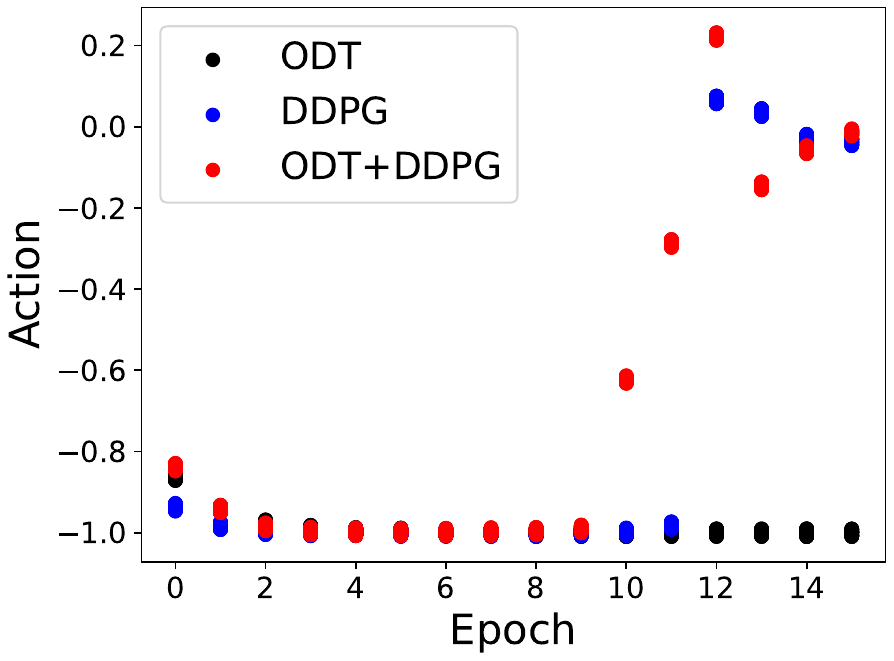}}
    \subfigure[Critic]{
    \includegraphics[height=2.4cm]{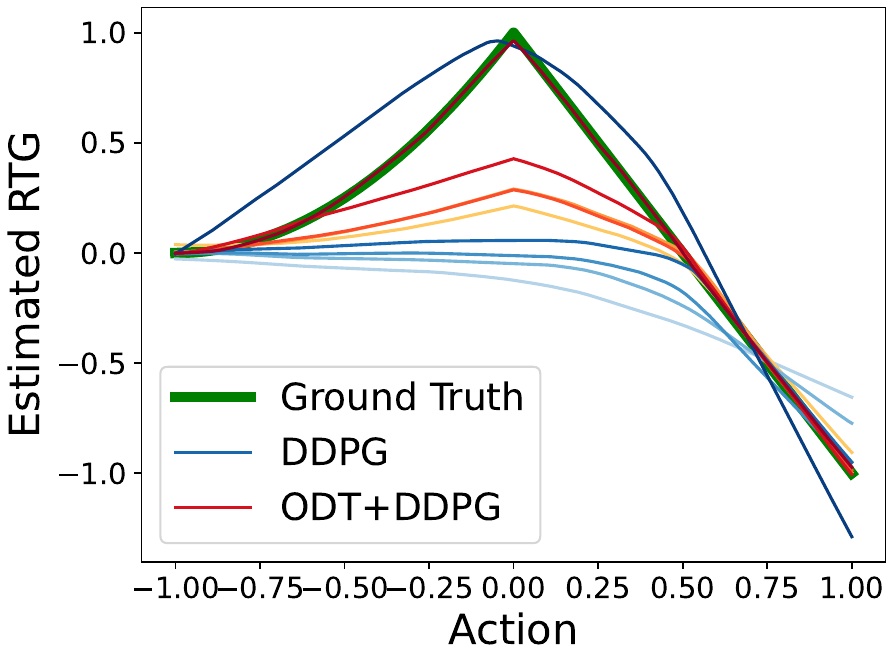}}
    \subfigure[DT Policy]{
     \includegraphics[height=2.4cm]{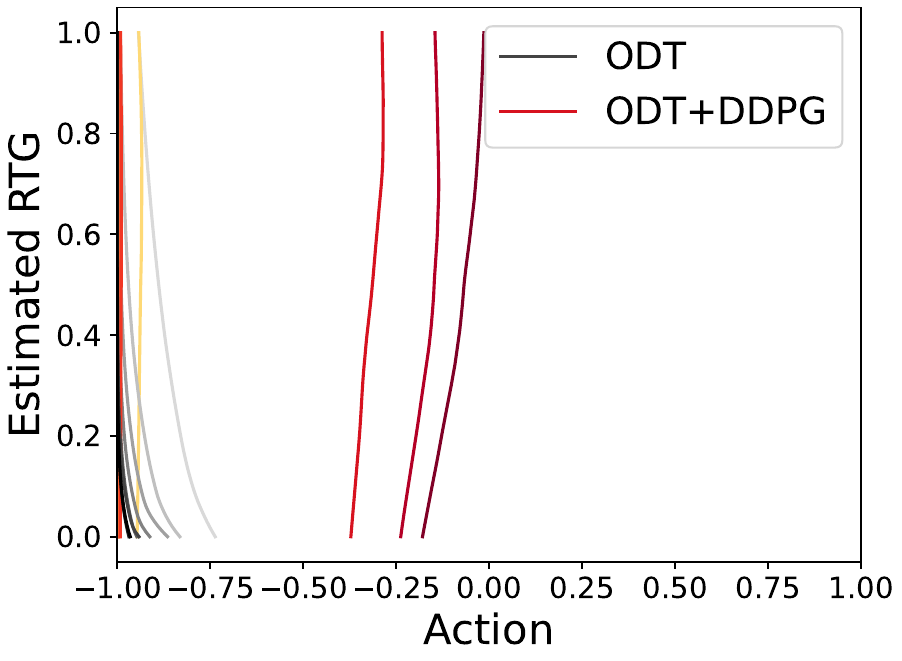}}
    \hspace{5pt}
    
    \caption{An illustration of a simple MDP, showing how RL can infer the direction for improvement, while online DT fails. Panels (a) and (b) show, DDPG and ODT+DDPG manage to maximize reward and find the correct optimal action quickly, while ODT fails to do so. Panel (c) shows how a DDPG/ODT+DDPG critic (from light blue/orange to dark blue/red) manages to fit ground truth reward (green curve). Panel (d) shows that the ODT policy (changing from light gray to dark) fails to discover the hidden reward peak near $0$ between two low-reward areas (near $-1$ and $1$ respectively) contained in the offline data. Meanwhile, ODT+DDPG succeeds in finding the reward peak.}
    \label{fig:improve}
\end{figure}

\subsection{Adding TD3 Gradients to ODT}
\label{sec:RL_new}

In this work, we mainly consider TD3~\cite{fujimoto2018addressing} as the RL gradient for online finetuning. There are two reasons for selecting TD3. First, TD3 is a more robust off-policy RL algorithm compared to other off-policy RL algorithms~\cite{Sharif2021EvaluatingTR}. Second, the success of TD3+BC~\cite{fujimoto2021minimalist} 
indicates that TD3 is a good candidate when combined with supervised learning. A more detailed discussion and empirical comparison to  other RL algorithms can be found in Appendix~\ref{sec:moreRL}.

\textbf{Generally, we simply add a weighted standard TD3 actor loss to the decision transformer objective.} To do this, we follow classic TD3 and additionally train two critic networks 
$Q_{\phi_1}, Q_{\phi_2}:S\times A\rightarrow\mathbb{R}$ parameterized by $\phi_1, \phi_2$ respectively. In the offline pretraining stage, we use the following objective for the actor:

\begin{equation}
\label{eq:actor}
\begin{aligned}
\min_{\mu^{\text{DT}}} \mathbb{E}_{\tau\sim D} \Bigg[\frac{1}{T_{\text{train}}}\sum_{t=1}^{T_{\text{train}}}\left[{\color{black}-}\alpha Q_{\phi_1}(s_t,{\color{black}\mu^{\text{DT}}(s_{0:t},a_{0:t-1},\text{RTG}_{0:t},\text{RTG}=\text{RTG}_{\text{real}}, T=t)}) +\right.\\
\left.\|\mu^{\text{DT}}(s_{0:t},a_{0:t-1},\text{RTG}_{0:t},\text{RTG}=\text{RTG}_{\text{real}}, T=t)-a_t\|_2^2\right]\Bigg].
\end{aligned}
\end{equation}
Here, $\alpha\in\{0, 0.1\}$ is a hyperparameter, and the loss sums over the trajectory segment. For critics $Q_{\phi_1}, Q_{\phi_2}$, we use the standard TD3 critic loss
\begin{equation}
\label{eq:critic}
\begin{aligned}
    \min_{\phi_1,\phi_2}&\ \mathbb{E}_{\tau\sim D}\sum_{t=1}^{T_{\text{train}}}\left[\left(Q_{\phi_1}(s_t,a_t)-Q_{\text{min},t}\right)^2+\left(Q_{\phi_2}(s_t,a_t)-Q_{\text{min},t}\right)^2\right], \quad\text{with}\\
    Q_{\text{min},t}&=r_t+\gamma(1-d_t)\min_{i\in\{1, 2\}}Q_{\phi_{i,\text{tar}}}\left(s_t,\text{clip}\left(\mu^{\text{RL}}_{\text{tar}}(z_t)+\text{clip}(\epsilon,-c,c),a_{\text{low}},a_{\text{high}}\right)\right),
\end{aligned}
\end{equation}
where $\tau=\{s_{0:T_{\text{train}}+1},a_{0:T_{\text{train}}}, \text{RTG}_{0:T_{\text{train}}+1}, d_{0:T_{\text{train}}}, r_{0:T_{\text{train}}}$, $\text{RTG}=\text{RTG}_{\text{real}}\}$ is the trajectory segment sampled from buffer $D$ that stores the offline dataset. Further, $d_t$ indicates whether the trajectory ends on the $t$-th step (true is $1$, false is $0$), $Q_{\text{min}}$ is the target to fit, $Q_{\phi_{i,\text{tar}}}$ is produced by the target network (stored old parameter), $z_t$ is the context for ``next state'' at step $t$. $\mu^{\text{RL}}_{\text{tar}}$ is the target network for the actor (i.e., decision transformer). For an $n$-dimensional action, $\text{clip}(a, x, y), a\in\mathbb{R}^n, y\in\mathbb{R}^n, z\in\mathbb{R}^n$ means clip $a_i$ to $[y_i,z_i]$ for $i\in\{1,2,\dots,n\}$. $a_{\text{low}}\in\mathbb{R}^n$ and $a_{\text{high}}\in\mathbb{R}^n$ are the lower and upper bound for every dimension respectively.

To demonstrate the impact on aiding the exploration of a decision transformer, in this work we choose the simplest form of a critic, which is reflective, i.e., only depends on the current state. This essentially makes the $Q$-value an \textit{average} of different context lengths sampled from a near-uniform distribution (see Appendix~\ref{sec:whyavg} for the detailed reason and distribution for this).
The choice is based on the fact that training a transformer-based value function estimator is quite hard{~\cite{parisotto2020stabilizing} \color{black} due to increased input complexity (i.e., noise from the environment) which leads to reduced stability and slower convergence.} In fact, to avoid this difficulty, many recent works on  Large Language Models (LLMs)~\citep{chen2023semi} and vision models~\citep{pinto2023tuning} which finetune with RL adopt a policy-based algorithm instead of an actor-critic, despite a generally lower variance of the latter. In our experiments, we also found such a critic to be much more stable than a recurrent critic network (see Appendix~\ref{sec:ablation} for ablations).



During online finetuning, we again use Eq.~\eqref{eq:actor} and Eq.~\eqref{eq:critic}, but always use $\alpha=0.1$ for Eq.~\eqref{eq:actor}.

While the training paradigm resembles that of TD3+BC, our proposed method improves upon TD3+BC in the following two ways: 1) \textbf{Architecture}. While TD3+BC uses MLP networks for single steps, we leverage a  decision transformer, which is more expressive and can take more context into account when making decisions. 2) \textbf{Selected instead of indiscriminated behavior cloning.} Behavior cloning mimics all data collected without regard to their reward, while the supervised learning process of a decision transformer prioritizes trajectories with higher return {\color{black}by conditioning action generation on higher RTG. See Appendix~\ref{sec:arch_ablation} for an ablation.}







\subsection{Why Does ODT Fail to Improve the Policy?}
\label{sec:rtgfail_new}
As mentioned in Sec.~\ref{sec:moti_new}, 
it is the goal of ODT to ``prompt'' a policy with a high RTG, i.e., to improve a policy by conditioning on a high RTG during online rollout. 
However, beyond the intuition provided in Sec.~\ref{sec:moti_new}, in this section, we will analyze more formally why such a paradigm is unable to improve the policy given offline data filled with low-RTG trajectories.

Our analysis is based on the performance bound proved by Brandfonbrener et al.~\cite{brandfonbrener2022does}. Given a dataset drawn from an underlying policy $\beta$  and given its RTG distribution $P_\beta$ (either continuous or discrete), under  assumptions (see Appendix~\ref{sec:proof}), we have the following \textit{tight} performance bound for a decision transformer with policy 
$\pi^{\text{DT}}(a|s, \text{RTG}_{\text{eval}})$ conditioned on $\text{RTG}_\text{eval}$:

\begin{equation}
\label{eq:main}
\text{RTG}_{\text{eval}}-\mathbb{E}_{\tau=(s_1,a_1,\dots,s_H, a_H)\sim \pi^{\text{DT}}(a|s,\text{RTG}_{\text{eval}})}[\text{RTG}_{\text{real}}]\leq\epsilon \left(\frac{1}{\alpha_f}+2\right)H^2. 
\end{equation}

Here, $\alpha_f=\inf_{s_1}P_\beta(\text{RTG}_{\text{real}}=\text{RTG}_{\text{eval}}|s_1)$ for every initial state $s_1$, $\epsilon>0$ is a constant, $H$ is the horizon of the MDP.\footnote{Eq.~\eqref{eq:main} is very informal. See Appendix~\ref{sec:proof} for a more rigorous description.} Based on this tight performance bound, we will show that \textbf{with high probability, $\frac{1}{\alpha_f}$ grows \textit{superlinearly} with respect to $\text{RTG}_{\text{eval}}$}. If true, then the $\text{RTG}_{\text{real}}$ term  (i.e., the actual return from online rollouts) must decrease to fit into the tight bound, as $\text{RTG}_{\text{eval}}$ grows. 

To show this, we take a two-step approach: First, we prove that the probability mass of the RTG distribution is concentrated around low RTGs, i.e., \textit{event probability} $\Pr_{\beta}\left(\text{RTG}-\mathbb{E}_\beta(\text{RTG}|s)\geq c|s\right)$ for $c>0$ decreases superlinearly with respect to $c$. For this, we apply the Chebyshev inequality, which yields a bound of $O\left(\frac{1}{c^2}\right)$. However, without knowledge on $P_\beta(\text{RTG}|s)$, the variance can be made arbitrarily large by high RTG outliers, hence making the bound meaningless. 


Fortunately, we have knowledge about the RTG distribution $P_\beta(\text{RTG}|s)$ from the collected data. If we refer to the maximum RTG in the dataset via $\text{RTG}_{\beta\text{max}}$ and if we assume all rewards are non-negative, then 
all trajectory samples have an RTG in $[0, \text{RTG}_{\beta\text{max}}]$. Thus, with adequate prior distribution, we can state that with high probability $1-\delta$, the probability mass is concentrated in the low RTG area. Based on this, we can prove the following lemma:

\begin{lemma}
\label{lem:concentrate}
(Informal) Assume  rewards $r(s,a)$ are bounded in $[0, R_{\text{max}}]$,\footnote{Note we use ``max'' instead of ``$\beta$max'' as this is a property of the environment and not the dataset.} and $\text{RTG}_{\text{eval}}\geq \text{RTG}_{\beta\text{max}}$. Then with probability  at least $1-\delta$, we have the probability of event $\Pr_\beta$ bounded as follows:
\begin{equation}
    \textstyle\Pr_{\beta}\left(\text{RTG}_{\text{eval}}- V^\beta(s)\geq c|s\right)\leq 
    O\left(\frac{R_{\text{max}}^2T^2}{c^2}\right),
\end{equation}
where $\delta$ depends on the number of trajectories in the dataset and prior distribution (see Appendix~\ref{sec:proof} for a concrete example and a more accurate bound). $V^\beta(s)$ is the value function of the underlying policy $\beta(a|s)$ that generates the dataset, for which we have $V^\beta(s)=\mathbb{E}_\beta(RTG|s)$.
\end{lemma}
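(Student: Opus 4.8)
The plan is to reduce the statement to an upper bound on the conditional variance of the return via Chebyshev's inequality, and then to obtain that variance bound from the fact that the observed returns are confined to $[0,\text{RTG}_{\beta\text{max}}]$. Reading the left‑hand side as the probability, over a return $\text{RTG}\sim P_\beta(\cdot|s)$, of the event $\text{RTG}-V^\beta(s)\ge c$ (equivalently, with $c:=\text{RTG}_{\text{eval}}-V^\beta(s)\ge 0$, the event that the realized return meets the target $\text{RTG}_{\text{eval}}$, which is the quantity feeding into $\alpha_f$ in Eq.~\eqref{eq:main}), the first step is Chebyshev's inequality: since $V^\beta(s)=\mathbb{E}_\beta[\text{RTG}|s]$, $\Pr_\beta(\text{RTG}-V^\beta(s)\ge c|s)\le \mathrm{Var}_\beta(\text{RTG}|s)/c^2$. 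This replaces the problem by that of bounding $\mathrm{Var}_\beta(\text{RTG}|s)$.

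For the variance I would argue in two stages. The crude stage uses only the reward bound: a trajectory accumulates at most $T$ rewards, each in $[0,R_{\text{max}}]$, so $\text{RTG}$ is supported in $[0,R_{\text{max}}T]$ and Popoviciu's inequality on variances gives $\mathrm{Var}_\beta(\text{RTG}|s)\le \tfrac14 R_{\text{max}}^2 T^2$ deterministically; substituting into the Chebyshev bound already yields the advertised $O(R_{\text{max}}^2 T^2/c^2)$, with the ``probability at least $1-\delta$'' event being all of the sample space. This is the form I would state as the informal Lemma~\ref{lem:concentrate}, and, combined with $c=\text{RTG}_{\text{eval}}-V^\beta(s)$ and $\alpha_f\le \Pr_\beta(\text{RTG}\ge\text{RTG}_{\text{eval}}|s)$, it already makes $1/\alpha_f$ grow at least quadratically in $\text{RTG}_{\text{eval}}$. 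The refined stage, yielding the sharper constant promised for the appendix, replaces the worst‑case range $[0,R_{\text{max}}T]$ by the empirical range $[0,\text{RTG}_{\beta\text{max}}]$, which is typically far narrower. Here I would place a prior on the unknown conditional return law $P_\beta(\cdot|s)$ --- e.g.\ a Dirichlet‑type prior on a discretization of the return axis, or directly a prior on the tail mass $q:=P_\beta(\text{RTG}>\text{RTG}_{\beta\text{max}}|s)$ --- and condition on the observed event that none of the $n$ dataset trajectories (from, or near, $s$) exceeds $\text{RTG}_{\beta\text{max}}$. A standard posterior‑concentration estimate then shows that, with posterior probability at least $1-\delta$, $q=O(\log(1/\delta)/n)$. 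Splitting the second moment as $\mathbb{E}_\beta[\text{RTG}^2|s]\le \text{RTG}_{\beta\text{max}}^2+q\,R_{\text{max}}^2 T^2$ and using $\mathrm{Var}\le\mathbb{E}[\text{RTG}^2]$ gives, on that event, $\mathrm{Var}_\beta(\text{RTG}|s)\le \text{RTG}_{\beta\text{max}}^2+O\!\big(\tfrac{\log(1/\delta)}{n}\big)R_{\text{max}}^2 T^2$, and hence the refined bound; a union bound over the relevant initial states inflates $\delta$, so that $\delta$ ends up depending on $n$ and on the prior, as the statement says.

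The routine parts are the Chebyshev step and the Popoviciu bound; the delicate part is the posterior‑concentration argument of the refined stage. The real difficulty is choosing a prior that is simultaneously \emph{adequate} --- so that $n$ confined samples genuinely drive the posterior tail mass $q$ down at rate $O(1/n)$ up to logarithmic factors --- and not circular, i.e.\ not one that simply assumes the support of $P_\beta(\cdot|s)$ to be contained in $[0,\text{RTG}_{\beta\text{max}}]$ from the outset. A secondary complication is the conditioning on a single initial state $s$: a dataset may contain only a handful of trajectories from any particular $s$, so one either needs a per‑state coverage assumption or must let the prior smooth across states. If one prefers a purely frequentist statement, the analogous obstacle is an empirical‑Bernstein‑type bound on $q$ (and on $\mathrm{Var}_\beta(\text{RTG}|s)$) that remains non‑vacuous at small per‑state sample sizes --- which is exactly why the clean bound is asserted only with probability $1-\delta$ rather than deterministically.
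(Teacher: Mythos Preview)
Your approach is essentially the paper's: Chebyshev on the conditional variance, then a second-moment split into the mass below and above $\text{RTG}_{\beta\text{max}}$, with a Bayesian posterior argument to control the unobserved tail mass $q=P_\beta(\text{RTG}>\text{RTG}_{\beta\text{max}})$. The paper simply commits to the conjugate choice you leave open: a $\mathrm{Beta}(1,1)$ prior on the binomial parameter $q$, so that after $n$ trajectories all landing in $[0,\text{RTG}_{\beta\text{max}}]$ the posterior is $\mathrm{Beta}(n+1,1)$ and $\delta=1-\mathrm{CDF}_{\mathrm{Beta}(n+1,1)}(\epsilon)$; the resulting explicit bound is $\big((1-\epsilon)\text{RTG}_{\beta\text{max}}^2+\epsilon R_{\text{max}}^2T^2-[V^\beta(s)]^2\big)/c^2$. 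Your ``crude stage'' via Popoviciu is a clean observation the paper does not make: it shows the informal $O(R_{\text{max}}^2T^2/c^2)$ form holds deterministically, so the $1-\delta$ qualifier is only needed for the sharper constant. Conversely, the per-state coverage and union-bound issues you flag are real but are not addressed in the paper's argument either; the paper effectively treats the prior as global rather than conditioned on $s$.
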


The second step uses the bound of probability mass $\Pr_{\beta}(\text{RTG}\geq c|s)$ to derive the bound for $\alpha_f$. For the discrete case where the possibly obtained RTGs are finite or countably infinite (note, state and action space can still be continuous), this is simple, as we have 
\begin{equation}
P_\beta\left(\text{RTG}=V^\beta(s)+c|s\right)=\textstyle\Pr_\beta\left(\text{RTG}=V^\beta(s)+c|s\right)\leq \textstyle\Pr_\beta\left(\text{RTG}\geq V^\beta(s)+c|s\right).
\end{equation}
Thus $\alpha_f=\inf_{s_1}P_\beta(\text{RTG}|s_1)$ can be conveniently bounded by Lemma~\ref{lem:concentrate}. For the continuous case, the proof is more involved as probability density $P_\beta(\text{RTG}|s)$ can be very high on an extremely short interval of RTG, making the total probability mass arbitrarily small. However, assuming that   $P_\beta(\text{RTG}|s)$ is Lipschitz when $\text{RTG}\geq \text{RTG}_{\beta\text{max}}$ (i.e., RTG area not covered by dataset), combined with the discrete distribution case, we can still get the following  (see Appendix~\ref{sec:proof} for proof):    

\begin{corollary}
(Informal) If the RTG distribution is discrete (i.e., number of possible different RTGs are at most countably infinite), then with probability  at least- $1-\delta$, $\frac{1}{\alpha_f}$ grows on the order of $\Omega(\text{RTG}_{\text{eval}}^2)$ with respect to $\text{RTG}_{\text{eval}}$. For continuous RTG distributions satisfying a Lipschitz continuous RTG density $p_\beta$, 
$\frac{1}{\alpha_f}$ grows on the order of $\Omega(\text{RTG}_{\text{eval}}^{1.5})$.
\end{corollary}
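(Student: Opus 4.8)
The plan is to feed Lemma~\ref{lem:concentrate} into the definition $\alpha_f=\inf_{s_1}P_\beta(\text{RTG}_{\text{real}}=\text{RTG}_{\text{eval}}\mid s_1)$, handling the discrete and continuous cases separately. Throughout I condition on the high-probability ($1-\delta$) event of Lemma~\ref{lem:concentrate} and fix one reachable initial state $s_1$; since $\alpha_f$ is an \emph{infimum} over initial states, it suffices to show that the conditional RTG mass (or density) at $\text{RTG}_{\text{eval}}$ is small for a single convenient $s_1$ (if the initial-state set is not a singleton, a union bound over it only degrades $\delta$). I also use that $V^\beta(s_1)=\mathbb{E}_\beta[\text{RTG}\mid s_1]\le \text{RTG}_{\beta\text{max}}$, valid because every dataset trajectory has $\text{RTG}\in[0,\text{RTG}_{\beta\text{max}}]$; hence the gap $\Delta:=\text{RTG}_{\text{eval}}-V^\beta(s_1)$ satisfies $\Delta\ge \text{RTG}_{\text{eval}}-\text{RTG}_{\beta\text{max}}$, which is $\Theta(\text{RTG}_{\text{eval}})$ once $\text{RTG}_{\text{eval}}$ is bounded away from $\text{RTG}_{\beta\text{max}}$ (e.g.\ $\text{RTG}_{\text{eval}}\ge 2\,\text{RTG}_{\beta\text{max}}$, the regime the asymptotic statement concerns).

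For the \textbf{discrete case} this is immediate: $\alpha_f\le P_\beta(\text{RTG}=\text{RTG}_{\text{eval}}\mid s_1)\le \Pr_\beta(\text{RTG}\ge \text{RTG}_{\text{eval}}\mid s_1)=\Pr_\beta\!\big(\text{RTG}-V^\beta(s_1)\ge \Delta\mid s_1\big)$, and Lemma~\ref{lem:concentrate} with $c=\Delta=\Theta(\text{RTG}_{\text{eval}})$ bounds this by $O\!\big(R_{\text{max}}^2T^2/\text{RTG}_{\text{eval}}^2\big)$, giving $1/\alpha_f=\Omega(\text{RTG}_{\text{eval}}^2)$. For the \textbf{continuous case}, $\alpha_f=\inf_{s_1}p_\beta(\text{RTG}_{\text{eval}}\mid s_1)$ is a density and cannot be read off from a single tail probability, so I would convert the tail-mass bound of Lemma~\ref{lem:concentrate} into a pointwise density bound using the assumed $L$-Lipschitzness of $p_\beta(\cdot\mid s_1)$ on $\{\text{RTG}\ge \text{RTG}_{\beta\text{max}}\}$: if the density at $\text{RTG}_{\text{eval}}$ equals $v$, then by Lipschitzness it is at least $v-Lw$ on an interval of width $w$ ending at $\text{RTG}_{\text{eval}}$, so the mass there is at least $w(v-Lw)$; but that interval lies in the tail above $V^\beta(s_1)+(\Delta-w)$, whose mass Lemma~\ref{lem:concentrate} caps at $O\!\big(R_{\text{max}}^2T^2/(\Delta-w)^2\big)$. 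Choosing the intermediate width $w$ to balance the Lipschitz slack against the mass bound --- and invoking the discrete bound for the complementary sub-case where $v$ is already small, and clamping $w$ to the boundary $\text{RTG}_{\beta\text{max}}$ when $v$ is so large that $v/L$ would overshoot the Lipschitz region --- yields the stated slower rate $1/\alpha_f=\Omega(\text{RTG}_{\text{eval}}^{1.5})$; the precise choice of $w$ is exactly where the exponent $1.5$, rather than $2$, emerges, and I would defer those constants to Appendix~\ref{sec:proof}.

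The routine parts are the two invocations of Lemma~\ref{lem:concentrate} and the one-variable optimization over the interval width $w$. \textbf{The main obstacle is the continuous case}, for two reasons: first, converting a tail-\emph{mass} bound into a \emph{density} bound necessarily loses, and pinning down exactly how much it loses (the drop from exponent $2$ to $1.5$) requires care with the optimization rather than a throwaway estimate; second, the density could legitimately be very large just above $\text{RTG}_{\beta\text{max}}$ while carrying almost no mass, so a naive Lipschitz argument breaks near that edge --- the hypothesis ``$p_\beta$ Lipschitz for $\text{RTG}\ge \text{RTG}_{\beta\text{max}}$'' is precisely what rescues it, and getting the case split around that boundary right is the delicate step. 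A minor additional point is ensuring that the ``$\inf_{s_1}$'' interacts cleanly with the high-probability event, which I would handle by a union bound over the initial states (or simply by assuming a single fixed initial state).
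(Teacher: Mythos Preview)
Your discrete case is exactly the paper's argument: point mass at $\text{RTG}_{\text{eval}}$ is dominated by the tail mass, and Lemma~\ref{lem:concentrate} (Chebyshev) gives the $O(\text{RTG}_{\text{eval}}^{-2})$ bound on $\alpha_f$.

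The continuous case, however, has a genuine gap. Your plan is to combine Lipschitzness of $p_\beta$ with the $O(1/c^2)$ tail bound from Lemma~\ref{lem:concentrate} and then optimize over the interval width $w$. But this combination cannot produce the exponent $1.5$. The Lipschitz triangle argument says that a density value $v$ at $\text{RTG}_{\text{eval}}$ forces at least $v^2/(2L)$ of tail mass to the right (or left) of $\text{RTG}_{\text{eval}}$; pairing this with a second-moment tail bound $\Pr_\beta(\text{RTG}\ge \text{RTG}_{\text{eval}})\le C/\Delta^{2}$ yields $v^2/(2L)\le C/\Delta^{2}$, i.e.\ $v=O(\Delta^{-1})$. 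Optimizing over $w$ does not help: writing your constraint as $v\le Lw + C/\bigl(w(\Delta-w)^2\bigr)$ and minimizing over $w$ still gives $v=O(\Delta^{-1})$. So with Chebyshev you only get $1/\alpha_f=\Omega(\text{RTG}_{\text{eval}})$, not $\Omega(\text{RTG}_{\text{eval}}^{1.5})$; your sentence ``the precise choice of $w$ is exactly where the exponent $1.5$ emerges'' is where the argument breaks.

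The missing idea (and what the paper does) is to \emph{replace} the second-moment Chebyshev bound by a third-moment Markov-type inequality $\Pr_\beta\bigl(\text{RTG}-V^\beta(s)\ge c\bigr)\le \mathbb{E}\bigl[(\text{RTG}-V^\beta(s))^3\bigr]/c^{3}$ in the continuous case. The third absolute moment is still controlled because rewards are bounded in $[0,R_{\max}]$ over a finite horizon. Then the same Lipschitz triangle gives $v^2/(2K_V)\le O(1/\Delta^{3})$, hence $v=O(\Delta^{-1.5})$ and $1/\alpha_f=\Omega(\text{RTG}_{\text{eval}}^{1.5})$. In short: the exponent $1.5$ comes from upgrading the moment in the tail bound, not from tuning the Lipschitz window.
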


Here, $\Omega(\cdot)$ refers to the big-Omega notation (asymptotic lower bound).

\section{Experiments}
\label{sec:exp}

In this section, we aim to address the following questions: \textbf{a)} Does our proposed solution for decision transformers indeed improve its ability to cope with low-reward pretraining data. 
\textbf{b)} Is improving what to predict, while still using supervised learning, the correct way to improve the finetuning ability of decision transformers? 
\textbf{c)} Does the transformer architecture, combined with RL gradients, work better than TD3+BC? 
\textbf{d)} Is it better to combine the use of RL and supervised learning, or better to simply abandon the supervised loss in online finetuning? 
\textbf{e)} How does online decision transformer with TD3 gradient perform compared to other offline RL algorithms?
{\color{black}
\textbf{f)} How much does TD3 improve over DDPG which was used in Fig.~\ref{fig:improve}? 
}

\textbf{Baselines.} In this section, we mainly compare to {\color{black}six} baselines: the widely recognized state-of-the-art DT for online finetuning, Online Decision Transformer (\textbf{ODT})~\citep{zheng2022online}; 
\textbf{PDT}, a baseline improving over ODT by predicting future trajectory information instead of return-to-go; 
\textbf{TD3+BC}~\citep{fujimoto2021minimalist}, a MLP offline RL baseline; 
\textbf{TD3}, an ablated version of our proposed solution where we use TD3 gradients only for decision transformer finetuning (but only use supervised learning of the actor for offline pretraining); 
\textbf{IQL}~\citep{kostrikov2021offline}, one of the most popular offline RL algorithms that can be used for online finetuning;
{\color{black} \textbf{DDPG~\citep{Lillicrap2015ContinuousCW}+ODT}, which is the same as our approach but with DDPG instead of TD3 gradients (for ablations  using SAC~\citep{Haarnoja2018SoftAO}, IQL~\citep{kostrikov2021offline}, PPO~\citep{Schulman2017ProximalPO}, AWAC~\citep{nair2020accelerating} and AWR~\citep{peng2019advantage}, see Appendix~\ref{sec:moreRL}).
}
Each of the baselines corresponds to one of the questions a), b), c), d), {\color{black} e) and f) }above.

\textbf{Metrics.} We use the normalized average reward (same as D4RL's standard~\citep{fu2020d4rl}) as the metric, where higher reward indicates better performance. If the final performance is similar, the algorithm with fewer online examples collected to reach that level of performance is better. We report the reward curve, which shows the change of the normalized reward's mean and standard deviation {\color{black} with $5$ different seeds}, with respect to the number of online examples collected. The maximum number of steps collected is capped at 500K (for mujoco) or 1M (for other environments). {\color{black}We also report evaluation results using the  rliable~\citep{agarwal2021deep} library in Fig.~\ref{fig:ablation4} of Appendix~\ref{sec:reward_curve}.}

\textbf{Experimental Setup.} We use the same architecture and hyperparameters such as learning rate (see Appendix~\ref{sec:hyperparameterdetail} for details) as ODT~\citep{zheng2022online}. The architecture is a transformer with $4$ layers and $4$ heads in each layer. This translates to around $13$M parameters in total. For the critic, we use Multi-Layer Perceptrons (MLPs) with width $256$ and two hidden layers and ReLU~\cite{relu}  activation function. Specially, for the random dataset, we collect trajectories until the total number of steps exceeds $1000$ in every epoch, which differs from ODT, where only $1$ trajectory per epoch is collected. This is because many random environments, such as hopper, have very short episodes when the agent does not perform well, which could lead to overfitting if only a single trajectory is collected per epoch. For fairness, we use this modified rollout for ODT in our experiments as well. Not doing so does not affect ODT results since it does generally not work well on random datasets, but will significantly increase the time to reach a certain number of online transitions. After rollout, we train the actor for $300$ gradient steps and the critic for $600$ steps following TD3's delayed update trick.


\subsection{Adroit Environments}

\textbf{Environment and Dataset Setup.} We test on four difficult robotic manipulation tasks~\cite{DAPG}, which are the Pen, Hammer, Door and Relocate environment. For each environment, we test three different datasets: expert, cloned and human, which are generated by a finetuned RL policy, an imitation learning policy and human demonstration respectively. See Appendix~\ref{sec:envdetail} for details.

\textbf{Results.} Fig.~\ref{fig:adroit-curve} shows the performance of each method on Adroit before and after online finetuning. TD3+BC fails on almost all tasks and often diverges with extremely large $Q$-value during online finetuning. ODT and PDT perform better but still fall short of the proposed method, TD3+ODT. Note, IQL, TD3 and TD3+ODT all perform decently well (with similar average reward as shown in Tab.~\ref{tab:adroit_main} in Appendix~\ref{sec:reward_curve}). However, we found that TD3 often fails during online finetuning, probably because  the environments are complicated and TD3 struggles to recover from a poor policy generated during online exploration (i.e., it has a \textit{catastrophic forgetting} issue). {\color{black}To see whether there is a simple fix, in Appendix~\ref{sec:regularizer}, we ablate whether an action regularizer pushing towards a pretrain policy similar to TD3+BC helps, but find it to hinder performance increase in other environments. 
} 
IQL is overall much more stable than TD3, but improves much less during online finetuning than TD3+ODT. {\color{black}ODT can achieve good performance when pretrained on expert data, but struggles with   datasets of lower quality, which validates our motivation. DDPG+ODT starts out well in the online finetuning stage but fails quickly, probably because DDPG is less stable compared to TD3.} 

\subsection{Antmaze Environments}

\textbf{Environment and Dataset Setup.} We further test on a harder version of the Maze2D environment in D4RL~\cite{fu2020d4rl} where the pointmass is substituted by a robotic ant. We study  six different variants, which are umaze, umaze-diverse, medium-play, medium-diverse, large-play and large-diverse. 

\textbf{Results.} Fig.~\ref{fig:antmaze-curve} lists the results of each method on umaze and medium maze before and after online finetuning (see Appendix~\ref{sec:moreRL} for reward curves and Appendix~\ref{sec:reward_curve} for results summary on large antmaze). TD3+ODT works the best on umaze and medium maze, and significantly outperforms TD3. This shows that RL gradients alone are not enough for offline-to-online RL of the decision transformer. Though TD3+ODT does not work on large maze, we found that IQL+ODT works decently well. However, we choose TD3+ODT in this work because IQL+ODT does not work well on the random datasets. This is probably because IQL aims to address the Out-Of-Distribution (OOD) estimation problem~\cite{kostrikov2021offline}, which makes it better at utilizing offline data but worse at online exploration. 
See Appendix~\ref{sec:moreRL} for a detailed discussion and results. {\color{black}DDPG+ODT works worse than TD3+ODT but much better than baselines except IQL.}

\begin{figure}
    \centering
    \includegraphics[width=0.97\linewidth]{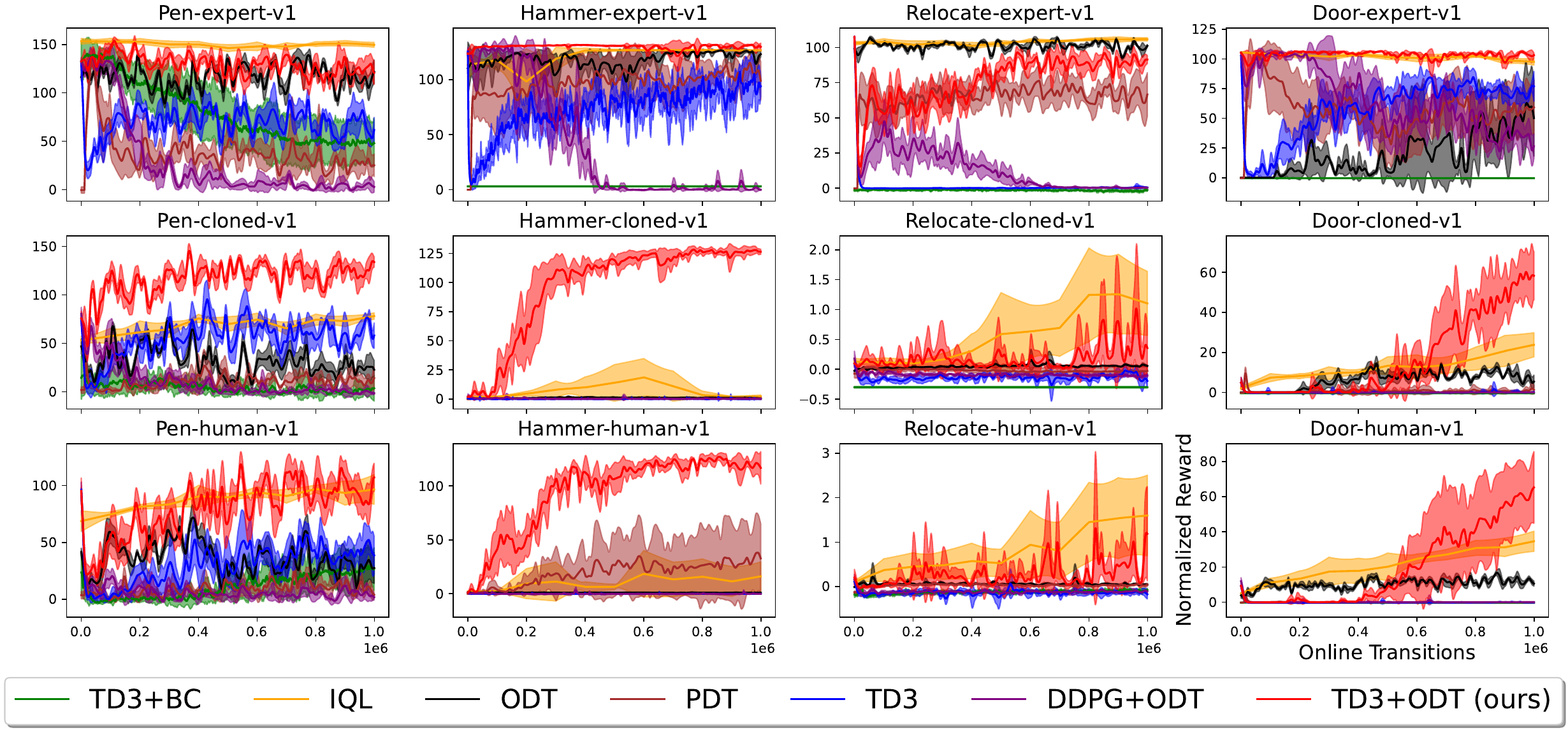}
    \caption{Results on Adroit~\cite{DAPG} environments. The proposed method, TD3+ODT, improves upon baselines. Note that TD3, IQL, and TD3+ODT all perform decently  at the beginning of online finetuning, but TD3 fails while TD3+ODT improves much more than IQL during online finetuning.}
    \label{fig:adroit-curve}
\end{figure}

\begin{figure}
    \centering
    \includegraphics[width=\linewidth]{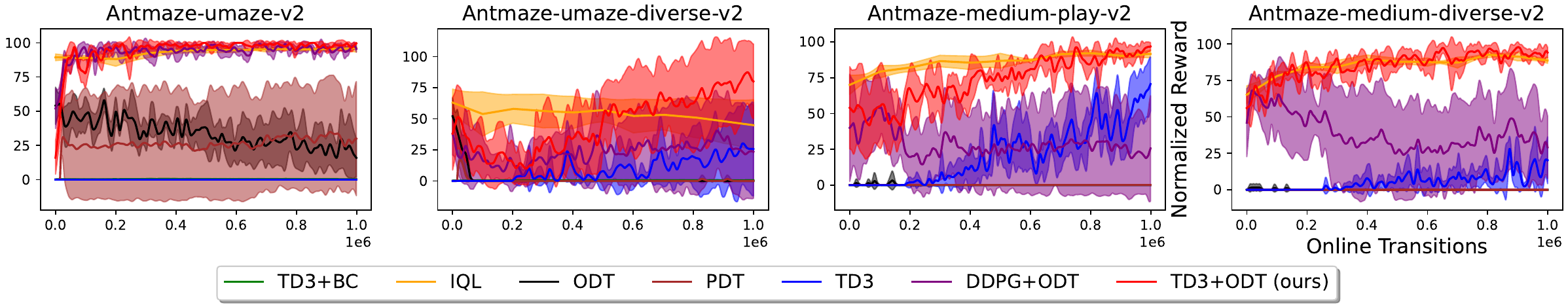}
    \caption{Reward curves for each method in Antmaze environments. IQL works  best on the large maze, while our proposed method works the best on the medium maze and umaze. {\color{black}DDPG+ODT works worse than our method and IQL but much better than the rest of the baselines, which again validates our motivation that adding RL gradients to ODT is helpful.}}
    \label{fig:antmaze-curve}
\end{figure}

\subsection{MuJoCo Environments}

\textbf{Environment and Dataset Setup.} We further test on four widely recognized standard environments~\cite{todorov2012mujoco}, which are the Hopper, Halfcheetah, Walker2d and Ant environment. For each environment, we study three different datasets: medium, medium-replay, and random. The first and second one contain trajectories of decent quality, while the last one is generated with a random agent.

\textbf{Results.} Fig.~\ref{fig:mujoco-curve} shows the results of each method on MuJoCo before and after online finetuning. We observe that autoregressive-based algorithms, such as ODT and PDT, fail to improve the policy on MuJoCo environments, especially from low-reward pretraining with random datasets. With RL gradients, TD3+BC and IQL can improve the policy during online finetuning, but less than a decision transformer (TD3 and TD3+ODT). In particular, we found IQL to  struggle on most random datasets, which are well-solved by decision transformers with TD3 gradients. TD3+ODT still outperforms TD3 with an average final reward of $88.51$ vs.\ $84.23$. See Fig.~\ref{fig:mujoco-curve} in Appendix~\ref{sec:reward_curve} for reward curves.

\begin{table}[t]
    \scriptsize
    \centering
    \setlength{\tabcolsep}{4pt}
    \begin{tabular}{cccccccc}
         \hline
         & TD3+BC & IQL & ODT & PDT & TD3 & {\color{black} DDPG+ODT} & TD3+ODT (ours)  \\
         \hline
         Ho-M-v2 & 60.24(+4.4) & 44.72(-21.3) & \bf{97.84(+48.69)} & 74.43(+72.21) & \underline{88.98(+29.25)} & \color{black}41.7(-13.18) & \underline{89.07(+25.97)} \\
         Ho-MR-v2 & \bf{99.07(+33.33)} & 62.76(-7.63) & 83.29(+63.17) & 84.53(+82.23) & \underline{93.72(+55.66)} & \color{black}32.36(+9.9) & \underline{95.65(+65.89)}\\
         Ho-R-v2 & 8.36(-0.35) & 20.42(+12.36) & 29.08(+26.92) & 35.9(+34.67) & \underline{75.68(+73.69)} & \color{black}25.12(+23.14) & \bf{76.13(+74.15)} \\
         Ha-M-v2 & 51.29(+2.73) & 37.12(-10.35) & 42.27(+19.23) & 39.35(+39.55) & \underline{70.9(+29.59)} & \color{black}55.69(+14.71) & \bf{76.91(+35.3)}\\
         Ha-MR-v2 & 56.5(+13.07) & 49.97(+6.84) & 41.45(+26.77) & 31.47(+31.8) & \underline{69.87(+40.59)} & \color{black}53.71(+24.91) & \bf{73.27(+43.98)}\\
         Ha-R-v2 & 44.78(+31.12) & 47.85(+40.3) & 2.15(-0.09) & 0.74(+0.9) & \bf{68.55(+66.3)} & \color{black}34.56(+32.31) & 59.35(+57.1)\\
         Wa-M-v2 & 85.34(+3.49) & 65.55(-15.12) & 75.57(+18.47) & 63.37(+63.3) & \underline{90.49(+24.74)} & \color{black}2.01(-69.54) & \bf{97.86(+27.08)} \\
         Wa-MR-v2 & 83.28(+0.0) & 95.99(+28.78) & 77.2(+12.46) & 54.49(+54.18) & \bf{100.88(+32.54)} & \color{black}1.04(-60.59) & \underline{100.6(+42.54)}\\
         Wa-R-v2 & 6.99(+5.86) & 10.67(+4.96) & 14.12(+9.82) & 15.47(+15.32) & \bf{69.91(+66.31)} & \color{black}2.91(-2.47) & 57.86(+53.27) \\
         An-M-v2 & \underline{129.11(+7.11)} & 110.36(+14.26) & 88.1(-0.51) & 52.08(+48.47) & \underline{125.67(+37.55)} & \color{black}10.81(-75.52) & \bf{132.0(+41.42)}\\
         An-MR-v2 & \underline{129.33(+41.03)} & 113.16(+24.24) & 85.64(+4.49) & 36.92(+32.41) & \bf{133.58(+51.17)} & \color{black}4.05(-87.7)& \underline{130.23(+52.08)}\\
         An-R-v2 & \underline{67.89(+33.47)} & 12.28(+0.97) & 24.96(-6.44) & 14.88(+10.38) & 63.47(+32.02) & \color{black}4.93(-26.55) & \bf{71.69(+40.31)}\\
         \addlinespace[0.5ex]\hline
         Average & 68.52(+14.6) & 55.9(+7.8) & 55.14(+18.58) & 41.97(+40.44) & \underline{87.64(+44.95)} &
         \color{black}22.87(-19.22)
         & \bf{88.38(+46.59)}\\
         \addlinespace[0.5ex]\hline
    \end{tabular}
    \caption{Average reward for each method in MuJoCo environments before and after online finetuning. {\color{black} The best performance for each environment is highlighted in bold font, and any result $>90\%$ of the best performance is underlined.} To save space, the name of the environments and datasets are abbreviated as follows: for the environments Ho=Hopper, Ha=HalfCheetah, Wa=Walker2d, An=Ant; for the datasets M=Medium, MR=Medium-Replay, R=Random. The format is ``final(+increase after finetuning)''. The proposed solution performs well.}
    \label{tab:mujoco_main}
\end{table}

\textbf{Ablations on $\alpha$.} Fig.~\ref{fig:alpha-ablation} (a) shows the result of using different $\alpha$ (i.e., RL coefficients) on different environments. We observe an increase of $\alpha$ to improve the online finetuning process. However, if $\alpha$ is too large, the algorithm may get unstable.

\textbf{Ablations on evaluation context length $T_{\text{eval}}$.} Fig.~\ref{fig:alpha-ablation} (b) shows the result of using different $T_{\text{eval}}$ on halfcheetah-medium-replay-v2 and hammer-cloned-v1. The result shows that $T_{\text{eval}}$ needs to be balanced between more information for decision-making and potential training instability due to a longer context length. As shown in the halfcheetah-medium-replay-v2 result, $T_{\text{eval}}$  too long or too short can both lead to performance drops. More ablations are available in Appendix~\ref{sec:ablation}.

\begin{figure}[ht]
    \centering
    \begin{minipage}[c]{0.48\linewidth}
    \subfigure[Ablations on $\alpha$]{\includegraphics[width=\linewidth]{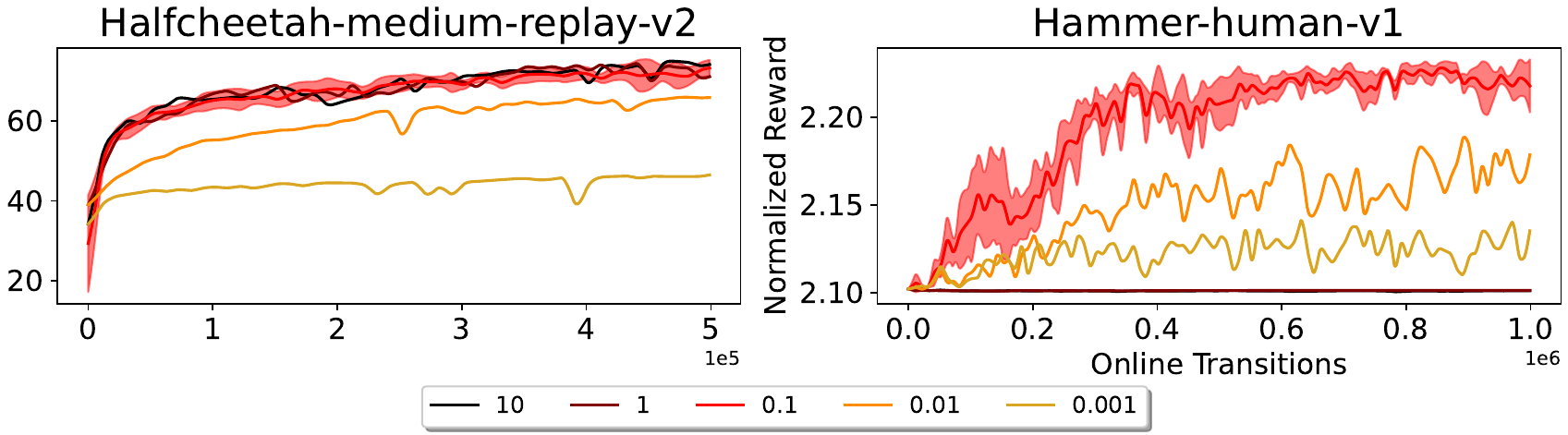}}
    \end{minipage}
    \begin{minipage}[c]{0.48\linewidth}
    \subfigure[Ablations on $T_{\text{eval}}$]{\includegraphics[width=\linewidth]{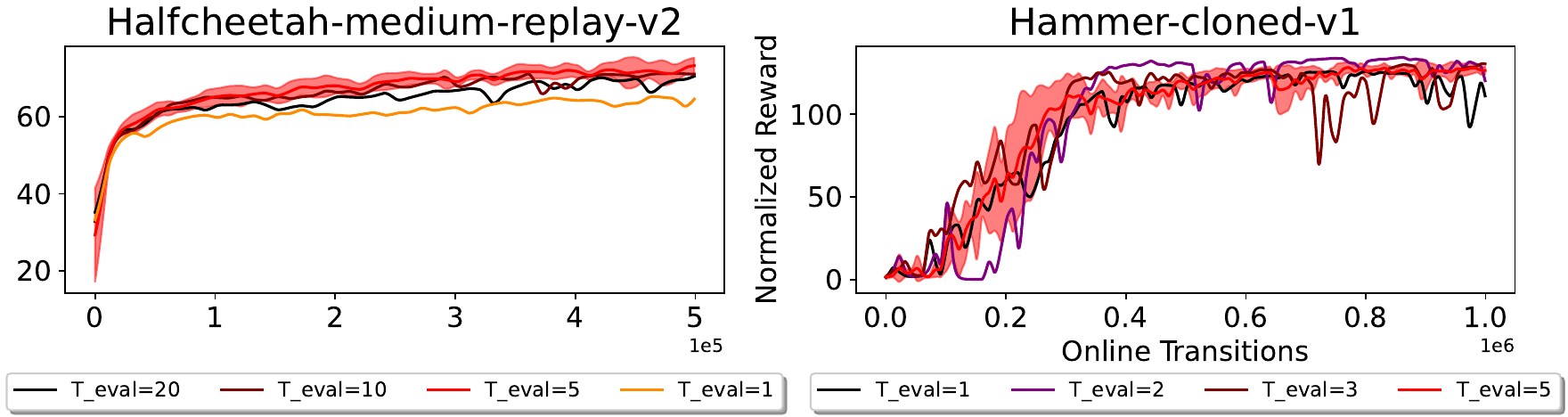}}
    \end{minipage}
    
    \caption{Panel (a) shows ablations on RL coefficient $\alpha$. While higher $\alpha$ aids exploration as shown in the halfcheetah-medium-replay-v2 case, it may sometimes introduce instability, which is shown in the hammer-human-v1 case. Panel (b) shows ablations on $T_{\text{eval}}$. $T_{\text{eval}}$ balances training stability and more information for decision-making.}
    \label{fig:alpha-ablation}
\end{figure}

\section{Related Work}









\label{sec:related}
\textbf{Online Finetuning of Decision Transformers.} While there are many works on generalizing decision transformers (e.g., predicting waypoints~\citep{waypoint}, goal, or encoded future information instead of return-to-go~\citep{furuta2021generalized, waypoint,sudhakaran2023skill, ma2023rethinking}), improving the  architecture~\citep{mao2022transformer, david2022decision,shang2022starformer,xu2023hyper} or addressing the overly-optimistic~\citep{paster2022you} or trajectory stitching issue~\citep{wu2023elastic}), there is surprisingly little work beyond online decision transformers that deals with  online finetuning of decision transformers. There is some loosely related literature: MADT~\citep{lee2022multi} proposes to finetune pretrained decision transformers with PPO. PDT~\citep{xie2023future} also studies online finetuning with the same training paradigm as ODT~\citep{zheng2022online}. QDT~\citep{yamagata2023q} uses an offline RL algorithm to re-label returns-to-go for offline datasets. AFDT~\citep{zhu2023guiding} and STG~\citep{zhou2023learning} use decision transformers offline to generate an auxiliary reward and aid the training of online RL algorithms. A few works study in-context learning~\citep{lin2023transformers,liu2023emergent} and meta-learning~\citep{wang2023t3gdt,lee2023context} of decision transformers, where improvements with evaluations on new tasks are made possible. However, none of the papers above focuses on addressing the general online finetuning issue of the decision transformer. 

\textbf{Transformers as Backbone for RL.} Having witnessed the impressive  success of transformers in Computer Vision (CV)~\citep{dosovitskiy2020image} and Natural Language Processing (NLP)~\citep{brown2020language}, numerous works also studied the impact of transformers in RL either as a model for the agent~\citep{parisotto2020stabilizing, meng2021offline} or as a world model~\citep{micheli2022transformers, robine2023transformer}. However, a large portion of  state-of-the-art work in RL is still based on simple Multi-Layer Perceptrons (MLPs)~\citep{lyu2022mildly,kostrikov2021offline}. This is largely because transformers are significantly harder to train and require extra effort~\citep{parisotto2020stabilizing}, making their ability to better memorize long trajectories~\citep{ni2023transformers}  harder to realize compared to MLPs. Further, there are works on using transformers as feature extractors for a trajectory~\citep{mao2022transformer, parisotto2020stabilizing} and works that leverage the common sense of transformer-based Large Language Model's for RL priors~\citep{brohan2023can,brohan2023rt,yuan2023plan4mc}. In contrast, our work focuses on improving the new ``RL via Supervised learning'' (RvS)~\citep{brandfonbrener2022does, emmons2021rvs} paradigm, aiming to merge this paradigm with the benefits of classic RL training.

\textbf{Offline-to-Online RL.} Offline-to-online RL bridges the gap between offline RL, which heavily depends on the quality of existing data while struggling with out-of-distribution policies, and online RL, which requires many interactions and is of low data efficiency. Mainstream offline-to-online RL methods include teacher-student~\citep{schmitt2018kickstarting,bastani2018verifiable,uchendu2023jump,zhang2023policy} and out-of-distribution handling (regularization~\citep{fujimoto2018addressing,kumar2020conservative,wu2022supported}, avoidance~\citep{kostrikov2021offline,garg2023extreme}, ensembles~\citep{agarwal2020optimistic,chen2021randomized,ghasemipour2021emaq}). There are also works on pessimistic Q-value initialization~\citep{yu2023actor}, confidence bounds~\citep{hong2022confidence}, and a mixture of offline and online training~\citep{song2022hybrid,zheng2023adaptive}. However, all the aforementioned works are based on Q-learning and don't consider decision transformers.

\section{Conclusion}
\label{sec:conclusion}
In this paper, we point out an under-explored problem in the Decision Transformer (DT) community, i.e., online finetuning. To address online finetuning with a decision transformer, we examine the current state-of-the-art, online decision transformer, and point out an issue with low-reward, sub-optimal pretraining. 
To address the issue, we propose to mix TD3 gradients with decision transformer training. This combination permits to achieve better results in multiple testbeds. Our work is a complement to the current DT literature, and calls out a new aspect of improving decision transformers.

\textbf{Limitations and Future Works.} While our work theoretically analyzes  an ODT issue, the conclusion relies on several assumptions which we expect to remove in  future work. Empirically, in this work we propose a simple solution orthogonal to existing efforts like architecture improvements and predicting future information rather than return-to-go. To explore other ideas that could further improve online finetuning of decision transformers,  next steps include the study of other environments and other ways to incorporate RL gradients into decision transformers. {\color{black} Other possible avenues for future research include testing our solution on image-based environments, and decreasing the additional  computational cost compared to ODT (an analysis for the current time cost is provided in Appendix~\ref{sec:compres}).}

{\color{black}\section*{Acknowledgments} 
This work was supported in part by NSF under Grants 2008387, 2045586, 2106825, MRI 1725729, NIFA Award 2020-67021-32799, the IBM-Illinois Discovery Accelerator Institute, the Toyota Research Institute, and the Jump ARCHES endowment through the Health Care Engineering Systems Center at Illinois and the OSF Foundation.
}


\newpage
\bibliography{ref}
\bibliographystyle{neurips_2024}
\newpage

\appendix

\section*{Appendix: Reinforcement Learning Gradients as Vitamin for Online Finetuning Decision Transformers}

The Appendix is organized as follows. In Sec.~\ref{sec:social}, we discuss the potential positive and negative social impact of the paper. Then, we summarize the performance shown in the main paper in Sec.~\ref{sec:reward_curve}. After this, we will explain our choice of RL gradients in the paper in Sec.~\ref{sec:moreRL}, and why our critic serves as an average of policies generated by different context lengths in Sec.~\ref{sec:whyavg}. We then provide rigorous statements for the theroetical analysis appearing in the paper in Sec.~\ref{sec:proof}, and list the environment details and hyperparameters in Sec.~\ref{sec:detail}. We then present more experiment and ablation results in Sec.~\ref{sec:ablation}. Finally, we list our computational resource usage and licenses of related assets in Sec.~\ref{sec:compres} and Sec.~\ref{sec:licenses} respectively.

\section{Broader Societal Impacts}
\label{sec:social}

Our work generally helps automation of decision-making by improving the use of online interaction data of a pretrained decision transformer agent. While this effort improves the efficiency of decision-makers and has the potential to boost a variety of real-life applications such as robotics and resource allocation, it may also cause several negative social impacts, such as potential job losses, human de-skilling (making humans less capable of making decisions without AI), and misuse of technology (e.g.,  military).

\section{Performance Summary}
\label{sec:reward_curve}

    In this section, we summarize the average reward achieve by each method on different environments and datasets, where the result for Adroit is shown in Tab.~\ref{tab:adroit_main}, and the result for Antmaze is shown in Tab.~\ref{tab:antmaze_main}. As the summary table for MuJoCo is already presented in Sec.~\ref{sec:exp}, we show the reward curves in Fig.~\ref{fig:mujoco-curve}. {\color{black} For a more rigorous evaluation, we also report other metrics including the median, InterQuartile Mean (IQM) and optimality gap using the rliable~\cite{agarwal2021deep} library. See Fig.~\ref{fig:ablation4} for details. Breakdown analysis for each environment can be downloaded by browsing to  \url{https://kaiyan289.github.io/assets/breakdown_rliable.rar}.}

\begin{table}[ht]
    \scriptsize
    \centering
    \setlength{\tabcolsep}{4pt}
    \begin{tabular}{cccccccc}
         \hline
         & TD3+BC & IQL & ODT & PDT & TD3 &{\color{black}DDPG+ODT} & TD3+ODT (ours)  \\
         \hline
         P-E-v1 & 47.88(-84.23) & \bf{149.65(-3.63)} & 121.82(+5.48) & 25.07(+25.56) & 61.56(-69.74) & \color{black}2.75(-129.63) & 120.65(-11.91)\\
         P-C-v1 & 3.75(-10.8) & 78.12(+25.01) & 22.88(-24.0) & 14.05(+12.15) & 58.04(-17.39) & \color{black}-1.41(-81.87)& \textbf{133.77(+58.05)}\\
         P-H-v1 & 26.77(+3.19) & \underline{96.5(+27.79)} & 27.55(-13.91) & 4.03(+0.38) & 38.58(-57.71) & \color{black}2.09(-92.56) & \bf{107.1(+11.87)}\\
         H-E-v1 & 3.11(-0.02) & \underline{126.54(+13.28)} & \underline{123.07(+12.79)} & 98.95(+98.94) & 93.99(-31.41) & \color{black}-0.24(-127.05)& \bf{129.8(+6.34)}\\
         H-C-v1 & 0.33(+0.03) & 2.27(+0.58) & 0.84(+0.32) & 0.66(+0.66) & 0.07(-0.69) & \color{black} 0.12(-0.83) & \bf{126.39(+124.59)}\\
         H-H-v1 & 0.17(-0.3) & 16.12(+14.18) & 0.97(-0.13) & 32.76(+32.75) & -0.03(-1.11) &\color{black}-0.06(-1.02)& \bf{116.83(+115.82)}\\
         D-E-v1 & -0.34(-0.01) & 97.57(-7.92) & 50.26(+50.14) & 59.48(+59.41) & 76.92(-25.56) & \color{black}26.48(-78.72) & \bf{103.13(-1.94)}\\
         D-C-v1 & -0.36(-0.01) & 23.8(+21.66) & 5.45(+5.37) & 1.38(+1.54) & 0.17(-4.8) & \color{black}-0.01(-4.45) & \bf{58.28(+53.31)} \\
         D-H-v1 & -0.33(-0.1) & 34.64(+29.65) & 10.61(+6.69) & 0.05(+0.22) & -0.14(-9.22) & \color{black}12.39(-12.33)& \bf{65.24(+55.94)}\\
         R-E-v1 & -1.37(+0.22) & \bf{105.78(+2.81)} & \underline{101.16(+2.11)} & 66.57(+66.7) & 0.44(-106.67) & \color{black}0.26(-106.48) & 91.38(-16.19)\\
         R-C-v1 & -0.3(+0.0) & \bf{1.1(+0.97)} & 0.06(+0.08) & -0.03(+0.04) & -0.19(-0.29) & \color{black}-0.12(-0.32) & 0.36(+0.26)\\
         R-H-v1 & -0.08(+0.1) & \bf{1.6(+1.5)} & 0.04(+0.05) & 0.04(+0.17) & -0.17(-0.28) & \color{black}-0.1(-0.27) & 1.19(+0.99) \\
         \hline
         Average & 6.6(-7.66) & 61.14(+10.49) & 38.73(+3.75) & 25.25(+24.87) & 31.85(-29.63) & \color{black}3.51(-52.96)& \bf{87.84(+33.09)}\\
         \hline
    \end{tabular}
    \caption{Average reward for each method in Adroit Environments before and after online finetuning. {\color{black}The best result for each setting is marked in bold font and all results $>90\%$ of the best performance are underlined.} To save space, the name of the environments and datasets are abbreviated as follows: P=Pen, H=Hammer, D=Door, R=Relocate for environment, and E=Expert, C=cloned, H=Human for the dataset. It is apparent that while both IQL, TD3 and TD3+ODT perform decently well before online finetuning, our proposed solution significantly outperforms all baselines on the adroit testbed. {\color{black}DDPG+ODT starts out well in the online stage, but fails probably due to DDPG's training instability compared to TD3.}}
    \label{tab:adroit_main}
\end{table}

\begin{table}[ht]
   \scriptsize
    \centering
    \setlength{\tabcolsep}{4pt}
    \begin{tabular}{cccccccc}
         \hline
         & TD3+BC & IQL & ODT & PDT & TD3 & {\color{black} DDPG+ODT} & TD3+ODT (ours)  \\
         \hline
         U-v2 & 0.21(+0.07) & \underline{95.99(+6.59)} & 15.92(-38.08) & 29.94(+29.94) & 0.0(+0.0) & \color{black}\underline{95.62(+43.62)} & \bf{99.59(+83.59)}\\
         UD-v2 & 0.53(+0.33) & 44.83(-18.17) & 0.0(-52.0) & 0.0(+0.0) & 25.55(+25.55) & \color{black}23.41(-14.59) & \bf{80.0(+42.0)}\\
         MP-v2 & 0.0(+0.0) & \underline{91.59(+21.59)} & 0.0(+0.0) & 0.0(+0.0) & 70.55(+70.55) & \color{black} 25.66(-14.34) & \bf{96.79(+42.79)}\\
         MD-v2 & 0.01(+0.01) & \underline{88.63(+23.43)} & 0.0(+0.0) & 0.0(+0.0) & 20.24(+20.24) & \color{black}29.14(-16.86) & \bf{96.31(+36.31)}\\
         \addlinespace[0.25ex]
         \hline
         LP-v2 & 0.0(+0.0) & \bf{51.4(+9.6)} & 0.0(+0.0) & 0.0(+0.0) & 0.0(+0.0)&\color{black}0.0(+0.0)& 0.0(+0.0)\\
         LD-v2 &  0.0(+0.0) & \bf{70.6(+32.6)} & 0.0(+0.0) & 0.0(+0.0) & 0.0(+0.0)&\color{black}0.0(+0.0)& 0.0(+0.0)\\
         \addlinespace[0.25ex]
         \hline 
         Average & 0.13(+0.07)&\bf{73.83(+12.61)} & 2.65(-15.01)& 4.99(+4.99)& 19.39(+19.39) & \color{black}28.97(-0.36) & 62.11(+34.1)\\
         Avg. (U+M) & 0.19(+0.08)&80.26(+8.36) & 3.98(-22.52)& 7.49(+7.49)& 29.08(+29.08)& \color{black}43.46(-0.54)& \bf{93.17(+51.17)}\\
         \addlinespace[0.25ex]
         \hline
    \end{tabular}
    \caption{\color{black}Average reward for each method in Antmaze Environments before and after online finetuning. The best result is marked in bold font and all results $>90\%$ fo the best performance are underlined. To save space, the name of the environments and datasets are abbreviated as follows: U=Umaze, UD=Umaze-Diverse, MP=Medium-Play, MD=Medium-Diverse, LP=Large-Play and LD=Large-Diverse. U+M=Umaze and Medium maze. Our method performs the best on umaze and medium maze, while IQL performs the best on large maze. Both methods are much better than the rest on average. TD3+BC diverges on antmaze in our experiments.}
    \label{tab:antmaze_main}
\end{table}

\begin{figure}[ht]
    \centering
    \includegraphics[width=\linewidth]{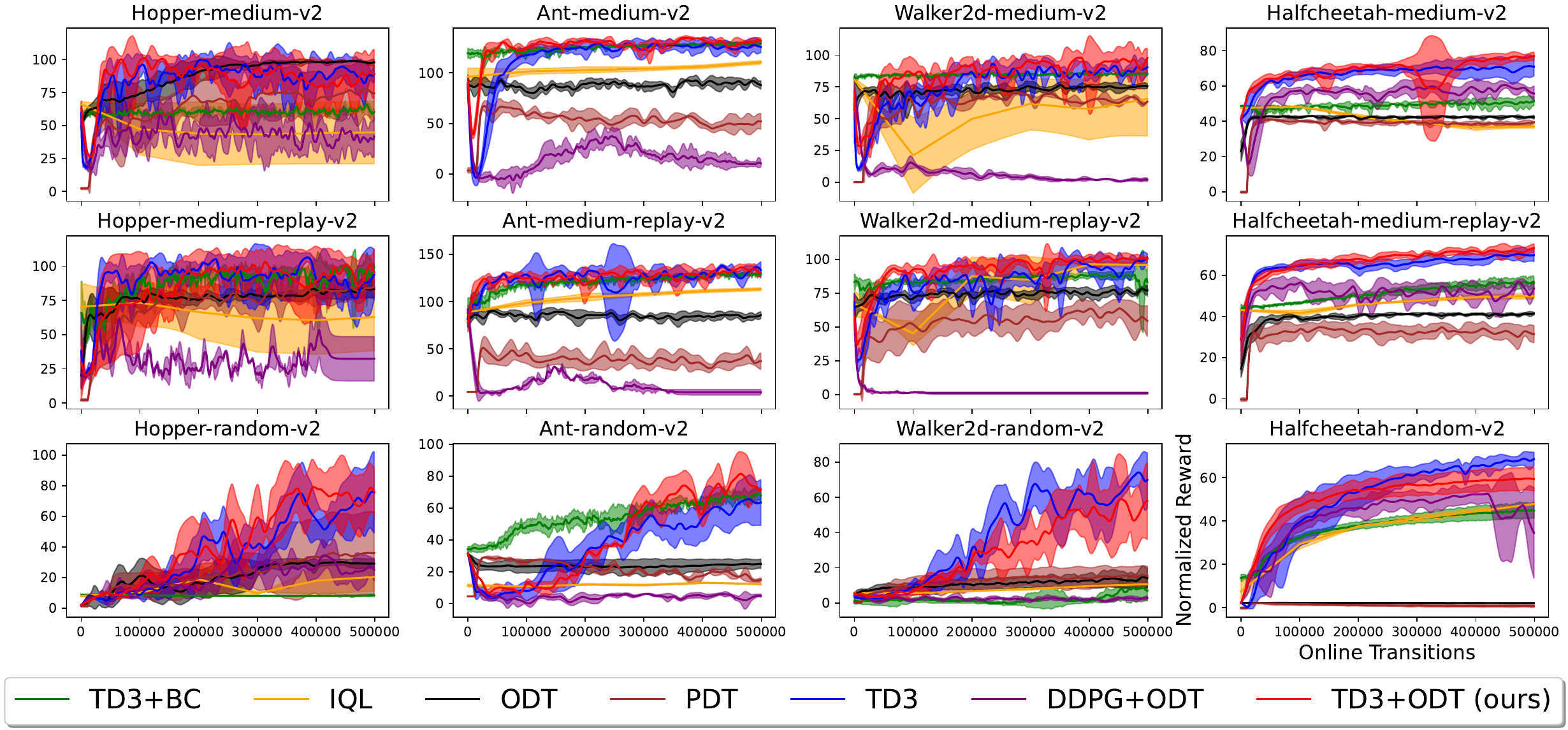}
    \caption{Results on MuJoCo~\cite{todorov2012mujoco} Environments. The TD3 gradient significantly improves the overall performance of the decision transformer; autoregressive algorithms, such as ODT and PDT, fails to improve policy in most cases (especially on random dataset), while TD3+BC and IQL's improvement during finetuning is generally limited.}
    \label{fig:mujoco-curve}
\end{figure}

\begin{figure}[!htb]
\centering
\begin{minipage}[c]{\linewidth}
\subfigure[Adroit]{\centering\includegraphics[width=\linewidth]{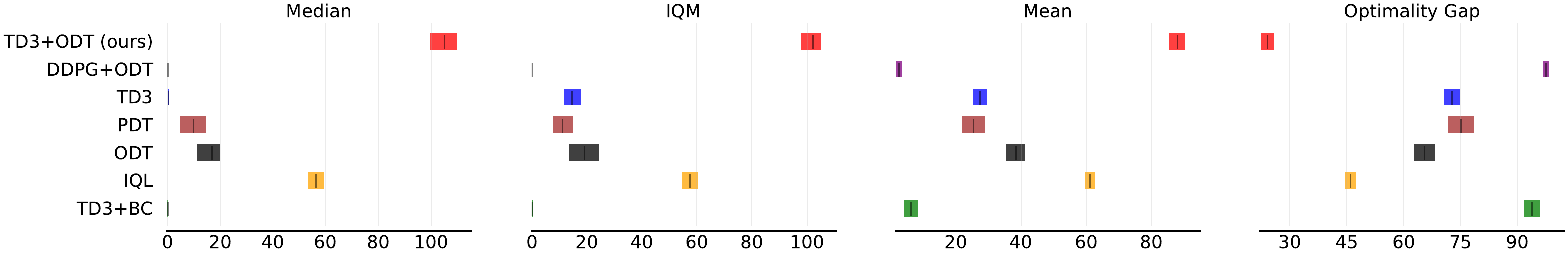}}
\end{minipage}
\begin{minipage}[c]{\linewidth}
\subfigure[MuJoCo]{\centering\includegraphics[width=\linewidth]{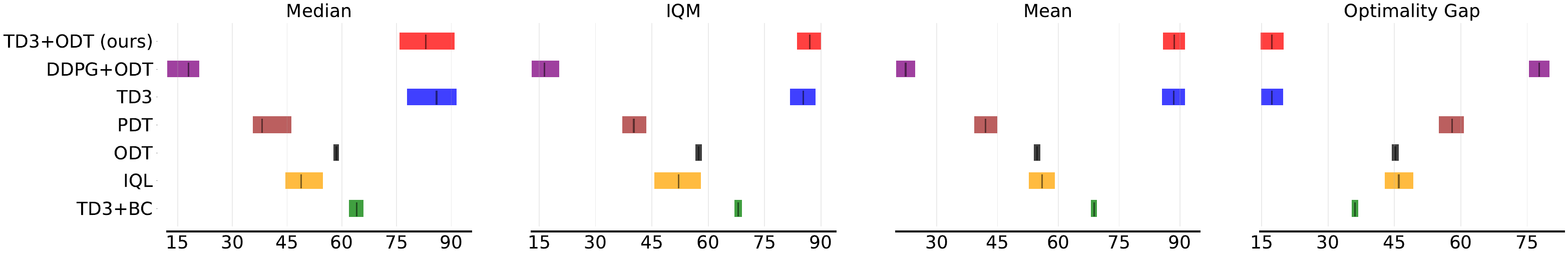}}
\end{minipage}
\begin{minipage}[c]{\linewidth}
\subfigure[Antmaze (umaze and medium)]{\centering\includegraphics[width=\linewidth]{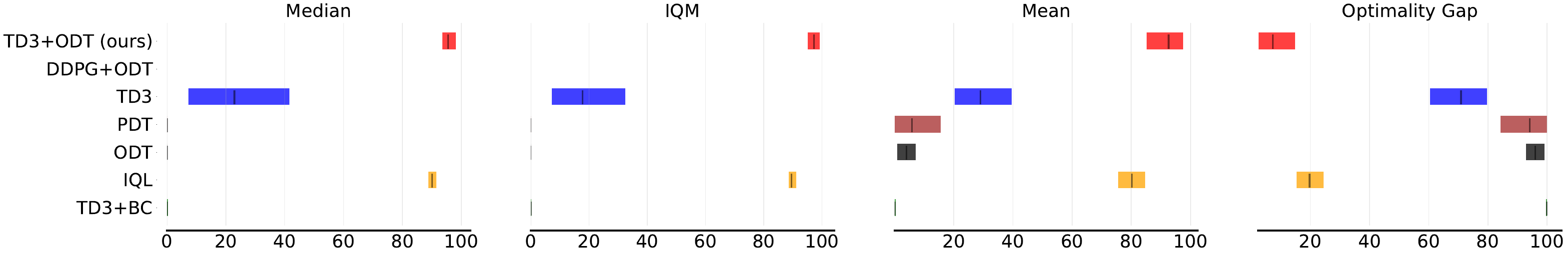}}
\end{minipage}
\begin{minipage}[c]{\linewidth}
\subfigure[Maze2d]{\centering\includegraphics[width=\linewidth]{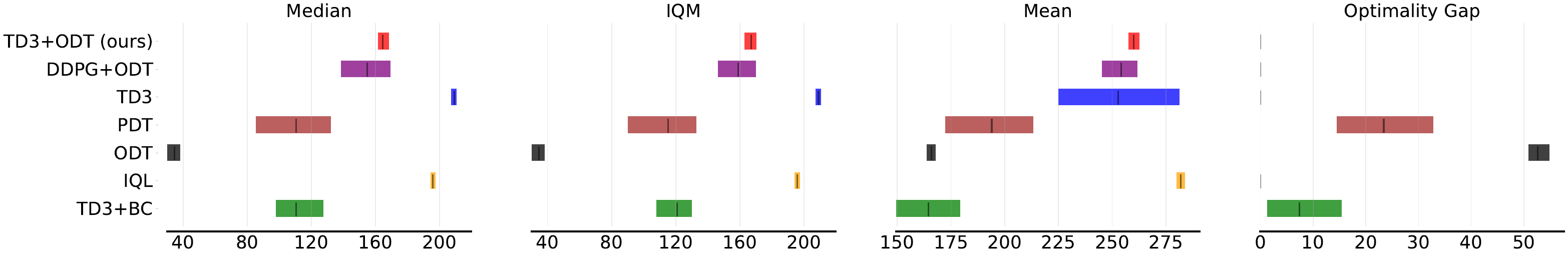}}
\end{minipage}
\caption{\color{black}Our main final results re-evaluated using the rliable library with $10000$ bootstrap replications. The x-axes are normalized scores (optimality gap is $\int_0^{100}\Pr(\text{reward}\leq x)dx$). Our method indeed outperforms all baselines on Adroit, MuJoCo and antmaze (umaze and medium).}
\label{fig:ablation4}
\end{figure}

\section{Why Do We Choose TD3 to Provide RL Gradients?}
\label{sec:moreRL}

In this section, we provide an ablation analysis on which RL gradient fits the decision transformer architecture best. Fig.~\ref{fig:rl_alone} illustrates the result of using a pure RL gradient for online finetuning of a pretrained decision transformer (for those RL algorithms with stochastic policy, we adopt the same architecture as ODT which outputs a squashed Gaussian distribution with trainable mean and standard deviation). It is apparent that TD3~\cite{fujimoto2018addressing} and SAC~\cite{Haarnoja2018SoftAO} are the RL algorithms that suit the decision transformer  best. Fig.~\ref{fig:td3_vs_sac} further shows the performance comparison between a decision transformer with SAC+ODT mixed gradient and TD3+ODT mixed gradient (both with coefficient $0.1$). The result shows that TD3 is the better choice when paired with supervised learning.

\begin{figure}[ht]
    \centering
    \includegraphics[width=\linewidth]{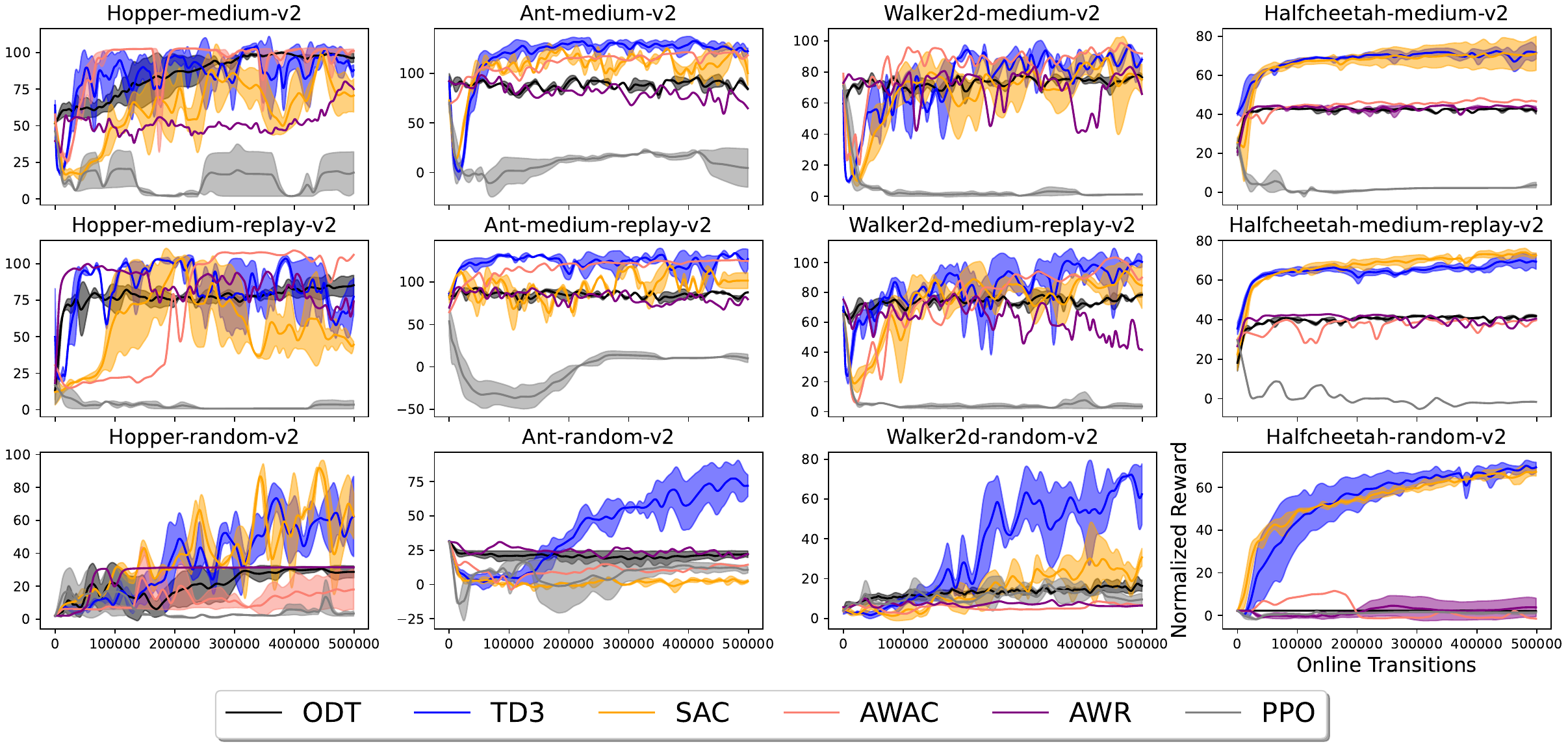}
    \caption{Performance comparison of different, pure RL gradients for online finetuning on standard D4RL benchmarks, with TD3~\cite{fujimoto2018addressing}, SAC~\cite{Haarnoja2018SoftAO}, AWAC~\cite{nair2020accelerating}, AWR~\cite{peng2019advantage} and PPO~\cite{Schulman2017ProximalPO}. We also plot ODT's performance as a reference. The result shows that generally, TD3 and SAC work the best, while PPO does not work at all.}
    \label{fig:rl_alone}
\end{figure}

\begin{figure}[ht]
    \centering
    \includegraphics[width=\linewidth]{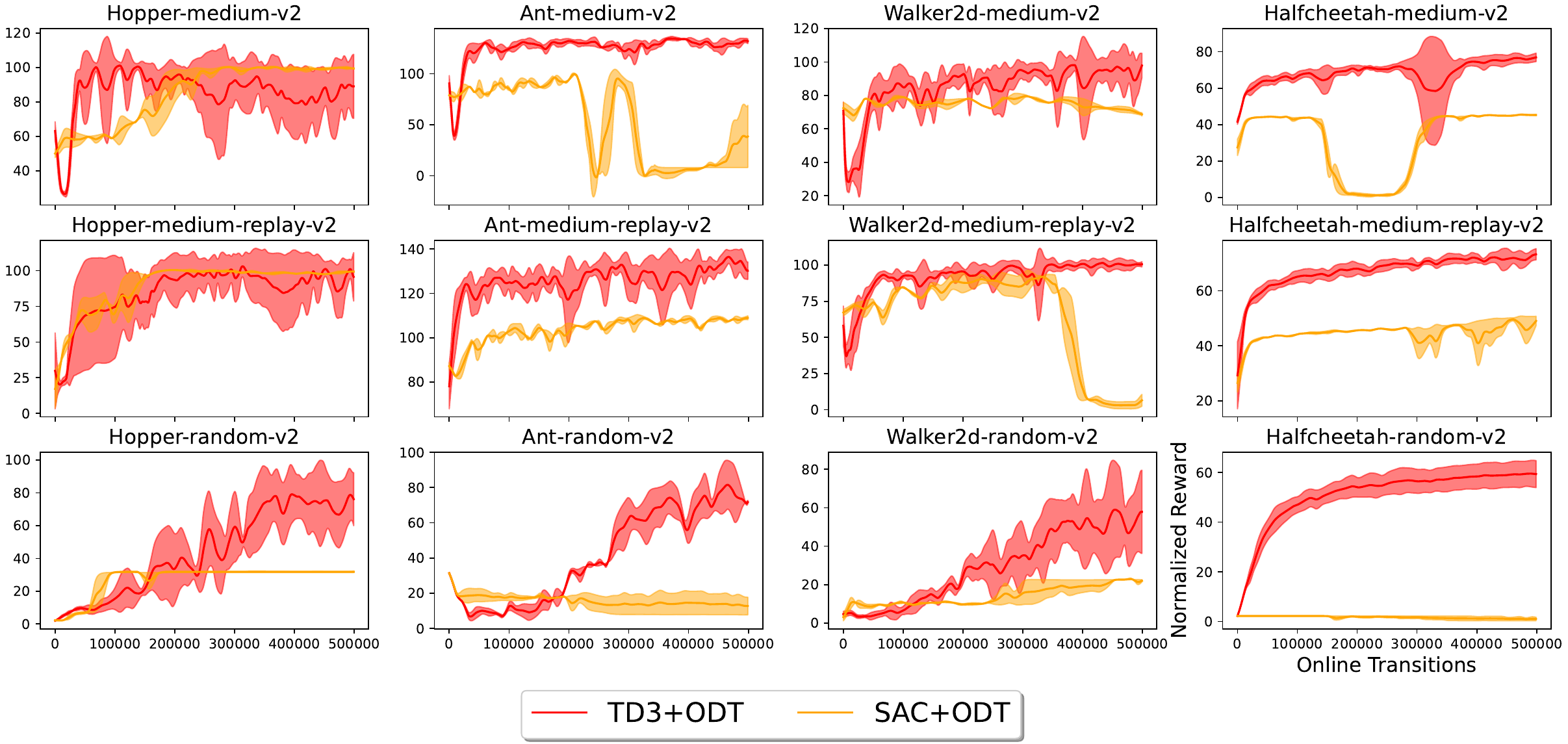}
    \caption{Performance comparison between SAC+ODT and TD3+ODT on standard D4RL benchmarks. TD3+ODT significantly outperforms SAC+ODT.}
    \label{fig:td3_vs_sac}
\end{figure}

Note, While PPO is generally closely related with transformers (e.g., Reinforcement Learning from Human Feedback (RLHF)~\cite{NEURIPS2022_b1efde53}), and was used in some prior work for online finetuning of decision transformers \textit{with a small, discrete action space}~\cite{meng2021offline}, in our experiments, we find PPO generally does not work with the decision transformer architecture. The main reason for this is the importance sampling issue: PPO has the following objective for an actor $\pi_\theta$ parameterized by $\theta$:
\begin{equation}
\label{eq:ppo}
\max_{\theta}\min\left(\mathbb{E}_{(s,a)\sim\pi_{\theta_{\text{old}}}}{\color{red}\frac{\pi_{\theta}(a|s)}{\pi_{\theta_{\text{old}}}(a|s)}}A^{\pi_{\theta_{\text{old}}}}(s,a), \text{clip}\left(\frac{\pi_{\theta}(a|s)}{\pi_{\theta_{\text{old}}}(a|s)}, 1-\epsilon,1+\epsilon\right)A^{\pi_{\theta_{\text{old}}}}(s,a)\right).
\end{equation}
Here, $\pi_{\theta_{\text{old}}}$ is the policy at the beginning of the training for the current epoch. Normally, the denominator of the red part, $\pi_{\theta_{\text{old}}}(a|s)$, would be reasonably large, as the data is sampled from that distribution. However, because of the offline nature caused by different RTGs and context lengths at rollout and training time, the denominator for the red part in Eq.~\eqref{eq:ppo} could be very small in training, which will lead to a very small loss if $A^\pi_{\theta_{\text{old}}}(s,a)>0$. This introduces significant instability during the training process. Fig.~\ref{fig:instability} illustrates the instability and degrading performance of a PPO-finetuned decision transformer.

\begin{figure}[ht]
    \centering
    \subfigure[Reward]{
    \begin{minipage}{0.48\linewidth}
    \includegraphics[width=\linewidth]{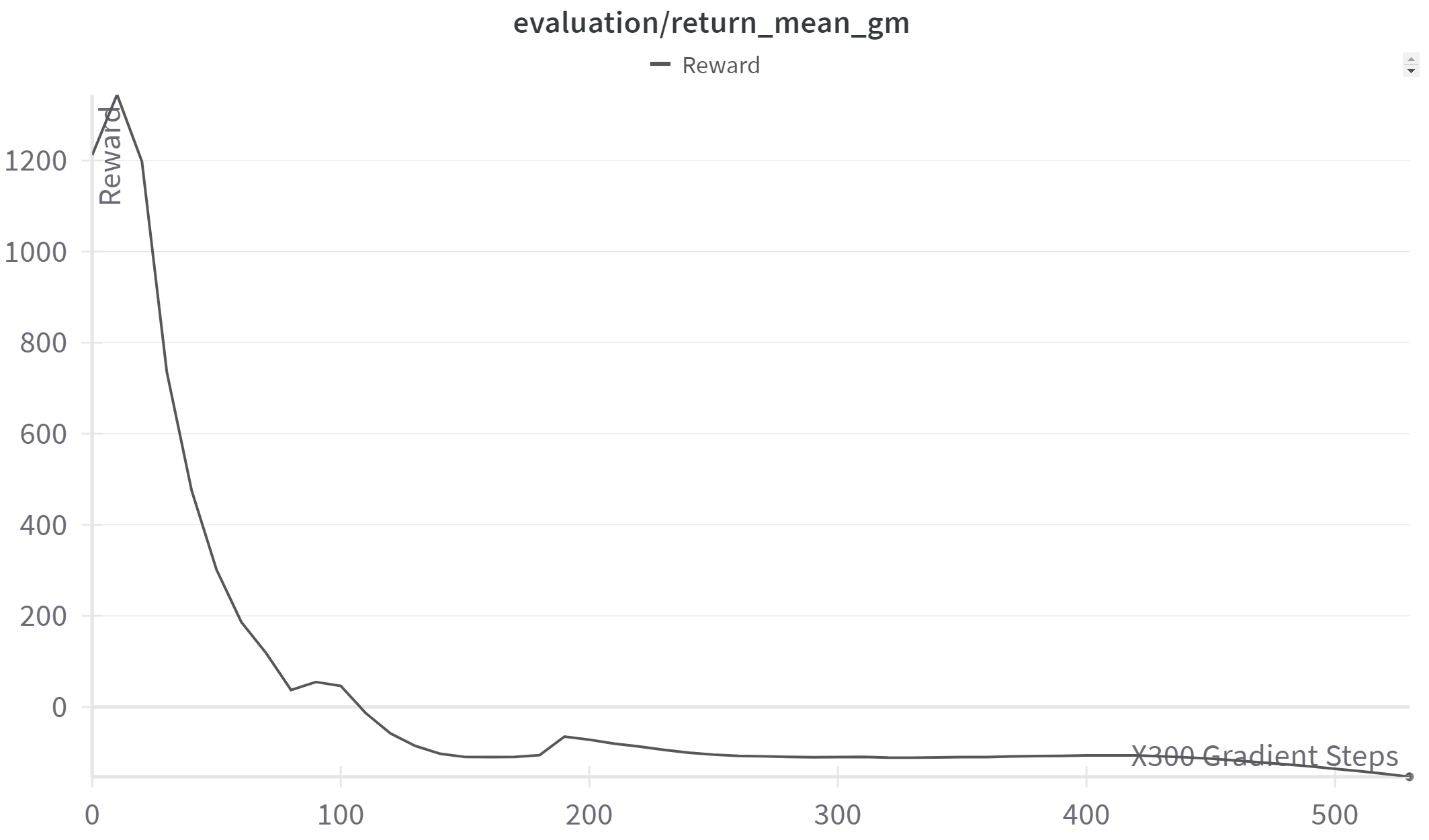}
    \end{minipage}
    }
    \subfigure[The ratio $\frac{\pi_{\theta}(a|s)}{\pi_{\theta_{\text{old}}}(a|s)}$]{
    \begin{minipage}{0.48\linewidth}
    \includegraphics[width=\linewidth]{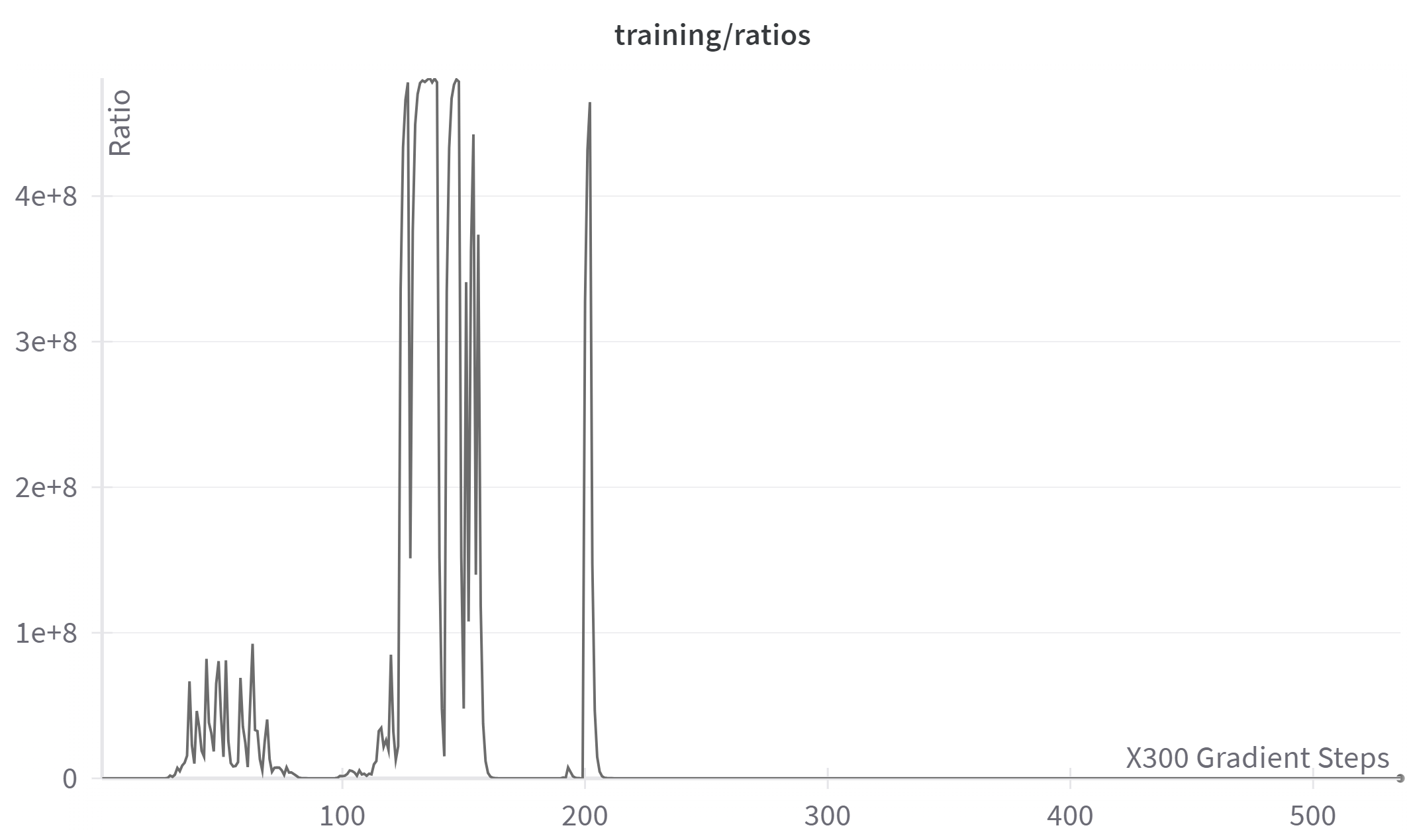}
    \end{minipage}
    }
    \caption{An illustration of the training instability and the corresponding performance of a PPO-finetuned decision transformer. The $x$ axis is $300\times$ the number of gradient steps.}
    \label{fig:instability}
\end{figure}

In contrast, RLHF, does not exhibit such a problem: it does not use different return-to-go and context length in evaluation and training. Thus RLHF does not encounter the problem described above.

Besides the RL gradients mentioned above, as IQL works well on large Antmazes, we also explore the possibility of using IQL as the RL gradient for decision transformer instead of TD3. We found that IQL gradients, when applied to the decision transformer, indeed lead to much better results on antmaze-large. However, IQL fails to improve the policy when the offline dataset subsumes very low reward trajectories, which does not conform with our motivation. This is probably because IQL, as an offline RL algorithm, aims to address out-of-distribution evaluation issue, which is a much more important source of improvement in exploration in the online case. Thus, we choose TD3 as the RL gradient applied to decision transformer finetuning in this work. Fig.~\ref{fig:td3_vs_iql} shows the result of adding TD3 gradient vs.\ adding IQL gradient on Antmaze-large-play-v2 and hopper-random-v2.

\begin{figure}[ht]
    \centering
    \includegraphics[width=\linewidth]{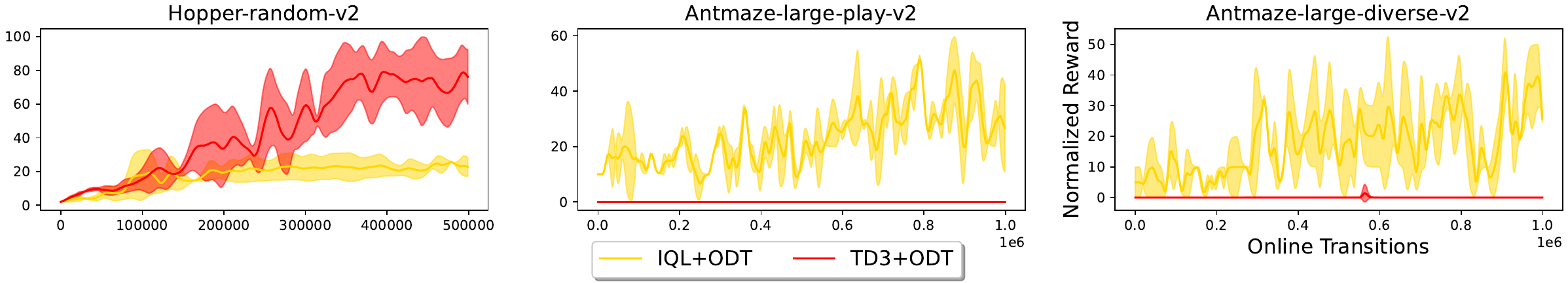}
    \caption{Performance comparison between IQL+ODT and TD3+ODT. While IQL gradient is good at both large antmaze environments, it is much easier to fall into local minima in low-reward offline dataset such as Hopper-random-v2.}
    \label{fig:td3_vs_iql}
\end{figure}

\section{Why Our Critic Serves as an Average of Policies Generated by Different Context Lengths?}
\label{sec:whyavg}

As we mentioned in Sec.~\ref{sec:prelim}, When updating a deterministic DT policy, the following loss is minimized:
\vspace{-1pt}
\begin{equation}
\label{eq:app-training-policy}
\sum_{t=1}^{T_{\text{train}}}\left\|\mu^{\text{DT}}\left(s_{0:t}, a_{0:t-1}, \text{RTG}_{0:t}, \text{RTG}=\text{RTG}_{\text{real}}, T=t\right)-a_t\right\|_2^2,
\end{equation}
where $T_{\text{train}}$ is the training context length and $\text{RTG}_{\text{real}}$ is the real return-to-go. 

However, if we consider a particular action $a_t$ in some trajectory $\tau$ of the dataset, during training (both offline pretraining and online finetuning), the policy generated by the decision transformer fitting $a_t$ will be

\begin{equation}
\label{eq:app-training-policy-2}
    a_t=\mu^{\text{DT}}(s_{t-T:t}, a_{t-T:t-1}, \text{RTG}_{t-T:t}, T\sim U'(1, T_{\text{train}}), \text{RTG}=\text{RTG}_{\text{real}}),
\end{equation}

$T$ is actually \textit{sampled} from a distribution $U'(1, T_{\text{train}})$ over integers between $1$ and $T_{\text{train}}$ inclusive; this distribution $U'$ is introduced by the randomized starting step of the sampled trajectory segments, and is \textit{almost} a uniform distribution on integers, except that a small asymmetry is created because the context length will be capped at the beginning of each trajectory. See Fig.~\ref{fig:test} for an illustration. 

Therefore, online decision transformers (and plain decision transformers) are actually trained to predict with every context length between $1$ and $T_{\text{train}}$. During the training process, the context length is randomly sampled according to $U'$, and a critic is trained to predict an ``average value'' for the policy generated with context length sampled from $U'$.

\begin{figure}[ht]
    \centering
    \subfigure[Context length $T_2$ during training]{
    \includegraphics[width=0.48\linewidth]{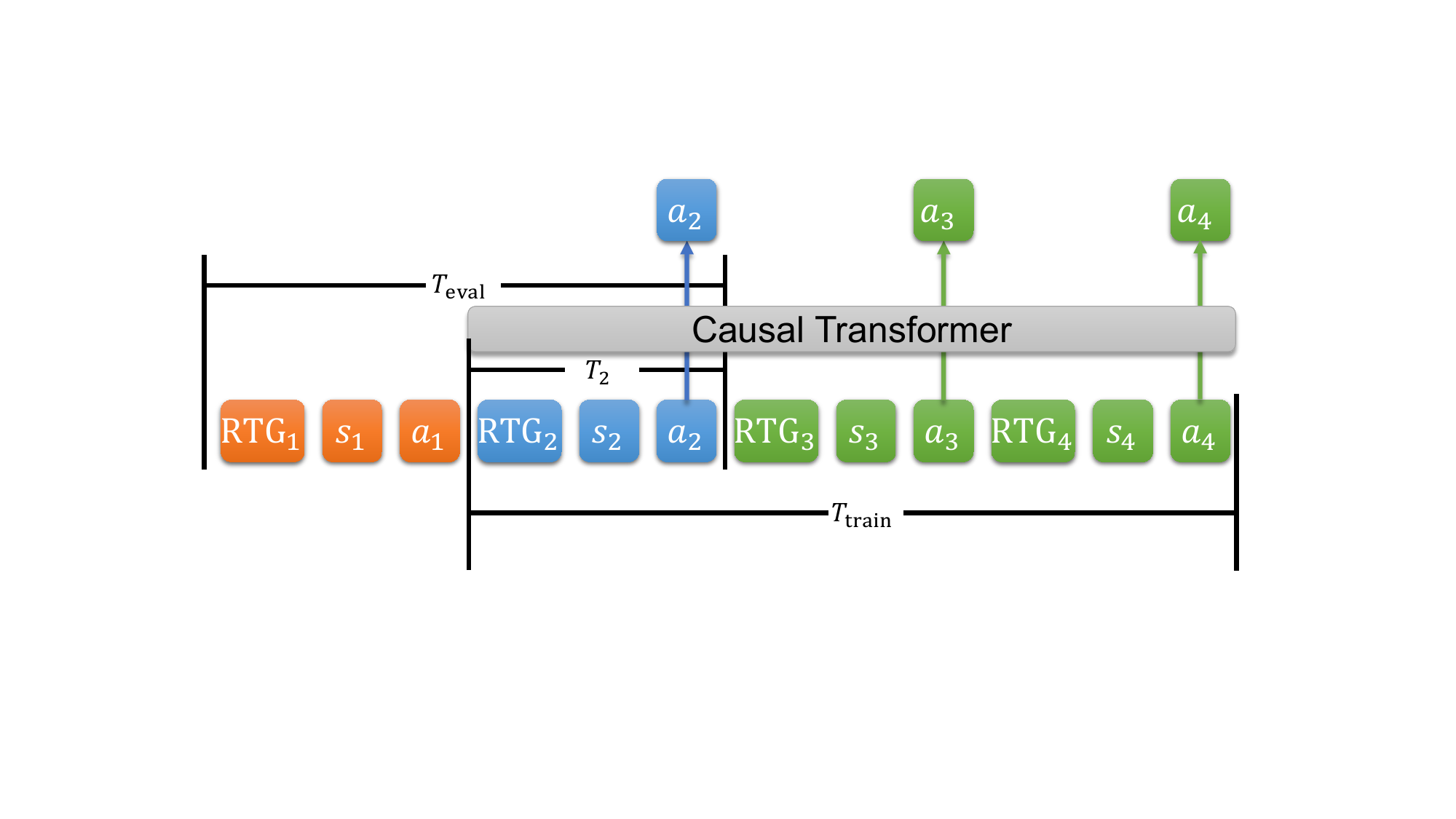}}
    \hspace{5pt}
    \subfigure[Distribution $U'$ of $T_2$]{
    \includegraphics[width=0.48\linewidth]{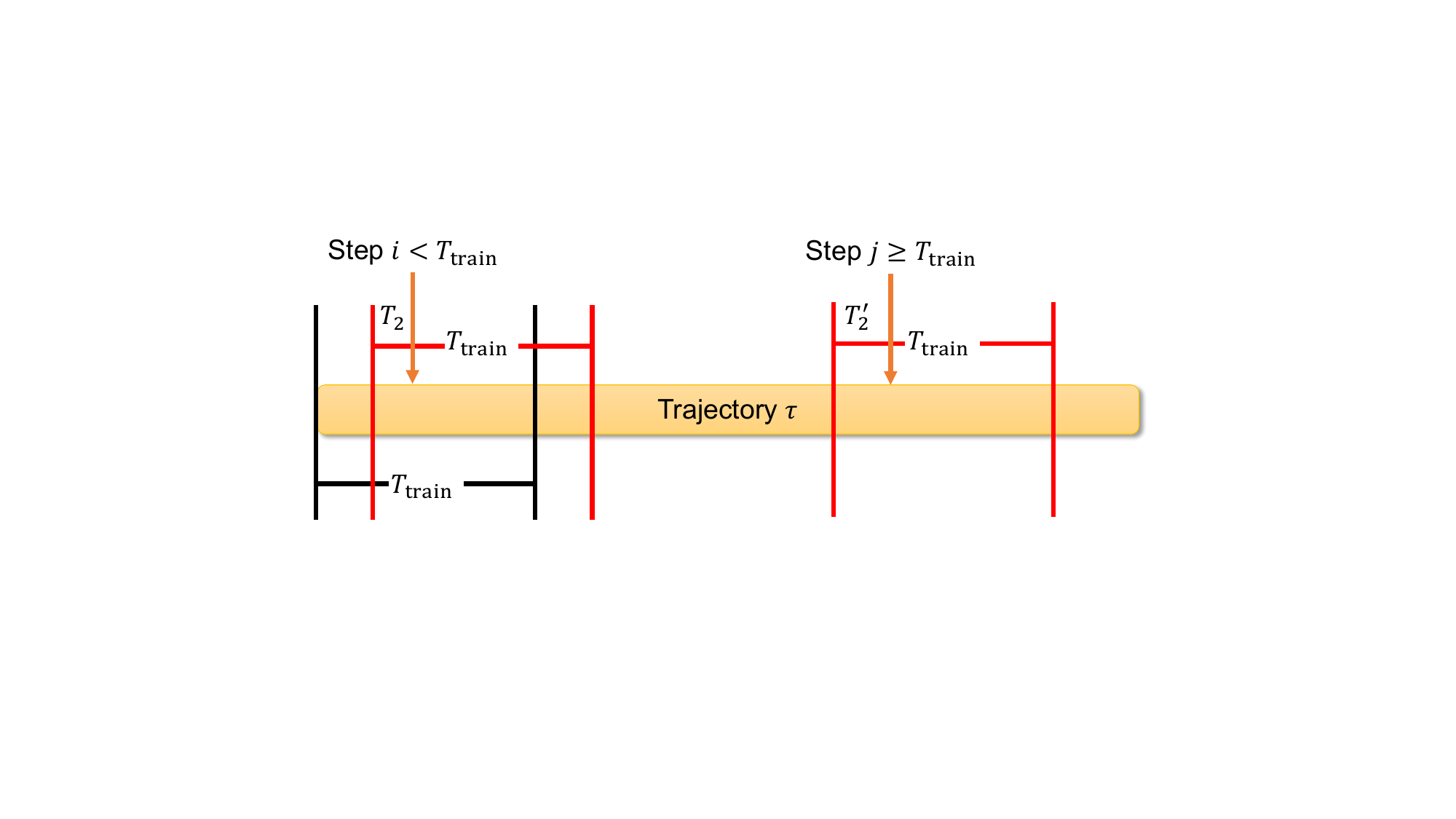}}
    \caption{\textbf{a)} illustrates the context length $T_2$ during training; $T_{\text{eval}}$ is the context length of $a_2$ upon sampling and evaluation. It is easy to see that $T_2$ is randomized during training due to the left endpoint of the sampled trajectory segment.  \textbf{b)} shows the distribution $U'$ of $T_2$; while $T'_2$ for step $j\geq T_{\text{train}}$ is uniformly sampled between $1$ and $T_{\text{train}}$ because the start of the segment is uniformly sampled, $T_2$ for step $i<T_{\text{train}}$ will be capped at the start of the trajectory. Thus  $U'$ is not exactly uniform.}
    \label{fig:test}
\end{figure}

\section{Mathematical Proofs}
\label{sec:proof}

In this section, we will state the theoretical analysis summarized in Sec.~\ref{sec:rtgfail_new} more rigorously. We will first provide an explanation on how the decision transformer improves its policy during online finetuning, linking it to an existing RL method in Sec.~\ref{sec:awac} and Sec.~\ref{sec:connect_awac}. We will then bound its performance in Sec.~\ref{sec:main_math}.
\subsection{Preliminaries}
\label{sec:awac}
Advantage-Weighted Actor Critic (AWAC)~\citep{nair2020accelerating} is an offline-to-online RL algorithm, where the replay buffer is filled with offline data during offline pretraining and then supplemented with online experience during online finetuning. AWAC uses standard $Q$-learning to train the critic $Q: |S|\times |A|\rightarrow\mathbb{R}$, and update the actor using weighted behavior cloning, where the weight is exponentiated advantage (i.e., $\exp\left(\frac{A(s,a)}{\lambda}\right)$ where $\lambda>0$ is some constant). 

\subsection{Connection between Decision Transformer and AWAC}
\label{sec:connect_awac}

We denote $\beta$ as the underlying policy of the dataset, and $P_{\beta}$ as the distribution over states, actions or returns induced by $\beta$. Note such $P_{\beta}$ can be either discrete or continuous. By prior work~\citep{brandfonbrener2022does}, for decision transformer policy $\pi^{\text{DT}}$, we have the following formula holds for any return-to-go $\text{RTG}\in\mathbb{R}$ of the future trajectory:

\begin{equation}
\label{eq:basic_app}
\pi^{\text{DT}}(a|s,\text{RTG})=P_{\beta}(a|s,\text{RTG})=\frac{P_{\beta}(a|s)P_{\beta}(\text{RTG}|s,a)}{P_{\beta}(\text{RTG}|s)}=\beta(a|s)\frac{P_{\beta}(\text{RTG}|s,a)}{P_{\beta}(\text{RTG}|s)}.
\end{equation}

\begin{figure}[ht]
    \centering
    \includegraphics[width=8cm]{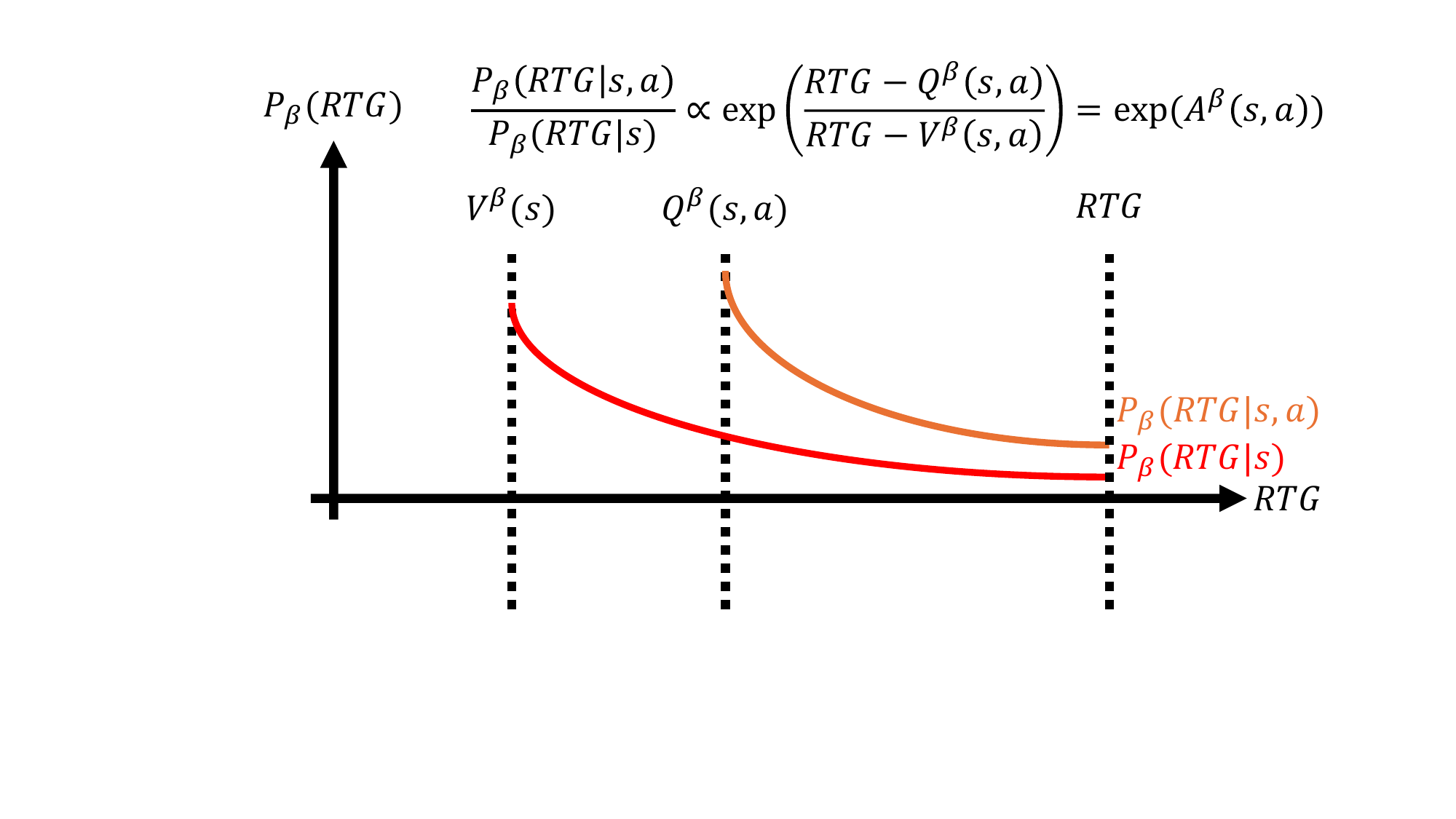}
    \caption{An illustration of how decision transformer policy update at $\text{RTG}_{\text{eval}}$ is related to AWAC. Though the assumption is strong to form the exact same formula, it shows the basic idea of how we link $P^\beta(\text{RTG})$ to its distance to $V^\beta(s)$ and $Q^\beta(s,a)$ in this paper.}
    \label{fig:app-awac}
\end{figure}

Based on Eq.~\eqref{eq:basic_app}, we have the following lemma:

\begin{lemma}
For state-action pair $(s,a)$ in an MDP, $\text{RTG}\in\mathbb{R}$, assume the distributions of return-to-go (RTG) $P_{\beta}(\text{RTG}|s,a)$ and $P_{\beta}(\text{RTG}|s)$ are Laplace distributions with scale $\sigma$, then for any $\text{RTG}$ large enough (more rigorously, $\text{RTG}\geq\max\{Q^\beta(s,a), V^\beta(s)\}$), $\pi(a|s,\text{RTG})$ is updated in the same way as AWAC.\footnote{Decision transformers have a sequence of past RTGs as input, but the past sequence can be augmented into the state space to fit into such form.}
\end{lemma}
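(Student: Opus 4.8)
The plan is to start from the decision transformer identity in Eq.~\eqref{eq:basic_app}, namely $\pi^{\text{DT}}(a|s,\text{RTG}) = \beta(a|s)\frac{P_\beta(\text{RTG}|s,a)}{P_\beta(\text{RTG}|s)}$, and simply substitute the assumed Laplace densities into the ratio $\frac{P_\beta(\text{RTG}|s,a)}{P_\beta(\text{RTG}|s)}$. Under the hypothesis, $P_\beta(\text{RTG}|s,a)$ is Laplace centered at its mean $Q^\beta(s,a) = \mathbb{E}_\beta(\text{RTG}|s,a)$ with scale $\sigma$, and $P_\beta(\text{RTG}|s)$ is Laplace centered at $V^\beta(s) = \mathbb{E}_\beta(\text{RTG}|s)$ with the same scale $\sigma$. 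First I would write out each density as $\frac{1}{2\sigma}\exp(-|\text{RTG}-\mu|/\sigma)$ with the appropriate mean $\mu$.

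Second, I would use the assumption that $\text{RTG}$ is large enough — specifically $\text{RTG} \geq \max\{Q^\beta(s,a), V^\beta(s)\}$ — to drop the absolute values: on this range, $|\text{RTG} - Q^\beta(s,a)| = \text{RTG} - Q^\beta(s,a)$ and $|\text{RTG} - V^\beta(s)| = \text{RTG} - V^\beta(s)$. Then the ratio becomes
\[
\frac{P_\beta(\text{RTG}|s,a)}{P_\beta(\text{RTG}|s)} = \exp\!\left(-\frac{\text{RTG} - Q^\beta(s,a)}{\sigma} + \frac{\text{RTG} - V^\beta(s)}{\sigma}\right) = \exp\!\left(\frac{Q^\beta(s,a) - V^\beta(s)}{\sigma}\right),
\]
where the $\text{RTG}$ terms and the $\frac{1}{2\sigma}$ normalizers cancel. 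Recognizing $Q^\beta(s,a) - V^\beta(s) = A^\beta(s,a)$ as the advantage, this is exactly $\exp(A^\beta(s,a)/\sigma)$. Substituting back, $\pi^{\text{DT}}(a|s,\text{RTG}) = \beta(a|s)\exp(A^\beta(s,a)/\sigma)$, which is precisely the AWAC update rule (weighted behavior cloning with exponentiated-advantage weights, as recalled in Sec.~\ref{sec:awac}), with temperature $\lambda = \sigma$. To turn this into a statement about the update direction, I would take logarithms and a gradient: maximizing the log-likelihood of actions under $\pi^{\text{DT}}$ with the real RTG is equivalent to the AWAC actor objective $\mathbb{E}_\beta[\exp(A^\beta(s,a)/\lambda)\log\pi_\theta(a|s)]$.

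I do not expect a serious obstacle here — the computation is a one-line cancellation. The main subtlety is bookkeeping around what "updated in the same way" means: I would state it as an equality of the effective target distribution (up to the normalizing issue — note $\beta(a|s)\exp(A^\beta(s,a)/\sigma)$ is not automatically a normalized density, so strictly one should say "proportional to," matching AWAC's own use of a partition function). The footnote about augmenting the past RTG sequence into the state handles the only structural gap, namely that a real decision transformer conditions on a whole history rather than a single $(s,\text{RTG})$ pair; I would invoke that augmentation at the outset so that Eq.~\eqref{eq:basic_app} applies verbatim. The Laplace assumption is what makes the $\text{RTG}$-dependence cancel exactly; I would remark that for other distributions (e.g.\ Gaussian, as in Fig.~\ref{fig:app-awac}) one gets only an approximate or $\text{RTG}$-dependent reweighting, which is why the lemma is flagged as relying on a strong assumption.
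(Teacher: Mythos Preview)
Your proposal is correct and follows essentially the same approach as the paper: identify the Laplace means with $Q^\beta(s,a)$ and $V^\beta(s)$ (the paper invokes the symmetry of the Laplace distribution for this), use the condition $\text{RTG}\geq\max\{Q^\beta(s,a),V^\beta(s)\}$ to remove the absolute values, and cancel to obtain $\frac{P_\beta(\text{RTG}|s,a)}{P_\beta(\text{RTG}|s)}=\exp(A^\beta(s,a)/\sigma)$, which is the AWAC weight. The paper's proof stops at this ratio and does not spell out your additional remarks on normalization, the log-likelihood gradient equivalence, or the Gaussian comparison (though the latter appears in the surrounding discussion), but the core computation is identical.
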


\begin{proof}

If return-to-go are Laplace distributions, then by the symmetric property of such distributions, the mean of $\text{RTG}$ given $s$ or $(s,a)$ would be the expected future return, which by definition are value functions for $\beta$, i.e.,  $V^\beta(s)$ for $P_\beta(\text{RTG}|s)$ and $Q^\beta(s,a)$ for $P_\beta(\text{RTG}|s,a)$. See Fig.~\ref{fig:app-awac} as an illustration. As $\text{RTG}_{\text{eval}}\geq\max\{Q^\beta(s,a), V^\beta(s)\}$, we have

\begin{equation}
\begin{aligned}
P_\beta(\text{RTG}|s,a)&=p_\beta(\text{RTG}|s,a)=\frac{1}{2\sigma}\exp\left(-\frac{\text{RTG}-Q^\beta(s,a)}{\sigma}\right),\\
P_\beta(\text{RTG}|s)&=p_\beta(\text{RTG}|s,a)=\frac{1}{2\sigma}\exp\left(-\frac{\text{RTG}-V^\beta(s)}{\sigma}\right).
\end{aligned}
\end{equation}

And thus we have $\frac{P_{\beta}(\text{RTG}|s,a)}{P_{\beta}(\text{RTG}|s)}=\exp\left(\frac{Q^\beta(s,a)-V^\beta(s)}{\sigma}\right)=\exp(\frac{A^\beta(s,a)}{\sigma})$, where $A^\beta$ is the advantage function.
\end{proof}

While the Laplace distribution assumption in this lemma is strict and impractical for real-life applications, it gives us three crucial insights for a decision transformer: 

\begin{itemize}
    \item For any state $s$ or state-action pair $(s,a)$, we have $\mathbb{E}_\beta[\text{RTG}|s,a]=Q^\beta(s,a), \mathbb{E}_\beta[\text{RTG}|s]=V^\beta(s)$, which is an important property of $P_{\beta}$ on RTG;
    \item For decision transformer, the ability to improve the policy by collecting rollouts with high RTG, similar to AWAC, is closely related to advantage. In the above lemma, for example, if the two Laplace distributions have different scales $\sigma_V, \sigma_Q$, we will have the ratio $\frac{P_{\beta}(\text{RTG}|s,a)}{P_{\beta}(\text{RTG}|s)}$ being $\exp\left(\frac{Q^\beta(s,a)}{\sigma_Q}-\frac{V^\beta(s)}{\sigma_V}\right)$; if the distributions are Gaussian, we will get similar results but with quadratic terms of $Q^\beta$ and $V^\beta$.
    \item Different from AWAC, the ``policy improvement'' of online decision transformers heavily relies on the global property of the return-to-go as $\text{RTG}_\text{eval}$ moves further away from $Q^\beta$ and $V^\beta$. If the return-to-go is far away from the support of the data, we will have almost no data to evaluate $P_{\beta}$, and its estimation can be very uncertain (let alone ratios). In this case, it is very unlikely for the decision transformer to collect rollouts with high $\text{RTG}_{\text{true}}$ and get further improvement. This is partly supported by the corollary of Brandfonbrener et al.~\cite{brandfonbrener2022does}, where the optimal conditioning return they found on evaluation satisfies $\text{RTG}_{\text{eval}}=V^\beta(s_1)$ at the initial state $s_1$. This is also supported by our single-state MDP experiment discussed in Sec.~\ref{sec:moti_new} and illustrated in Fig.~\ref{fig:improve}.
\end{itemize}

Those important insights lead to the intuition that decision transformers, when finetuned online, lack the ability to improve ``locally'' from low $\text{RTG}_{\text{true}}$ data, and encourages  to study the scenario where $P_\beta(\text{RTG}|s)$ and $P_\beta(\text{RTG}|s,a)$ are small. 

\subsection{Failure of ODT Policy Update with Low Quality Data}
\label{sec:main_math}
As  mentioned in Sec.~\ref{sec:connect_awac}, we  study the performance of decision transformers in online finetuning when $P_{\beta}(\text{RTG}|s)$ and $P_{\beta}(\text{RTG}|s,a)$ is small. Specially, in this section, $r(s,a)$ is not a reward function, but a reward \textbf{distribution} conditioned on $(s,a)$; $r_i\sim r(s_i,a_i)$ is the reward obtained on $i$-th step. Such notation takes the noise of reward into consideration and forms a more general framework. Also, as the discrete or continuous property of $\beta$ is important in this section, we will use $\Pr_\beta$ to represent probability mass (for discrete distribution or cumulative distribution for continuous distribution) and $p_\beta$ to represent probability density (for probability density function for continuous distribution).

By prior work~\cite{brandfonbrener2022does}, we have the following performance bound \textit{tight} up to a constant factor for decision transformer for every iteration of updates:

\begin{theorem}
\label{thm:prior}
For any MDP with transition function $p(\cdot|s,a)$ and reward \textbf{random variable} $r(s,a)$, and any condition function $f$, assume the following holds:

\begin{itemize}
\item \textbf{Return coverage:} $P_\beta(g=f(s_1)|s_1)\geq \alpha_f$ for any initial state $s_1$;
\item \textbf{Near determinism:} for any state-action pair $(s,a)$, $\exists\ s'$ such that $\Pr(s'|s,a)\geq 1-\epsilon$, and $\exists\ r_0(s,a)$ such that $\Pr(r(s,a)=r_0(s,a))\geq 1-\epsilon$;

\item \textbf{Consistency of $f$:} $f(s)=f(s')+r$ for all $s$ when transiting to next state $s'$.
\end{itemize}

Then we have

\begin{equation}
    \mathbb{E}_{s_1\sim p_{\text{ini}}}\left[f(s_1)\right]-\mathbb{E}_{\tau=(s_1,a_1,\dots,s_H,a_H)\sim \pi^{\text{DT}}(\cdot|s, f(s))}\left[\sum_{i=1}^H\mathbb{E}_{r_i\sim r(s_i,a_i)}r_i\right]\leq \epsilon \left(\frac{1}{\alpha_f}+2\right)H^2,
\end{equation}

where $\alpha_f>0, \epsilon>0$ are constants, $p_{\text{ini}}$ is the initial state distribution, and $H$ is the horizon of the MDP. $\pi^{\text{DT}}$ is the learned policy by Eq.~\eqref{eq:basic_app}.
\end{theorem}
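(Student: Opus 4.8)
The plan is to reduce the return gap to the probability that the trajectory sampled by $\pi^{\text{DT}}$ ever departs from a single ``intended'' deterministic path, and then to bound that departure probability step by step. First I would use the near-determinism assumption to define, from each state--action pair $(s,a)$, the intended successor $s'$ with $\Pr(s'|s,a)\geq 1-\epsilon$ and the intended reward $r_0(s,a)$ with $\Pr(r=r_0|s,a)\geq 1-\epsilon$. Starting from $s_1$ and following the intended successors yields a canonical trajectory $\bar\tau=(\bar s_1,\bar a_1,\dots)$. The consistency assumption $f(s)=f(s')+r$ then telescopes along $\bar\tau$: after $H$ steps one gets $f(s_1)=f(\bar s_{H+1})+\sum_{i=1}^H r_0(\bar s_i,\bar a_i)$, so with $f$ vanishing at the horizon the accumulated reward along the intended path exactly equals the conditioning target $f(s_1)$. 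Hence any shortfall in $\mathbb{E}_{\tau\sim\pi^{\text{DT}}}[\sum_i r_i]$ relative to $f(s_1)$ must come from events in which the realized trajectory differs from $\bar\tau$.

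Second, I would bound, at each step $h$ and at a state $s$ lying on the intended path, the probability that $\pi^{\text{DT}}$ takes an action breaking consistency with the target return $f(s)$. Using the identity $\pi^{\text{DT}}(a|s,g)=P_\beta(a|s,g)=P_\beta(a|s)\,P_\beta(g|s,a)/P_\beta(g|s)$ with $g=f(s)$, the return-coverage assumption lower-bounds the normalizer $P_\beta(f(s)|s)\geq\alpha_f$. An action is ``bad'' only if, under $\beta$, attaining return-to-go exactly $f(s)$ after taking it requires a non-intended transition or a non-intended reward somewhere downstream; by near-determinism each such off-path event has $\beta$-probability at most $\epsilon$, so the total $\beta$-mass of bad action/outcome pairs is $O(\epsilon)$. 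Dividing by the $\alpha_f$ lower bound gives a per-step policy-deviation probability of order $\epsilon/\alpha_f$; adding the genuine environment failure probability $\le 2\epsilon$ (one $\epsilon$ for the transition, one for the reward) bounds the per-step departure probability by $\epsilon(1/\alpha_f+2)$.

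Third, I would union-bound these per-step probabilities over the $H$ steps to bound the probability that the sampled trajectory ever leaves $\bar\tau$ by $H\,\epsilon(1/\alpha_f+2)$, and observe that whenever it does leave, the resulting return error is at most the horizon $H$ (the maximal gap between target and achieved return on the relevant scale). Multiplying the departure probability by this per-trajectory error magnitude produces the claimed $\epsilon(1/\alpha_f+2)H^2$ bound, after passing the initial-state expectation $\mathbb{E}_{s_1\sim p_{\text{ini}}}[\,\cdot\,]$ through by linearity.

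The main obstacle is the second step: making rigorous the claim that the conditional $P_\beta(a|s,f(s))$ places only $O(\epsilon/\alpha_f)$ mass on consistency-breaking actions. This requires carefully accounting, via the Bayes factorization, for every downstream way the behavior policy could still hit the exact target $f(s)$ after a non-intended first move, and showing their aggregate probability stays controlled by $\epsilon$ while the normalizer stays above $\alpha_f$. An induction over the horizon that propagates the ``still on the intended path'' event is the cleanest way to keep this downstream bookkeeping from compounding worse than linearly in $H$, which is what ultimately yields the single factor of $H$ from accumulation and the second factor of $H$ from the per-trajectory error.
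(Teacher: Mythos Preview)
The paper does not prove this theorem: its entire proof reads ``See Brandfonbrener et al.~\cite{brandfonbrener2022does}.'' So there is nothing in the present paper to compare your argument against; your sketch is attempting to reproduce the cited external result.

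That said, your outline is essentially the right strategy and matches the structure of the original Brandfonbrener et al.\ argument: define the near-deterministic ``intended'' trajectory, use the Bayes identity $\pi^{\text{DT}}(a\mid s,g)=\beta(a\mid s)P_\beta(g\mid s,a)/P_\beta(g\mid s)$ together with the coverage lower bound $P_\beta(g\mid s)\geq\alpha_f$ to show that the conditioned policy puts at most $O(\epsilon/\alpha_f)$ mass on actions inconsistent with hitting $g$, add the $2\epsilon$ environment-stochasticity term, union bound over $H$ steps, and multiply by the per-trajectory error magnitude. Two small points to tighten: (i) the telescoping step does not require ``$f$ vanishing at the horizon''; consistency alone gives $f(s_1)=\sum_i r_i + f(s_{H+1})$ along the intended path, and the argument only needs that along this path the realized return matches the target up to the terminal residual, which is handled by how $f$ is constructed; (ii) the claim that the per-trajectory error is at most $H$ presupposes rewards bounded in $[0,1]$, an assumption present in the original but not restated in the theorem as quoted here, so you should make that explicit.
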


\begin{proof}
    See Brandfonbrener et al.~\cite{brandfonbrener2022does}.
\end{proof}

In our case, we define $f(s)$ as follows:
\begin{definition}
\label{def:f}
$f(s_1)=\text{RTG}_{\text{eval}}$ for all initial states $s_1$, $f(s_{i+1})=f(s_i)-r_i$ for the $(i+1)$-th step following $i$-th step ($i\in\{1,2,\dots, T-1\}$).
\end{definition}
Further, we enforce the third assumption in Thm.~\ref{thm:prior} by including the cumulative reward so far in the state space (as described in the paper of Brandforbrener et al.~\cite{brandfonbrener2022does}). Under such definition, we have a tight bound on the regret between our target $\text{RTG}_{\text{eval}}$ and the true return-to-go $\text{RTG}_{\text{true}}=\sum_{i=1}^H r_i$ by our learned policy at optimal, based on current replay buffer in online finetuning. 

We will now prove that under certain assumptions, $\frac{1}{\alpha_f}$ grows \textbf{superlinearly} with respect to $\text{RTG}$; as the bound is tight, the expected cumulative return term $\mathbb{E}_{\tau=(s_1,a_1,\dots,s_H,a_H)\sim \beta}\left[\sum_{i=1}^H\mathbb{E}_{r_i\sim r(s_i,a_i)}r_i\right]$ will be decreasing to meet the bounds.

To do this, we start with the following assumptions:

\begin{assumption}
\label{assump:basic}
We assume the following statements to be true:
\begin{itemize}
    \item (\textbf{Bounded reward}) We assume the reward is bounded in $[0, R_{\text{max}}]$ for any state-action pairs.
    \item (\textbf{High evaluation RTG}) $\text{RTG}_\text{eval}\geq \text{RTG}_{\beta\text{max}}$, where $\text{RTG}_{\beta\text{max}}$ is the largest $\text{RTG}_{\text{true}}$ in the dataset of $n$ trajectories generated by $\beta$. 
    \item (\textbf{Beta prior}) We assign the prior distribution of RTG generated by policy $\beta$ to be a Beta distribution $\text{Beta}(1,1)$ for the binomial likelihood of RTG falling on $[0,\text{RTG}_{\beta\text{max}}]$ or $[\text{RTG}_{\beta\text{max}}, TR_\text{max}]$.
\end{itemize}
\end{assumption}

\begin{remark}
The Beta distribution can be changed to any reasonable distribution; we use Beta distribution only for a convenient derivation. Considering the fact that by common sense, trajectories with high return are very hard to obtain, we can further strengthen the conclusion by changing the prior distribution.
\end{remark}

We then prove the following lemma:

\begin{lemma}
\label{lem:concentration}
Under the Assumption~\ref{assump:basic}, given underlying policy $\beta$ of the dataset, for any state $s$ with value function $V^{\beta}(s)$ and any state-action pair $(s,a)$ with Q-function $Q^{\beta}(s,a)$, any $c\geq 0$ and $\text{RTG}_{\text{eval}}\in\mathbb{R}$, with 
probability $1-\delta$
, we have 
\begin{equation}
\begin{aligned}
    \textstyle\Pr_{\beta}(\text{RTG}_{\text{eval}}- V^\beta(s)\geq c|s)&\leq\frac{(1-\epsilon)\text{RTG}_{\beta\text{max}}+\epsilon R_{\text{max}}^2T^2-\left[V^\beta(s)\right]^2}{c^2},\\
    \textstyle\Pr_{\beta}(\text{RTG}_{\text{eval}}- Q^\beta(s,a)\geq c|s,a)&\leq\frac{(1-\epsilon)\text{RTG}_{\beta\text{max}}+\epsilon R_{\text{max}}^2T^2-\left[Q^\beta(s,a)\right]^2}{c^2},\\
\end{aligned}
\end{equation}

where $\delta=1-\text{CDF}_{\text{Beta}(n+1,1)}(\epsilon)$, and the CDF is the cumulative distribution function.
\end{lemma}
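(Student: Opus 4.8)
The plan is to obtain the claimed tail bound from a second-moment (Chebyshev) estimate on the return-to-go distribution of $\beta$, with the second moment in turn controlled by a Bayesian argument showing that, with posterior probability at least $1-\delta$, essentially all of the RTG mass sits in the observed range $[0,\text{RTG}_{\beta\text{max}}]$. I would prove the $V^\beta$ inequality; the $Q^\beta$ inequality is verbatim the same with the pair $(s,a)$ in place of $s$. Throughout I read the left-hand side $\Pr_\beta(\text{RTG}_{\text{eval}}-V^\beta(s)\ge c\mid s)$ as the tail $\Pr_\beta(\text{RTG}-V^\beta(s)\ge c\mid s)$ of the random return-to-go conditioned on $s$ (the quantity the surrounding discussion and Thm.~\ref{thm:prior} bound).

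First, recall from Sec.~\ref{sec:connect_awac} that $\mathbb{E}_\beta[\text{RTG}\mid s]=V^\beta(s)$. Applying Markov's inequality to $(\text{RTG}-V^\beta(s))^2$ (i.e.\ Chebyshev), conditioned on $s$, gives for every $c\ge 0$
\[
\Pr_\beta\!\left(\text{RTG}-V^\beta(s)\ge c \mid s\right)\;\le\;\frac{\mathbb{E}_\beta\!\left[\bigl(\text{RTG}-V^\beta(s)\bigr)^2 \mid s\right]}{c^2}\;=\;\frac{\mathbb{E}_\beta[\text{RTG}^2\mid s]-\bigl[V^\beta(s)\bigr]^2}{c^2}.
\]
So it suffices to bound $\mathbb{E}_\beta[\text{RTG}^2\mid s]$ from above, which is the content of the second step.

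Second, split $\mathbb{E}_\beta[\text{RTG}^2\mid s]$ over $\{\text{RTG}\le\text{RTG}_{\beta\text{max}}\}$ and $\{\text{RTG}>\text{RTG}_{\beta\text{max}}\}$: on the first event $\text{RTG}^2\le\text{RTG}_{\beta\text{max}}^2$, and on the second $\text{RTG}\le TR_{\text{max}}$ (a sum of $T$ rewards in $[0,R_{\text{max}}]$) so $\text{RTG}^2\le R_{\text{max}}^2T^2$; writing $\theta(s):=\Pr_\beta(\text{RTG}>\text{RTG}_{\beta\text{max}}\mid s)$ this yields $\mathbb{E}_\beta[\text{RTG}^2\mid s]\le(1-\theta(s))\,\text{RTG}_{\beta\text{max}}^2+\theta(s)\,R_{\text{max}}^2T^2$. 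It then remains to control $\theta(s)$: by definition of $\text{RTG}_{\beta\text{max}}$ all $n$ dataset trajectories have RTG $\le\text{RTG}_{\beta\text{max}}$, i.e.\ $n$ ``failures'' of the event $\{\text{RTG}>\text{RTG}_{\beta\text{max}}\}$, so under the $\text{Beta}(1,1)$ prior the posterior on $\theta$ is a Beta distribution with parameters shifted by those $n$ observations, and $\Pr(\theta>\epsilon)=\delta$ with $\delta$ the stated Beta CDF expression. On the complementary event (probability $1-\delta$) we have $\theta(s)\le\epsilon$, and substituting $\theta(s)\le\epsilon$ and $1-\theta(s)\le 1$ back into the bounds above finishes the proof; the $Q^\beta$ case is identical.

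The step I expect to be the crux is the last one: ``with probability $1-\delta$'' quantifies uncertainty over a \emph{single} mass parameter, whereas the lemma asserts the bound for \emph{every} $s$, so one must either read $\Pr_\beta(\text{RTG}>\text{RTG}_{\beta\text{max}})$ as a pooled/marginal quantity or pay a union- or covering-cost over $S$ (nontrivial when $S$ is continuous). Matching the precise form $\delta=1-\text{CDF}_{\text{Beta}(n+1,1)}(\epsilon)$ also requires care about which Beta parameter counts ``successes'' and whether the tail is evaluated at $\epsilon$ or $1-\epsilon$; the reflection identity $\text{CDF}_{\text{Beta}(a,b)}(x)=1-\text{CDF}_{\text{Beta}(b,a)}(1-x)$ is what reconciles the two conventions. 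By contrast, the Chebyshev bound and the two-bin second-moment estimate are routine, the only remaining bookkeeping being the one-sided-versus-two-sided constant.
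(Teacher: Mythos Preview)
Your proposal is correct and matches the paper's own proof exactly: a Beta-posterior update to obtain $\Pr_\beta(\text{RTG}>\text{RTG}_{\beta\text{max}})\le\epsilon$ with probability $1-\delta$, followed by Chebyshev with the second moment split over $\{\text{RTG}\le\text{RTG}_{\beta\text{max}}\}$ and its complement. The paper's proof in fact glosses over the state-uniformity concern you flag (it treats the tail mass as a single pooled parameter), and the $(1-\epsilon)\text{RTG}_{\beta\text{max}}$ without a square in the stated bound appears to be a typo that your derivation implicitly corrects to $(1-\epsilon)\text{RTG}_{\beta\text{max}}^2$.
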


\begin{proof}
With the beta prior assumption in Assumption~\ref{assump:basic}, we know that with $n$ samples where $\text{RTG}\leq\text{RTG}_{\beta\text{max}}$, we have the posterior distribution to be $\text{Beta}(n+1, 1)$, i.e., with probability $1-\text{CDF}_{\text{Beta}(n+1,1)}(\epsilon)$, we have $\Pr_\beta(\text{RTG}\geq\text{RTG}_{\beta\text{max}} )\leq \epsilon$ for $\epsilon>0$.

Thus, by Chebyshev inequality, we know

\begin{equation}
\begin{aligned}
    \textstyle\Pr_\beta\left(\text{RTG}-V^\beta(s)\geq c|s\right)&\leq \frac{\mathbb{E}_\beta\left[\text{RTG}^2\right]-\mathbb{E}_\beta^2\left[\text{RTG}\right]}{c^2}\\
    &\leq \frac{(1-\epsilon)\text{RTG}_{\beta\text{max}}+\epsilon R_{\text{max}}^2T^2-\left[V^\beta(s)\right]^2}{c^2},
\end{aligned}
\end{equation}

and a similar conclusion holds for $\Pr_\beta\left(\text{RTG}-Q^\beta(s,a)\geq c|s,a\right)$. Thus, the probability decays superlinearly with respect to RTG.
\end{proof}






Given this lemma, it remains to connect the bound of $\Pr_\beta(\text{RTG}\geq c_0)$ to $P_\beta(\text{RTG}=c_0)$ on RTG, $c_0\in\mathbb{R}$. For discrete distribution, the connection is straightforward: $\Pr_\beta(\text{RTG}\geq c_0)\geq \Pr_\beta(\text{RTG}=c_0)=P_\beta(\text{RTG}=c_0)$ for any condition $s$ or $(s,a)$.

Thus, we immediately get the following corollary:



\begin{corollary}
Assume the reward is bounded in $[0, R_{\text{max}}]$ for any state and action, and the number of possible different return-to-go one can get is finite or countably infinite. Then for the $f$ condition function defined in Def.~\ref{def:f}, with probability of at least $1-\delta$,  we have
$\alpha_f\leq\frac{(1-\epsilon)\text{RTG}_{\beta\text{max}}+\epsilon R_{\text{max}}^2T^2-[V^\beta(s)]^2}{(\text{RTG}-V^\beta(s))^2}$, i.e., $\frac{1}{\alpha_f}$ grows in the order of $\Omega(\text{RTG}_{\text{eval}}^2)$.
\end{corollary}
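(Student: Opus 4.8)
The plan is to chain together the two ingredients already assembled in this section: Lemma~\ref{lem:concentration}, which controls the tail probability $\Pr_\beta(\text{RTG} - V^\beta(s) \ge c \mid s)$, and the elementary observation that for a discrete RTG distribution the point mass $P_\beta(\text{RTG} = c_0 \mid s)$ is dominated by the tail mass $\Pr_\beta(\text{RTG} \ge c_0 \mid s)$. First I would fix the condition function $f$ from Definition~\ref{def:f}, so that $f(s_1) = \text{RTG}_{\text{eval}}$ at every initial state. The return-coverage quantity in Theorem~\ref{thm:prior} is then $\alpha_f = \inf_{s_1} P_\beta(g = f(s_1) \mid s_1) = \inf_{s_1} P_\beta(\text{RTG} = \text{RTG}_{\text{eval}} \mid s_1)$.

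Next I would apply the discrete domination step: for any initial state $s_1$,
\begin{equation}
P_\beta(\text{RTG} = \text{RTG}_{\text{eval}} \mid s_1) = \textstyle\Pr_\beta(\text{RTG} = \text{RTG}_{\text{eval}} \mid s_1) \le \textstyle\Pr_\beta(\text{RTG} \ge \text{RTG}_{\text{eval}} \mid s_1).
\end{equation}
Since $\text{RTG}_{\text{eval}} - V^\beta(s_1) \ge 0$ (because $V^\beta(s_1) \le \text{RTG}_{\beta\text{max}} \le \text{RTG}_{\text{eval}}$ under Assumption~\ref{assump:basic}), I can set $c = \text{RTG}_{\text{eval}} - V^\beta(s_1) \ge 0$ and invoke Lemma~\ref{lem:concentration}, which holds with probability at least $1-\delta$ (the event being the high-probability event on the Beta posterior that $\Pr_\beta(\text{RTG} \ge \text{RTG}_{\beta\text{max}}) \le \epsilon$; note this event is shared across all states, so no union bound over $s_1$ is needed). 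This yields, with probability at least $1-\delta$,
\begin{equation}
P_\beta(\text{RTG} = \text{RTG}_{\text{eval}} \mid s_1) \le \frac{(1-\epsilon)\text{RTG}_{\beta\text{max}} + \epsilon R_{\text{max}}^2 T^2 - [V^\beta(s_1)]^2}{(\text{RTG}_{\text{eval}} - V^\beta(s_1))^2}
\end{equation}
for every $s_1$. Taking the infimum over $s_1$ on the left and bounding the right-hand side uniformly (the numerator is bounded above by the $s_1$-free constant $(1-\epsilon)\text{RTG}_{\beta\text{max}} + \epsilon R_{\text{max}}^2 T^2$, and one can use the worst-case denominator, e.g. replacing $V^\beta(s_1)$ by its supremum $\text{RTG}_{\beta\text{max}}$, or simply keeping the stated per-state form) gives the claimed bound $\alpha_f \le \frac{(1-\epsilon)\text{RTG}_{\beta\text{max}} + \epsilon R_{\text{max}}^2 T^2 - [V^\beta(s)]^2}{(\text{RTG} - V^\beta(s))^2}$. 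Inverting, $\frac{1}{\alpha_f} \ge \frac{(\text{RTG}_{\text{eval}} - V^\beta(s))^2}{(1-\epsilon)\text{RTG}_{\beta\text{max}} + \epsilon R_{\text{max}}^2 T^2}$, and since the denominator is a constant independent of $\text{RTG}_{\text{eval}}$ while the numerator is $(\text{RTG}_{\text{eval}} - \Theta(1))^2$, this is $\Omega(\text{RTG}_{\text{eval}}^2)$.

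The main obstacle — or rather the main subtlety, since the discrete case is genuinely short — is being careful about quantifiers and the $1-\delta$ event: the bound in Lemma~\ref{lem:concentration} is a random bound, and one must make sure the single high-probability event suffices simultaneously for all initial states (it does, because $\delta$ comes from a Beta posterior on the dataset, not from any per-state concentration), and that taking $\inf_{s_1}$ of $\alpha_f$ is compatible with the direction of the inequality. For the continuous case mentioned in the informal corollary at the start of the excerpt (the $\Omega(\text{RTG}_{\text{eval}}^{1.5})$ rate), the domination step fails because a density can concentrate on a vanishingly short interval, and one instead needs the Lipschitz assumption on $p_\beta$ beyond $\text{RTG}_{\beta\text{max}}$: integrating the density over a short window near $\text{RTG}_{\text{eval}}$ and optimizing the window length against the Lipschitz slope trades one power of the tail bound for the square root, producing the $1.5$ exponent; that argument is deferred to the detailed appendix proof and I would only sketch it here.
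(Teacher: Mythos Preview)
Your proposal is correct and follows exactly the paper's approach: the paper's entire argument for this corollary is the one-line observation (stated just before the corollary) that in the discrete case $P_\beta(\text{RTG}=c_0\mid s)=\Pr_\beta(\text{RTG}=c_0\mid s)\le\Pr_\beta(\text{RTG}\ge c_0\mid s)$, after which Lemma~\ref{lem:concentration} with $c=\text{RTG}_{\text{eval}}-V^\beta(s_1)$ gives the bound. Your additional remarks about the $1-\delta$ event being dataset-level (hence uniform over $s_1$) and the handling of the infimum are careful details the paper leaves implicit, but the core argument is identical.
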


\begin{remark}
While the limitation on return-to-go seems strong theoretically, it is very easy to satisfy such assumption in practice because it has no requirement on the discreteness of state and action space. Such corollary can be applied on reward discretization  with arbitrary precision (including implicit ones by float precision).
\end{remark}

For continuous distribution, to bound $p_\beta(\text{RTG}=c_0)$ with $\Pr_\beta(\text{RTG}\geq c_0)$ on RTG, we would need to assume that ``peak'' does not exist (see Fig.~\ref{fig:illu-app-coroll} for illustration), i.e. there does not exist cases where $p_\beta$ is large but $\Pr_\beta$ is small. Thus, we made the following assumption:

\begin{assumption}
\label{assump:lip}
(\textbf{Lipschitzness on uncovered RTG distribution}) $p_\beta(\text{RTG}|s)$ is $K_V$-Lipschitz where $\text{RTG}$ is larger than $\text{RTG}_{\beta\text{max}}$. 
\end{assumption}
\begin{remark}
The assumption is reasonable because when RTG is larger than any of the $\text{RTG}_{\text{true}}$ in the dataset, we have no data coverage for the performance of the underlying policy $\beta$ under such RTG, and thus we can choose any inductive bias for $\beta$. 
\end{remark}
\begin{remark}
Note the Lipschitzness of $p_\beta$ does not rely on the Lipschitzness of the reward function. For example, consider a single-state, single-step MDP where we have a uniformly random policy $a\sim U(0, 1)$ with reward $r(a)=2-\frac{1}{a}$. The reward is clearly not Lipschitz on $a
\in (0, 1)$, but the distribution of $r$ is $p_r(r_0)=p_a(r^{-1}(r_0))\cdot \frac{\partial r^{-1}(r_0)}{\partial r}=\frac{1}{(r-2)^2}$, which is Lipschitz on $(-\infty, 1)$ as $a\in(0, 1)$.
\end{remark}

With such assumption, we have the following corollary:

\begin{corollary}
Under assumption~\ref{assump:lip}, for the $f$ condition function defined in Def.~\ref{def:f},  we have
$\alpha_f\leq\sqrt{2K_V}\Omega\left(\text{RTG}_{\text{eval}}^{-1.5}\right)$, i.e., $\frac{1}{\alpha_f}$ grows in the order of $\Omega(\text{RTG}_{\text{eval}}^{1.5})$ with probability of at least $1-\delta$.

\end{corollary}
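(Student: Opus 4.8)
The plan is to turn the polynomial tail bound of Lemma~\ref{lem:concentration} into a pointwise bound on the return‑to‑go \emph{density} $p_\beta(\text{RTG}_{\text{eval}}\mid s_1)$ at every initial state $s_1$, via a ``density spreading'' argument that exploits Assumption~\ref{assump:lip}. Since, in the continuous case, $\alpha_f$ is the infimum over initial states of this density (the ``Return coverage'' condition of Theorem~\ref{thm:prior} read as a density when point masses vanish), a uniform upper bound on the density gives the claimed bound on $\alpha_f$, hence the $\Omega(\text{RTG}_{\text{eval}}^{3/2})$ growth of $1/\alpha_f$; feeding this into Eq.~\eqref{eq:main} with $f$ as in Def.~\ref{def:f} then yields the superlinear‑regret narrative.

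First I would sharpen the tail estimate. The $1/c^2$ bound of Lemma~\ref{lem:concentration} by itself would only yield exponent $1$ after the spreading step, so instead, using that returns are nonnegative, I would apply Markov's inequality to $\text{RTG}^3$: on the \emph{same} $(1-\delta)$‑probability event as in Lemma~\ref{lem:concentration} (the event that the $\mathrm{Beta}(n+1,1)$ posterior puts at most $\epsilon$ mass above $\text{RTG}_{\beta\text{max}}$), one gets $\mathbb{E}_\beta[\text{RTG}^3\mid s]\le(1-\epsilon)\text{RTG}_{\beta\text{max}}^3+\epsilon(TR_{\text{max}})^3=:C_3$, a constant independent of $\text{RTG}_{\text{eval}}$, hence $\Pr_\beta(\text{RTG}\ge t\mid s)\le C_3/t^3$ uniformly in $s$, and in particular at every initial state.

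Next, fixing an initial state $s_1$ and writing $d=p_\beta(\text{RTG}_{\text{eval}}\mid s_1)$, I would use Assumption~\ref{assump:lip} together with $\text{RTG}_{\text{eval}}\ge\text{RTG}_{\beta\text{max}}$: the density is at least $d-K_V(\text{RTG}_{\text{eval}}-x)$ on $[\text{RTG}_{\text{eval}}-d/K_V,\text{RTG}_{\text{eval}}]$ as long as that window stays above $\text{RTG}_{\beta\text{max}}$, so integrating this triangular lower bound gives $\Pr_\beta(\text{RTG}\ge\text{RTG}_{\text{eval}}-d/K_V\mid s_1)\ge d^2/(2K_V)$. Combining with the cubic tail bound at threshold $t=\text{RTG}_{\text{eval}}-d/K_V$ produces the self‑consistent inequality $d^2/(2K_V)\le C_3/(\text{RTG}_{\text{eval}}-d/K_V)^3$. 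A short bootstrapping/case analysis (if $d$ exceeded a fixed multiple of $\text{RTG}_{\text{eval}}^{-3/2}$ the spread mass near $\text{RTG}_{\text{eval}}$ would exceed the $O(\text{RTG}_{\text{eval}}^{-3})$ tail once $\text{RTG}_{\text{eval}}$ is large) shows $d/K_V\le\text{RTG}_{\text{eval}}/2$ for large $\text{RTG}_{\text{eval}}$, so $\text{RTG}_{\text{eval}}-d/K_V\ge\text{RTG}_{\text{eval}}/2$ and $d\le\sqrt{16K_VC_3}\,\text{RTG}_{\text{eval}}^{-3/2}=\sqrt{2K_V}\cdot O(\text{RTG}_{\text{eval}}^{-3/2})$. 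Taking the infimum over $s_1$ gives $\alpha_f\le\sqrt{2K_V}\cdot O(\text{RTG}_{\text{eval}}^{-3/2})$, i.e. $1/\alpha_f=\Omega(\text{RTG}_{\text{eval}}^{3/2})$, all on the $(1-\delta)$ event.

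The main obstacle I expect is precisely the density‑to‑mass conversion coupled with the implicit dependence of the integration window $d/K_V$ on the unknown $d$: one must solve the self‑consistent inequality and, crucially, verify that the window remains inside the region $\{\text{RTG}>\text{RTG}_{\beta\text{max}}\}$ where Assumption~\ref{assump:lip} is available — which is exactly what forces the conclusion to be asymptotic in $\text{RTG}_{\text{eval}}$. A secondary point to handle carefully is bookkeeping of the probabilistic event: the cubic‑moment strengthening must be taken on the \emph{same} $(1-\delta)$ event of Lemma~\ref{lem:concentration} and reused, not re‑unioned, across all states, so that no additional probability is lost and the uniform‑in‑$s_1$ infimum is legitimate.
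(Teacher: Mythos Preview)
Your proposal is correct and follows essentially the same strategy as the paper: a Lipschitz ``triangle of mass'' argument to convert a pointwise density bound into a tail-mass bound, combined with a third-moment (cubic) tail inequality in place of Chebyshev, giving the $\sqrt{2K_V}\cdot O(\text{RTG}_{\text{eval}}^{-3/2})$ density bound and hence $1/\alpha_f=\Omega(\text{RTG}_{\text{eval}}^{3/2})$.

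The one notable difference is the direction in which you spread the density. You integrate the Lipschitz lower bound to the \emph{left} of $\text{RTG}_{\text{eval}}$, which forces you to compare the triangle mass against the tail at the shifted threshold $\text{RTG}_{\text{eval}}-d/K_V$; this creates the self-consistent inequality and the side condition that the window stay above $\text{RTG}_{\beta\text{max}}$, both of which you correctly flag as obstacles and then handle by a bootstrapping/case analysis. The paper instead spreads the triangle to the \emph{right} of $\text{RTG}_{\text{eval}}$: if $p_\beta(\text{RTG}_{\text{eval}}\mid s)>p_0$ then Lipschitzness gives $\Pr_\beta(\text{RTG}\ge \text{RTG}_{\text{eval}}\mid s)>p_0^2/(2K_V)$, and the contrapositive reads directly as $p_\beta(\text{RTG}_{\text{eval}}\mid s)\le \sqrt{2K_V\cdot \Pr_\beta(\text{RTG}\ge \text{RTG}_{\text{eval}}\mid s)}$. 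This sidesteps both obstacles at once: the integration window automatically lies in the region $\{\text{RTG}>\text{RTG}_{\beta\text{max}}\}$ where Assumption~\ref{assump:lip} applies, and the resulting inequality is explicit rather than implicit in $d$. Your route works, but the rightward spread buys you a cleaner, non-asymptotic argument with no bootstrapping.
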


\begin{proof}

Under assumption~\ref{assump:lip}, by Lipschitzness, for any $\text{RTG}$ where $p_\beta(\text{RTG}|s)>p_0$, we have $p_\beta(\text{RTG}+c|s)>p_0-K_V\cdot c$ for any $c\in[0, \frac{p_0}{K_V}]$. 

Thus, we know that if $\exists\ \text{RTG}_0> \text{RTG}_{\beta\text{max}}$ such that $p_\beta(\text{RTG}_0|s)> p_0$, then we have $\Pr_\beta(\text{RTG}\geq \text{RTG}_0|s)>\frac{p_0^2}{2K_V}$ (See Fig.~\ref{fig:illu-app-coroll} for an illustration). By the contra-positive statement of the above conclusion, we know that 

\begin{equation}
\textstyle\Pr_\beta\left(\text{RTG}\geq \text{RTG}_0|s\right)\leq \frac{p_0^2}{2K_V}\Rightarrow \forall \text{RTG}\geq \text{RTG}_0,\ p_\beta(\text{RTG}|s)\leq p_0,
\end{equation}

and $\text{RTG}_\text{eval}$ is applicable to the inequality above by the high evaluation RTG assumption in Assumption~\ref{assump:basic}. We then apply the proof lemma~\ref{lem:concentration}, but apply the inequality $P\left(x-\mathbb{E}[x]\geq c\right)\leq \frac{\mathbb{E}(x-\mathbb{E}[x])^3}{c^3}$ instead of Chebyshev inequality which leads to the conclusion. 


\end{proof}

\begin{figure}[ht]

    \centering
    \includegraphics[width=8cm]{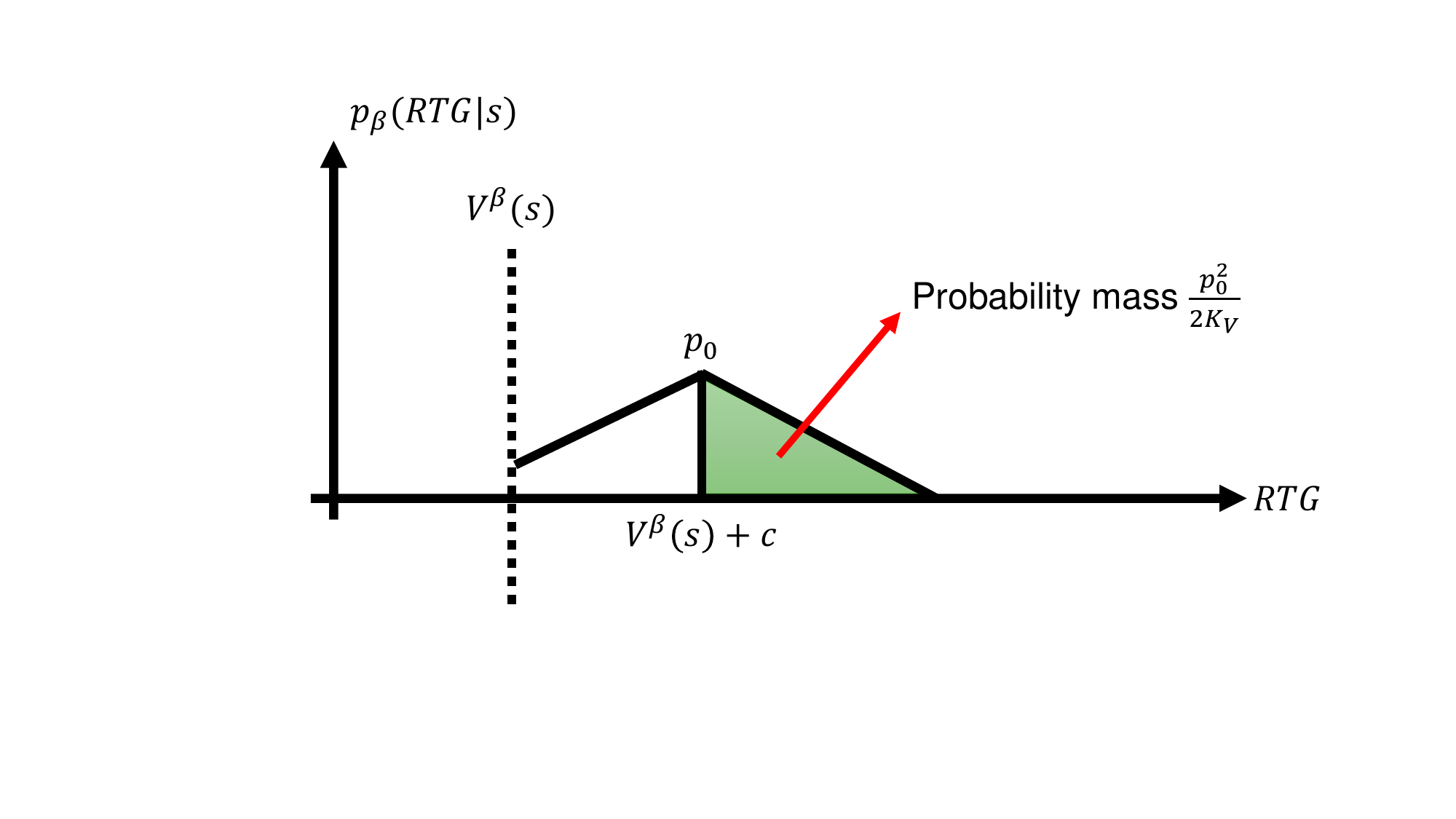}
    \caption{An illustration of how Lipschitzness on the distribution of $p_\beta(\text{RTG}|s)$ could link the bound between $\Pr(\text{RTG}\geq V^\beta(s)+c|s)$ and $p_\beta(\text{RTG}|s)$. Note we do not take the left-hand side probability mass of $p_0$ into account because the triangle of probability mass could be truncated by $V^\beta(s)$.}
    \label{fig:illu-app-coroll}
\end{figure}

\section{Experimental Details}
\label{sec:detail}

\subsection{Environment and Dataset Details}
\label{sec:envdetail}

\subsubsection{Single-State MDP}

The single-state MDP studied in Sec.~\ref{sec:moti_new} motivates why RL gradients are useful for online finetuning. It has a single state, a single action $a\in[-1, 1]$, and a reward function $r(a)=(a+1)^2$ if $a\leq 0$ and $r(a)=1-2a$ otherwise. 

\textbf{Datasets.} The dataset has a size of $128$, with $100$ actions uniformly sampled in $(-1, 0.95)$, and the remaining $28$ actions uniformly sampled in $(0.5, 1)$. The dataset is designed to conceal the reward peak in the middle. DDPG and ODT+DDPG successfully recognized the reward peak but ODT failed.

\subsubsection{Adroit Environments}

\textbf{Environments.} Adroit is a set of more difficult benchmark than Mujoco in D4RL, and is becoming increasingly popular in recent offline and offline-to-online RL works~\citep{kostrikov2021offline, garg2023extreme}. We test four environments in adroit in our experiments: pen, hammer, door and relocate.  Fig.~\ref{fig:adroit} shows an illustration of the four environments.

\begin{figure}[ht]
\centering
\begin{minipage}[c]{0.24\linewidth}
\subfigure[Pen]{\includegraphics[width=\linewidth]{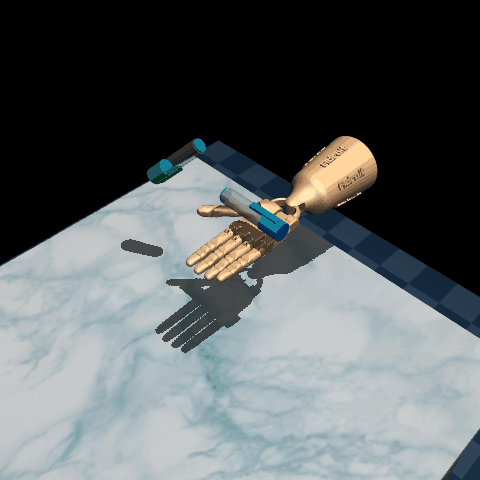}}
\end{minipage}
\begin{minipage}[c]{0.24\linewidth}
\subfigure[Hammer]{\includegraphics[width=\linewidth]{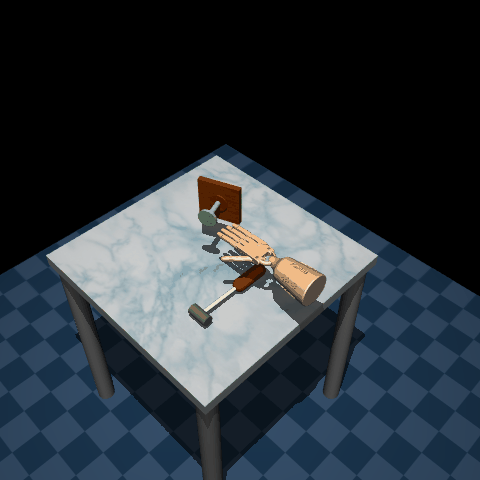}}
\end{minipage}
\begin{minipage}[c]{0.24\linewidth}
\subfigure[Door]{\includegraphics[width=\linewidth]{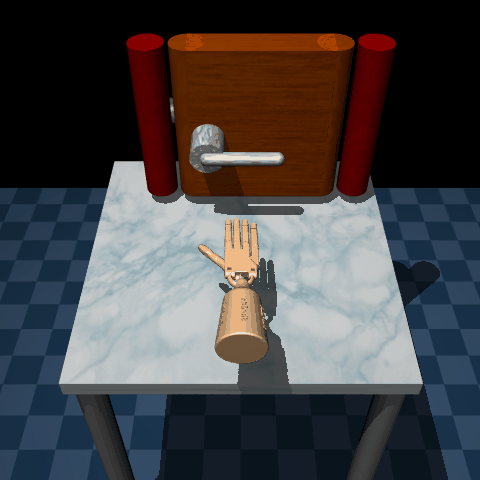}}
\end{minipage}
\begin{minipage}[c]{0.24\linewidth}
\subfigure[Relocate]{\includegraphics[width=\linewidth]{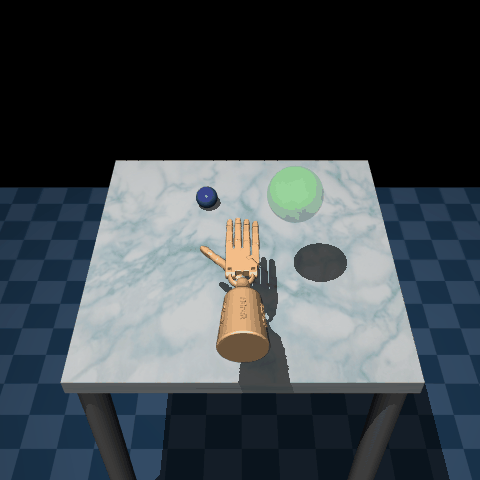}}
\end{minipage}
\caption{Illustration of Adroit environments used in Sec.~\ref{sec:exp} based on OpenAI Gym~\cite{1606.01540} and D4RL~\cite{fu2020d4rl}.}
    \label{fig:adroit}
\end{figure}
\begin{enumerate}
    \item \textbf{Pen.} Pen is a locomotion environment where the agent needs to control a robotic hand to manipulate a pen, such that its orientation matches the target. It has a $24$-dimensional action space, each of which controls a joint on the wrist or fingers. The state space is $45$-dimensional, which contains the pose of the palm, the angular position of the joints, and the pose of the target and current pen.
    \item \textbf{Hammer.} Hammer is an environment where the agent needs to control a robotic hand to pick up a hammer and use it to drive a nail into a board. The action space is $26$-dimensional, each of which corresponds to a joint on the hand. The state space is $46$-dimensional, which describes the angular position of the fingers, the pose of the palm, and the status of hammer and nail. 
    \item \textbf{Door.} In the door environment, the agent needs to use a robotic hand to open a door by undoing the latch and swinging it. The environment has a $28$-dimensional action space, which are the absolute angular positions of the hand joints. It also has $39$-dimensional observation space which describes each joint, the pose of the palm, and the door with its latch.
    \item \textbf{Relocate.} In the relocate environment, the agent needs to control a robotic hand to move a ball from its initial location towards a goal, both of which are randomized in the environment. The environment has a $30$-dimensional action space which describes the angular position of the joints on the hand, and a $39$-dimensional space which describes the hand as well as the ball and target.
    \end{enumerate}

\textbf{Datasets.} For each of the four environments, we test our method across three different qualities of datasets: expert, cloned and human, all of which provided by the DAPG~\citep{DAPG} repository. The expert dataset is generated by a fine-tuned RL policy; the cloned dataset is collected from an imitation policy on the demonstrations from the other two datasets; and the human dataset is collected from human demonstrations. Tab.~\ref{tab:datasize-adroit} shows the size and average reward of each dataset.

\begin{table}[t]
    \centering
    \begin{tabular}{ccc}
         \hline
         Dataset & Size & Normalized Reward \\
         \hline
         Pen-expert-v1 & $499106$ & $107.40\pm 55.65$ \\ 
         Pen-cloned-v1 & $499886$ & $108.63\pm 122.43$\\
         Pen-human-v1 & $4800$ & $202.69\pm 154.48$ \\
         Hammer-expert-v1 & $999800$ & $96.95\pm 50.65$\\
         Hammer-cloned-v1 & $999872$ & $8.11\pm 23.35$\\
         Hammer-human-v1 & $10948$ & $23.80\pm 33.36$ \\
         Door-expert-v1 & $999800$ & $101.19\pm 16.31$ \\
         Door-cloned-v1 & $999939$ & $12.29\pm 18.35$\\
         Door-human-v1 & $6504$ & $28.35\pm 13.88$ \\
         Relocate-expert-v1 & $999800$ & $102.25\pm 19.83$ \\
         Relocate-cloned-v1 & $999724$ & $28.99\pm 42.88$ \\
         Relocate-human-v1 & $9614$ & $87.22\pm 21.28$\\
         \hline
    \end{tabular}
    \caption{The size and the average and standard deviation of the normalized reward of the Adroit datasets from D4RL~\cite{fu2020d4rl} used in our experiments.}
    \label{tab:datasize-adroit}
\end{table}

\subsubsection{Antmaze Environments}

\textbf{Environments.} Antmaze is a more difficult version of Maze2D, where the agent controls a robotic ant instead of a point mass through the maze. It has a $27$ dimensional-state space and a $8$-dimensional action space. We test our method on six variants of antmaze: Umaze, Umaze-Diverse, Medium-Play, Medium-Diverse, Large-Play and Large-Diverse, where ``Umaze'', ``Medium'' and ``Large'' describes the size of the maze (see Fig.~\ref{fig:maze2d-illu} for an illustration), and the ``Diverse'' and ``Play'' describes the type of the dataset. More specifically, ``Diverse'' means that in the offline dataset, the starting point and the goal of the agent are randomly generated, while ``Play'' means that the goal is generated by a handcraft design. ``Umaze'' without suffix is the simplest environment where both the starting point and the goal are fixed.

\begin{figure}[ht]
\centering
\begin{minipage}[c]{0.18\linewidth}
\subfigure[Open]{\includegraphics[height=3.5cm]{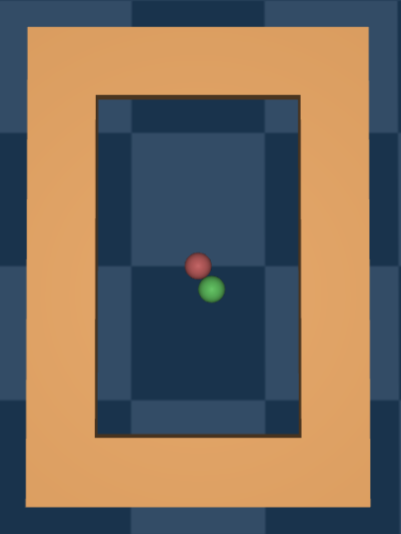}}
\end{minipage}
\hspace{1pt}
\begin{minipage}[c]{0.25\linewidth}
\subfigure[Umaze]{\includegraphics[height=3.5cm]{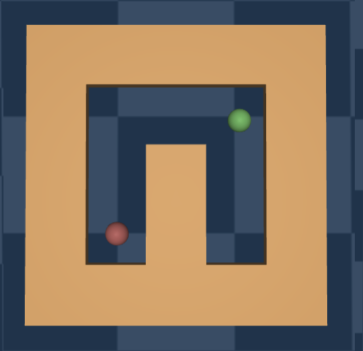}}
\end{minipage}
\hspace{1pt}
\begin{minipage}[c]{0.25\linewidth}
\subfigure[Medium]{\includegraphics[height=3.5cm]{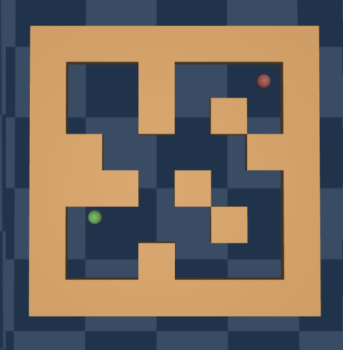}}
\end{minipage}
\hspace{-4pt}
\begin{minipage}[c]{0.25\linewidth}
\subfigure[Large]{\includegraphics[height=3.5cm]{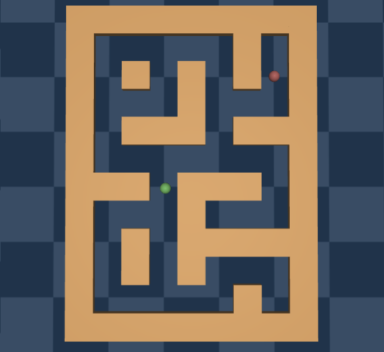}}
\end{minipage}
\caption{Illustration of mazes in antmaze and maze2D environment, where the red point is the goal and the green point is the current location of the agent.}
    \label{fig:maze2d-illu}
\end{figure}

\textbf{Datasets.} Similar to Adroit and MuJoCo, we test our method on datasets provided by D4RL. Tab.~\ref{tab:datasize-antmaze} shows the size and normalized reward of each dataset. Note, following IQL~\cite{kostrikov2021offline} and CQL~\cite{kumar2020conservative}, we conduct reward shaping: subtracting from all rewards in the dataset and environment the value $1$ during training of both our method and baselines to provide denser reward signal for all antmaze environments. However, we still count original sparse reward when comparing the performance.

\begin{table}[ht]
    \centering
    \begin{tabular}{ccc}
         \hline
         Dataset & Size & Normalized Reward \\
         \hline
         Antmaze-Umaze-v2 & $998573$ & $86.14\pm 34.55$ \\ 
         Antmaze-Umaze-Diverse-v2 & $999000$ & $3.48\pm 18.32$ \\
         Antmaze-Medium-Play-v2 & $999000$ & $90.85\pm 28.83$ \\
         Antmaze-Medium-Diverse-v2 & $999000$ & $66.29\pm 47.27$ \\
         Antmaze-Large-Play-v2 & $999000$ & $92.73\pm 25.97$ \\
         Antmaze-Large-Diverse-v2 & $999000$  & $86.17\pm 34.52$ \\
         \hline
    \end{tabular}
    \caption{The size and the average and standard deviation of the normalized reward of the Antmaze datasets from D4RL~\cite{fu2020d4rl} used in our experiments.}
    \label{tab:datasize-antmaze}
\end{table}

\subsubsection{MuJoCo}
\textbf{Environments.} We test our method on four widely used environments: Hopper, Halfcheetah, Walker2d and Ant. Fig.~\ref{fig:env-illu} shows an illustration of the four environments.

\begin{figure}[ht]
\centering
\begin{minipage}[c]{0.24\linewidth}
\subfigure[Hopper]{\includegraphics[width=\linewidth]{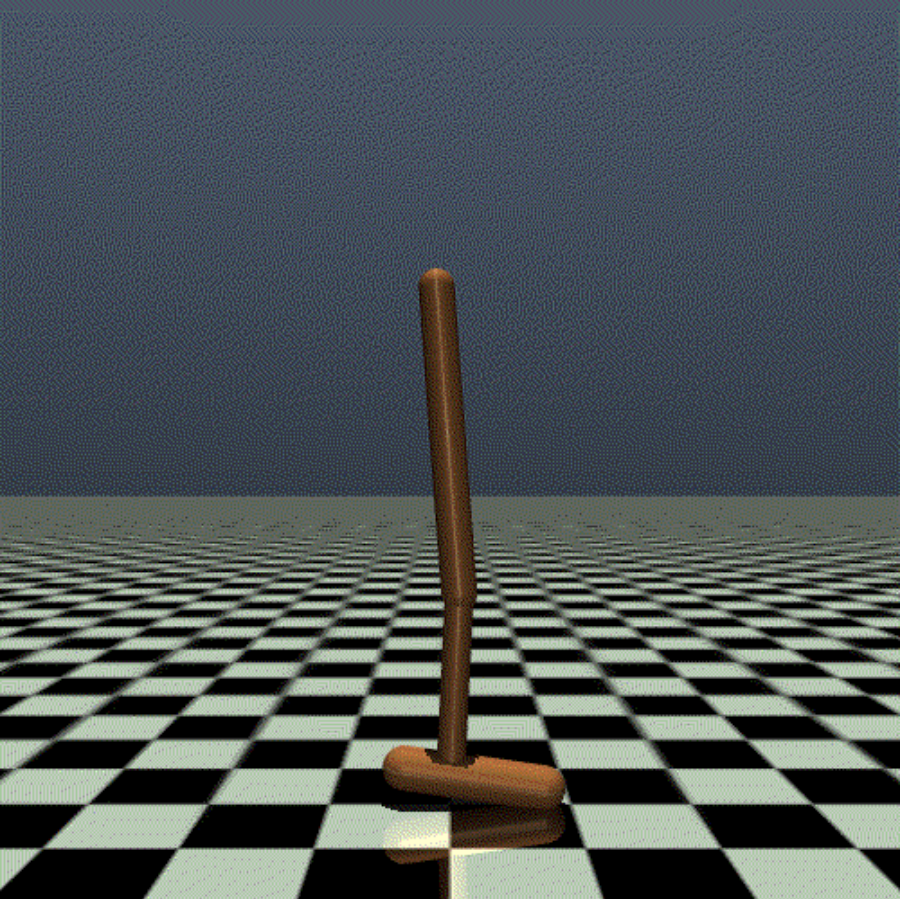}}
\end{minipage}
\begin{minipage}[c]{0.24\linewidth}
\subfigure[Halfcheetah]{\includegraphics[width=\linewidth]{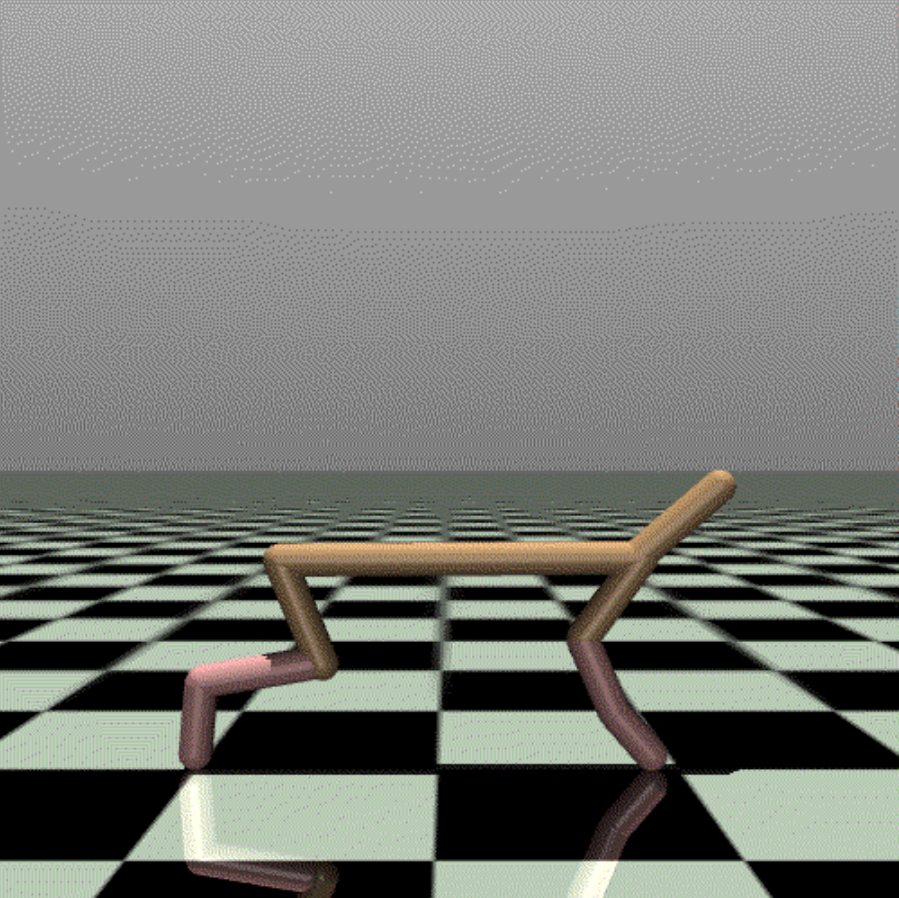}}
\end{minipage}
\begin{minipage}[c]{0.24\linewidth}
\subfigure[Ant]{\includegraphics[width=\linewidth]{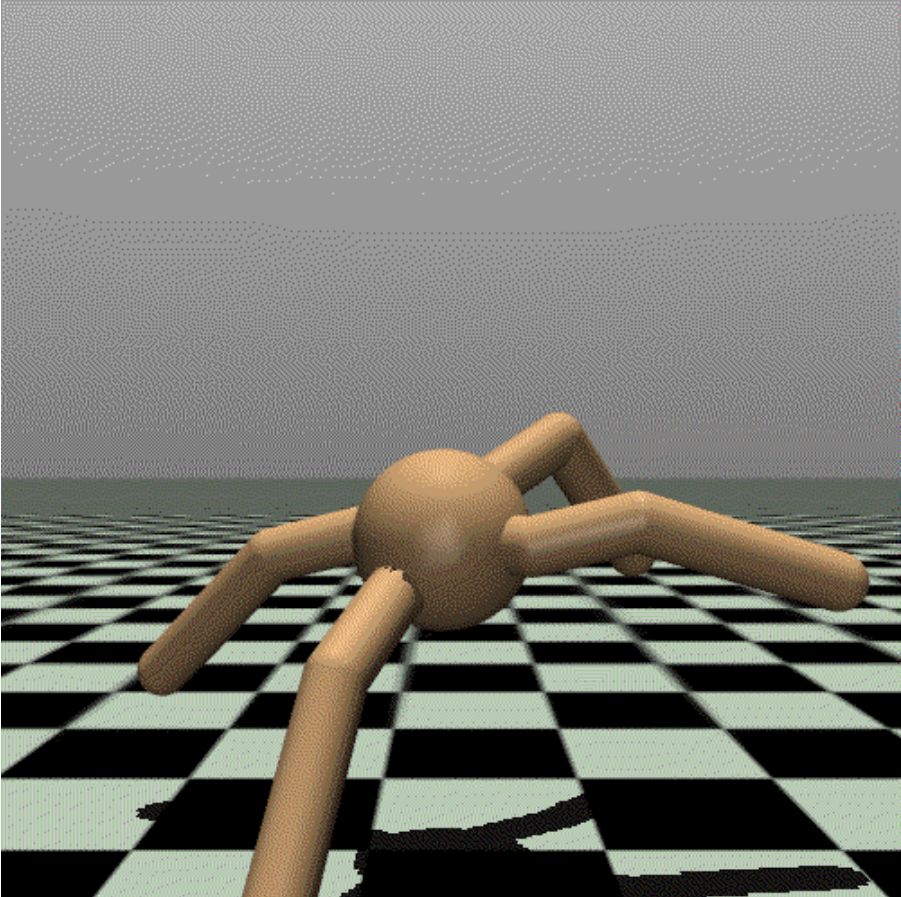}}
\end{minipage}
\begin{minipage}[c]{0.24\linewidth}
\subfigure[Walker2d]{\includegraphics[width=\linewidth]{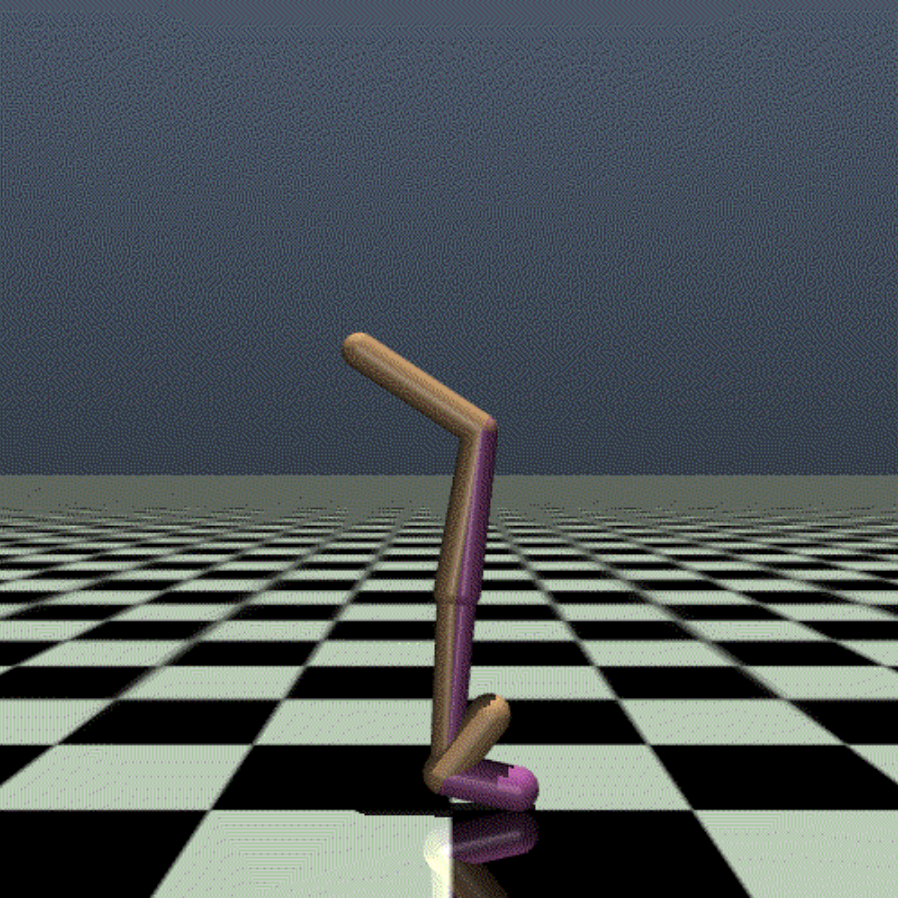}}
\end{minipage}
\caption{Illustration of MuJoCo environments used in Sec.~\ref{sec:exp} based on OpenAI Gym~\cite{1606.01540} and D4RL~\cite{fu2020d4rl}.}
    \label{fig:env-illu}
\end{figure}

\begin{enumerate}
    \item \textbf{Hopper.} Hopper is a locomotion task on a 2D vertical plane, where the agent manipulates a single-legged robot to hop forward. Its state is $11$-dimensional, which describes the angle and velocity for the robot's joints. Its action is $3$-dimensional, which corresponds to the torques applied on the three joints for the current time step respectively. 
    \item \textbf{Halfcheetah.} Halfcheetah is also a 2D environment which requires the agent to control a cheetah-like robot to run forward. The states are $17$-dimensional, containing the coordinate and velocity of the joints The actions are $6$-dimensional, which control the torques on the joints of the robot.
    \item \textbf{Ant.} In Ant, the agent controls a four-legged $8$-DoF robotic ant to walk in a 3D environment and tries to move forward. It has a $111$-dimensional state space describing the coordinates and velocities of the joints.
    \item \textbf{Walker2d.} Walker2d is a 2D environment in which the agent needs to manipulate a $8$-DoF two-legged robot to walk forward under the agent's control. Its state space is $27$-dimensional.
\end{enumerate}

\textbf{Datasets.} We test our method across three different qualities of datasets: medium, medium-replay and random. The medium dataset contains trajectories collected by an agent trained with RL, but early-stopped at medium-level performance. The medium-replay dataset is the collection of trajectories sampled in the training process of the agent mentioned above. The random dataset contains trajectories collected by an agent with random policy. Tab.~\ref{tab:datasize} shows the size and normalized reward of each dataset.

\begin{table}[ht]
    \centering
    \begin{tabular}{ccc}
         \hline
         Dataset & Size & Normalized Reward \\
         \hline
         Hopper-medium-v2 & $999906$ & $44.32\pm 12.27$ \\
         Hopper-medium-replay-v2 & $402000$ & $14.98\pm 16.32$\\
         Hopper-random-v2 & $999996$ & $1.19\pm 1.16$ \\
         HalfCheetah-medium-v2 & $1000000$ & $40.68\pm 5.12$ \\
         HalfCheetah-medium-replay-v2 & $202000$ & $27.17\pm 15.79$ \\
         HalfCheetah-random-v2 & $1000000$ & $0.07\pm 2.90$ \\
         Walker2d-medium-v2 & $999995$ & $62.09\pm 23.83$\\
         Walker2d-medium-replay-v2 & $302000$ & $14.84\pm 19.48$ \\
         Walker2d-medium-random-v2 & $999997$ & $0.01\pm 0.09$ \\
         Ant-medium-v2 & $999946$ &  $80.30\pm 35.82$\\
         Ant-medium-replay-v2 & $302000$ & $30.95\pm 31.66$\\
         Ant-medium-random-v2 & $999930$ & $6.36\pm 10.07$\\
         \hline
    \end{tabular}
    \caption{The size and the average and standard deviation of the normalized reward of the MuJoCo datasets from D4RL~\cite{fu2020d4rl} used in our experiments.}
    \label{tab:datasize}
\end{table}

\subsubsection{Maze2D Environments}

\textbf{Environments.} Maze2D is another set of D4RL environment, where the agent needs to control a point mass to navigate through a 2D maze and arrive at a fixed goal. It has a $4$-dimensional state space describing its coordinate and velocity, and a $2$-dimensional action describing its acceleration. The reward is determined by its current distance to the goal. We test our method on four variants of maze: Open, Umaze, Medium and Large with increasing difficulty. The map of each maze is illustrated in Fig.~\ref{fig:maze2d-illu}. Maze2D environment is tested in Sec.~\ref{sec:maze2d}.

\textbf{Datasets.} We again test our method on datasets provided by D4RL. Tab.~\ref{tab:datasize-maze} shows the size and normalizeed reward of each dataset.

\begin{table}[ht]
    \centering
    \begin{tabular}{ccc}
         \hline
         Dataset & Size & Normalized Reward \\
         \hline
         Maze2D-Open-v0 & $999999$ & $30.08\pm 50.17$ \\ 
         Maze2D-Umaze-v1 & $999869$ & $-12.55\pm9.82$\\
         Maze2D-Medium-v1 & $1999733$ & $-3.46\pm 3.95$ \\
         Maze2D-Large-v1 & $3999692$ & $-1.71\pm 2.87$ \\
         \hline
    \end{tabular}
    \caption{The size and the average and standard deviation of the normalized reward of the Maze2D datasets from D4RL~\cite{fu2020d4rl} used in our experiments.}
    \label{tab:datasize-maze}
\end{table}

\subsection{Hyperparameters}
\label{sec:hyperparameterdetail}

\subsubsection{Single-State MDP}
For all networks, we use a simple MDP with two hidden layers of width $128$, and ReLU~\cite{relu} as activation function. We add a Tanh activation function to limit the output for ODT and the actor of DDPG to $[-1, 1]$. For both methods, we use Adam~\cite{adam} as the optimizer, and the learning rate is set to $10^{-3}$. We pretrain $5$ epochs on offline data ($20$ gradient steps) and $16$ epochs for online finetuning, with a batch size of $32$ for gradient update and collect $64$ new rollout states for each epoch  (thus we train $2n+4$ steps for the $n$-th online finetuning epoch). $\text{RTG}_{\text{eval}}$ is set at $1$, which serves as the (constant) input for ODT rollout and DDPG actor. Both DDPG and ODT uses deterministic actor with an exploration noise uniform in $[-0.01, 0.01]$ during online rollouts.

\subsubsection{Other Experiments}

Tab.~\ref{tab:common_hpp} summarizes hyperparameters that are common across all environments, and Tab.~\ref{tab:diff_hpp} summarizes hyperparameters that are different across environments. For environments that exist in ODT~\cite{zheng2022online}, we follow the hyperparameters from ODT {\color{black} medium environments}. We did not use positional embedding as suggested by ODT~\cite{zheng2022online}. Specially, for antmaze, we remove most (all but $10$) 1-step trajectories, because the size of the replay buffer for decision transformers is controlled by the number of trajectories, and antmaze dataset contains a large number of 1-step trajectories due to its data generation mechanism (immediately terminate an episode when the agent is close to the goal, but do not reset the agent location). Also, we add Layernorm~\cite{ba2016layer} after each hidden layer of the critic for Adroit, Maze and Antmaze environments, according to Yue et al.~\cite{yue2024understanding}'s advice. We found that such practice stabilizes the training process (see Sec.~\ref{sec:layernorm_ablation} for ablation).

For ODT and TD3 baseline, we use the same code as our TD3+ODT, while setting coefficients for RL and supervised gradients accordingly. For PDT baseline, we use the default hyperparameter in PDT paper, and pretrain PDT for 40K steps for all experiments. For TD3+BC and IQL, we use the default hyperparameter in their codebase, and pretrain them for 1M steps for all experiments (remaining the same as that in the codebase).

\begin{table}[ht]
    \centering
    \begin{tabular}{cc}
        \hline
        Hyperparam &  Value  \\
        \hline
        \# dim of embedding dimensions & 512\\
        \# of attention heads per layer  & 4\\
        \# of transformer layers  & 4\\
        Dropout  & 0.1 \\
        Actor Optimizer  & LAMB~\cite{you2019large}\\
        \# steps collected per epoch  & $\geq 1000$ (random dataset), $1$ trajectory (others)\\
        Actor activation function & ReLU \\
        Scheduler & $10^4$ steps, linear warmup \\
        \hline
        Critic layer & 2 \\
        Critic width & 256 \\
        Critic activation function  & ReLU \\
        \hline
        Batch size  & 256 \\
        \# actor update per epoch  & 300 \\
        Online exploration noise  & 0.1 \\
        TD3 policy noise  & 0.2 \\
        TD3 noise clip  & 0.5 \\
        TD3 target update ratio  & 0.005 \\
        \hline
    \end{tabular}
    \caption{The common hyperparameters across all environments used in our experiments.}
    \label{tab:common_hpp}
\end{table}

\begin{table}[ht]
\scriptsize
    \centering
    \setlength{\tabcolsep}{2pt}
    \begin{tabular}{ccccccccccccc}
    \hline
     & $T_{\text{train}}$ & $T_{\text{eval}}$ & $\text{RTG}_{\text{eval}}$ & $\text{RTG}_{\text{online}}$ & Pretrain $\alpha$ & Online $\alpha$ & $\gamma$ & $lr_c$ & $lr_a$ & Weight decay & \# pretrain steps & Buffer size\\
         \hline
       Hopper & 20 & 5 & 3600 & 7200 & 0 & 0.1 & 0.99 & $10^{-3}$ & $10^{-4}$ & 0.0005 & 5K & 1K\\
       HalfCheetah & 20 & 5 & 6000 & 12000 & 0 & 0.1 & 0.99 & $10^{-3}$ & $10^{-4}$ & $10^{-4}$ & 5K & 1K\\
       Walker2d & 20 & 5 & 5000 & 10000 & 0 & 0.1 & 0.99 & $10^{-3}$ & $10^{-3}$ & $10^{-3}$ & 10K & 1K\\
       Ant & 20 & 1 & 6000 & 12000 & 0 & 0.1 & 0.99 & $10^{-3}$ & $10^{-3}$ & $10^{-4}$ & 5K & 1K\\
       \hline
       Pen & 5 & 1 & 12000 & 12000 & 0 & 0.1 & 0.99 & 0.0002 & $10^{-4}$ & $10^{-4}$ & 40K & 5K \\
       Hammer & 5 & 5 & 16000 & 16000 & 0 & 0.1 & 0.99 & 0.0002 & $10^{-4}$ & $10^{-4}$ & 40K & 5K\\
       Door & 5 & 1 & 4000 & 4000 & 0 & 0.1 & 0.99 & 0.0002 & $10^{-4}$ & $10^{-4}$ & 40K & 5K\\
       Relocate & 5 & 1 & 5000 & 5000 & 0 & 0.1 & 0.99 & 0.0002 & $10^{-4}$ & $10^{-4}$ & 40K & 5K\\
       \hline
       Maze2D-Open & 1 & 1 & 120 & 120 & 0 & 0.1 & 0.99 & 0.0002 & $10^{-4}$ & $10^{-4}$ & 40K & 5K\\
       -Umaze & 1 & 1 & 60 & 60 & 0 & 0.1 & 0.99 & 0.0002 & $10^{-4}$ & $10^{-4}$ & 40K & 2.5K\\
       -Medium & 1 & 1 & 60 & 60 & 0 & 0.1 & 0.99 & 0.0002 & $10^{-4}$ & $10^{-4}$ & 40K& 5K\\
       -Large & 5 & 1 & 60 & 60 & 0 & 0.1 & 0.99 & 0.0002 & $10^{-4}$ & $10^{-4}$ & 40K& 5K\\
       \hline
       Antmaze-Umaze & 5 & 1 &  -100 & -100 & 0.1 & 0.1 & 0.998 &  0.0002 & $10^{-4}$ & $10^{-4}$ & 40K & 2K\\
       -Medium & 1 & 1 & -200 & -200 & 0.1 & 0.1  &  0.998 &  0.0002 & $10^{-4}$ & $10^{-4}$ & 200K & 2K\\
       -Large & 5 & 1 & -500 & -500 & 0.1 & 0.1 & 0.998 &  0.0002 & $10^{-4}$ & $10^{-4}$ & 200K & 2K\\
       \hline
    \end{tabular}
    \caption{Environment-specific hyperparameters, where $T_{\text{train}}$ and $T_\text{eval}$ stands for training and evaluation context length, $\text{RTG}_{\text{eval}}$ and $\text{RTG}_{\text{online}}$ represents RTG during evaluation and online rollout respectively, $\alpha$ is the coefficient for RL gradient, $\gamma$ is the discount factor, $lr_c$ is the critic learning rate, and $lr_a$ is the actor learning rate. Buffer size is counted in the number of trajectories. Note RTGs of antmaze have been modified according to our reward shaping.}
    \label{tab:diff_hpp}
\end{table}

\section{More Results}
\label{sec:ablation}


\subsection{Delayed Reward}

Though we have tested MuJoCo environments in Sec.~\ref{sec:exp}, it is worth noting that many offline RL algorithms have addressed the MuJoCo benchmark quite well~\cite{kostrikov2021offline,garg2023extreme}. Thus, we also tested settings where RL struggles to obtain good performance to further analyze the performance of using RL gradients for decision transformers.

\textbf{Environment and Experimental Setup.} In this experiment, we use the same experiment and dataset as in Sec.~\ref{sec:exp} except for one major difference: the rewards are not given immediately after each step. Instead, the cumulative reward during a short period of $M$ steps is given \textit{only at the end of the period}, while the rewards observed by the agents within a period are all zero. We adopt such a setting from prior influential work~\cite{oh2018self}, which creates a sparse-reward setting where RL algorithms struggle. We test $M=5$ in this experiment.

\textbf{Results.} We use the same baselines as that in Sec.~\ref{sec:exp}; Tab.~\ref{tab:delay5} summarizes the performance of each method. Generally, DT with TD3 gradient still works very well, {\color{black}much better than ODT. While TD3+BC works well in several scenarios, it struggles on random environments.} See Fig.~\ref{fig:delay5} for reward curves.

\begin{table}[t]
    \scriptsize
    \centering
    \begin{tabular}{ccccccc}
         \hline
         & TD3+BC & IQL & ODT & PDT & TD3  & TD3+ODT (ours)  \\
         \hline
         Ho-M-v2 & 50.2(-1.43) & 43.09(-21.28) & \bf{92.9(+43.03)} & 83.56(+81.73) & 82.11(+17.59) & 82.91(+18.61) \\
         Ho-MR-v2 & \bf{99.85(+42.33)} & 79.63(-4.72) & 85.07(+67.6) & 82.2(+80.21) & \underline{90.81(+48.79)} & \underline{98.55(+67.1)} \\
         Ho-R-v2 & 8.8(+0.3) & 18.49(+10.62) & 30.58(+28.37) & 25.1(+23.86) & \bf{75.55(+73.57)} & 52.38(+50.4) \\
         Ha-M-v2 & 50.45(+2.56) & 40.31(-6.88) & 42.07(+20.49) & 38.71(+38.66) & \underline{65.92(+24.91)} & \bf{66.33(+26.1)} \\
         Ha-MR-v2 & \underline{53.59(+8.82)} & 47.01(+3.46) & 39.45(+22.36) & 32.55(+32.76) & \bf{57.42(+28.48)} & 49.95(+20.23)\\
         Ha-R-v2 & 42.11(+28.73) & 34.87(+28.34) & 2.16(-0.07) & -0.85(+0.96) & \bf{52.24(+49.99)} & \underline{48.79(+46.54)}\\
         Wa-M-v2 & \underline{84.73(+2.19)} & 62.01(-16.59) & 76.21(+3.65) & 59.79(+59.52) & \underline{86.42(+18.88)} & \bf{87.83(+19.57)} \\
         Wa-MR-v2 & 61.1(-11.8) & 86.67(+13.93) & 76.96(+5.25) & 53.73(+37.17) & \bf{96.38(+34.42)} & \underline{91.93(+27.92)} \\
         Wa-R-v2 & 6.78(+4.37) & 7.13(+1.22) & 7.93(+3.73) & 18.35(+18.2) & \bf{57.43(+53.32)} & \underline{56.72(+51.72)}\\
         An-M-v2 & \underline{114.21(+18.09)} & 103.34(+6.4) & 80.29(-0.83) & 49.89(+45.92) & \bf{118.21(+30.11)} & \underline{110.67(+23.75)} \\
         An-MR-v2 & \textbf{122.11(+30.56)} & 103.5(+24.91) & 84.61(-2.13) & 42.58(+39.17) & \underline{112.2(+23.98)} & \underline{116.6(+28.59)} \\
         An-R-v2 &\bf{77.41(+21.34)} & 10.77(+0.3) & 22.55(-8.82) & 17.37(+13.69) & 49.3(+17.87) & 53.06(+21.54)\\
         \addlinespace[0.25ex]
         \hline
         Average & 64.28(+12.17) & 53.07(+3.31) & 53.4(+15.22) & 41.92(+39.32) & \textbf{78.68(+35.15)} & \underline{76.31(+33.51)} \\
         \addlinespace[0.25ex]
         \hline
    \end{tabular}
    \caption{Average reward for each method in MuJoCo Environments before and after online finetuning {\color{black} with delayed rewards}. To save space, the name of the environments and datasets are compressed, where Ho=Hopper, Ha=HalfCheetah, Wa=Walker2d, An=Ant for the environment, and M=Medium, MR=Medium-Replay, R=Random for the dataset. The format is "final(+increase)". {\color{black}The best result for each setting is highlighted in bold font, and any result $>90\%$ of the best performance is underlined.}}
    \label{tab:delay5}
\end{table}

\begin{figure}[t]
    \centering
    \includegraphics[width=\linewidth]{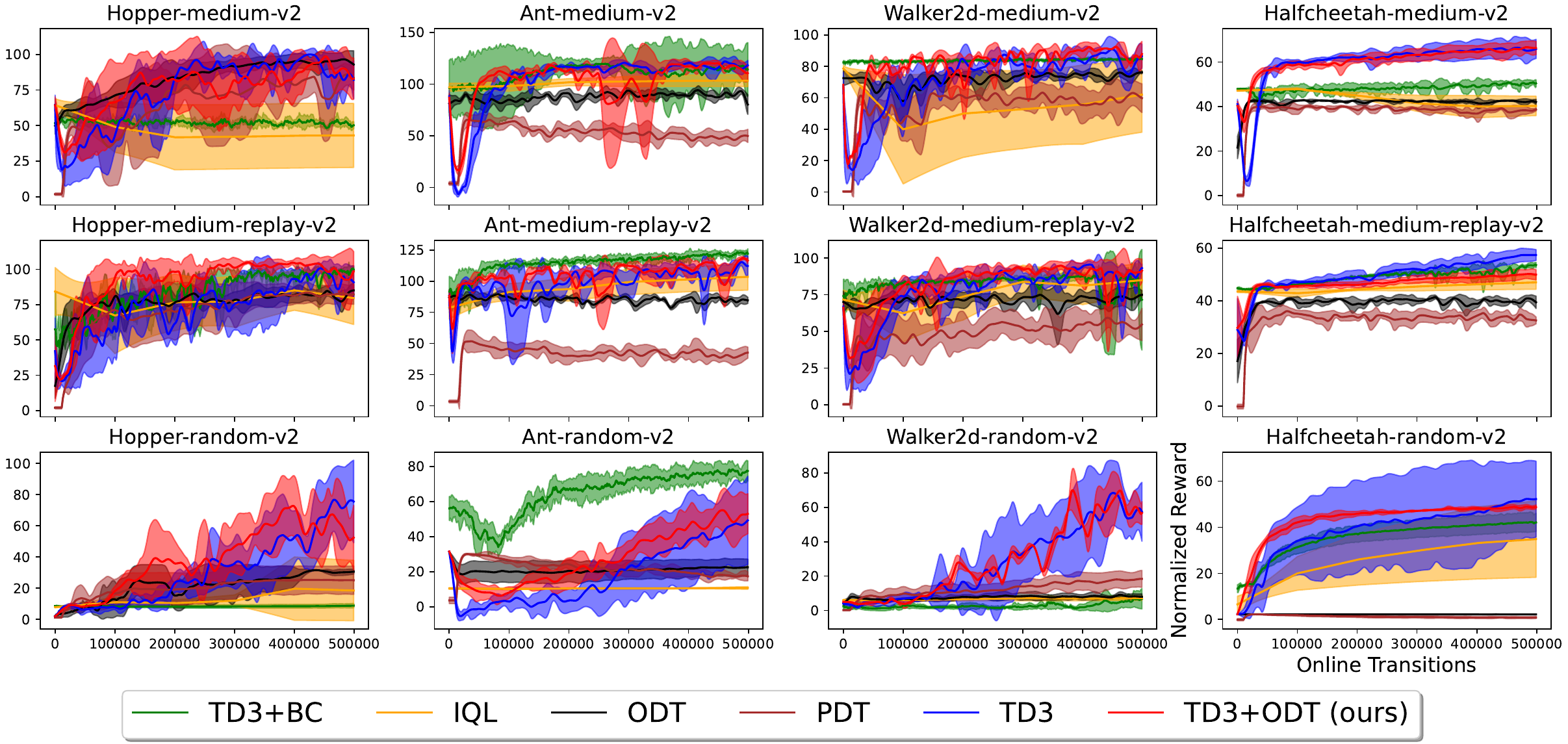}
    \caption{Reward curves for MuJoCo environments with delayed reward.}
    \label{fig:delay5}
\end{figure}

\subsection{Maze2D Environment}
\label{sec:maze2d}

We test on navigation tasks in D4RL~\cite{fu2020d4rl} where the agents need to control a pointmass through four different mazes: Open, Umaze, Medium and Large with different dataset respectively. Fig.~\ref{fig:maze2d-curve} lists the performance of each method on Maze2D before and after online finetuning, and Tab.~\ref{tab:maze_main} summarizes the performance before and after online finetuning. The result shows that our method again significantly outperforms autoregressive-based algorithms such as ODT and PDT, which validates our motivation in Sec.~\ref{sec:moti_new}. DDPG+ODT works similarly well as TD3+ODT in this environment with simple state and action space.

\begin{figure}[ht]
    \centering
    \includegraphics[width=\linewidth]{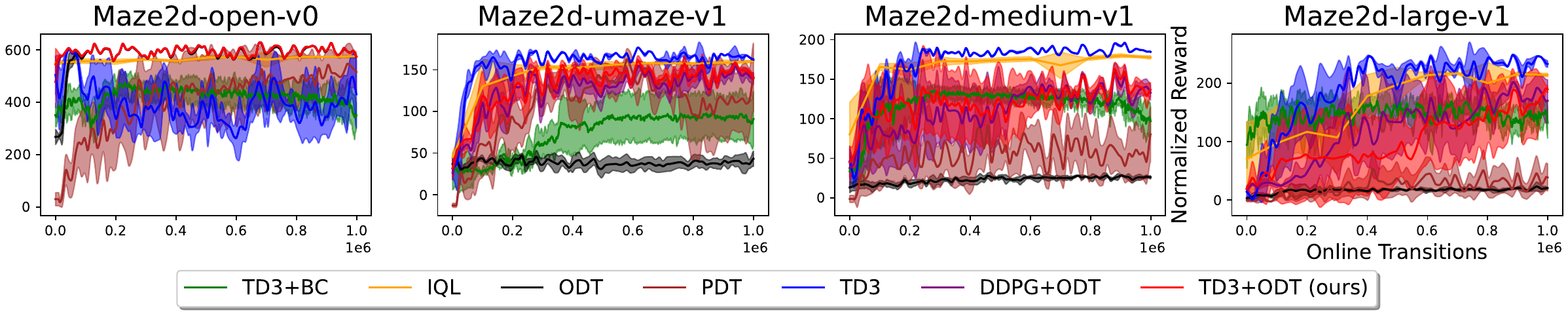}
    \caption{Reward curve for each method in Maze2D Environments. Again, autoregressive algorithms such as ODT and PDT does not perform well in this case.}
    \label{fig:maze2d-curve}
\end{figure}

\begin{table}[ht]
   \scriptsize
    \centering
    \setlength{\tabcolsep}{3pt}
    \begin{tabular}{cccccccc}
         \hline
         & TD3+BC & IQL & ODT & PDT & TD3 & \color{black}DDPG+ODT & TD3+ODT (ours)  \\
         \hline
         Open-v0 & 350.58(-0.22) & \bf{576.06(+24.58)} & \underline{574.4(+306.56)}  & 515.97(+485.99) & 430.61(-73.36) & \color{black}\underline{574.03(+96.02)} & \underline{574.32(+29.57)} \\
         Umaze-v2 & 90.91(+63.15) & \underline{159.97(+114.77)} & 43.19(+6.37) & 140.84(+153.47) & \bf{162.04(+130.44)} & \color{black}139.45(+113.4) & 140.1(+107.76) \\
         Medium-v2 & 96.91(+63.2) & \underline{177.61(+97.31)} & 26.11(+12.77) & 80.36(+81.49) & \bf{184.55(+138.57)} &\color{black} 133.03(+95.32) & 136.63(+91.69)\\
         Large-v2 & 132.66(+38.21) & \underline{214.02(+143.21)} & 20.37(+17.25) & 38.74(+39.14) & \bf{232.9(+218.72)} & \color{black}170.09(+151.44) & 189.71(+170.17) \\
         \addlinespace[0.25ex]
         \hline
         Average & 167.77(+41.09) & \bf{281.92(+94.97)} & 166.02(+85.74) & 193.98(+190.02) & 252.53(+103.59) & \color{black}\underline{254.15(+114.05)} & \underline{260.19(+99.8)}\\
         \addlinespace[0.25ex]
         \hline
    \end{tabular}
    \caption{Average reward for each method in Maze2D Environments before and after online finetuning. Our method works slightly worse than IQL but better than all other baselines.}
    \label{tab:maze_main}
\end{table}

\subsection{Ablations on training context length $T_{\text{train}}$}

Fig.~\ref{fig:ttrain-ablation} shows the result of using different context lengths on hammer-cloned-v1 environment (in this experiment, we use $T_{\text{eval}}=1$ to demonstrate the effect of more different $T_{\text{train}}$). It is shown from the experiment that the selection of $T_{\text{train}}$ needs to be balanced between more information taken into account and training stability; while longer $T_{\text{train}}$ brings faster convergence when growing from $1$ to $5$, the reward curves with $T_{\text{train}}\in\{10,20\}$ oscillates more than that with $T_{\text{train}}=5$.

\begin{figure}
    \centering
    \includegraphics[width=\linewidth]{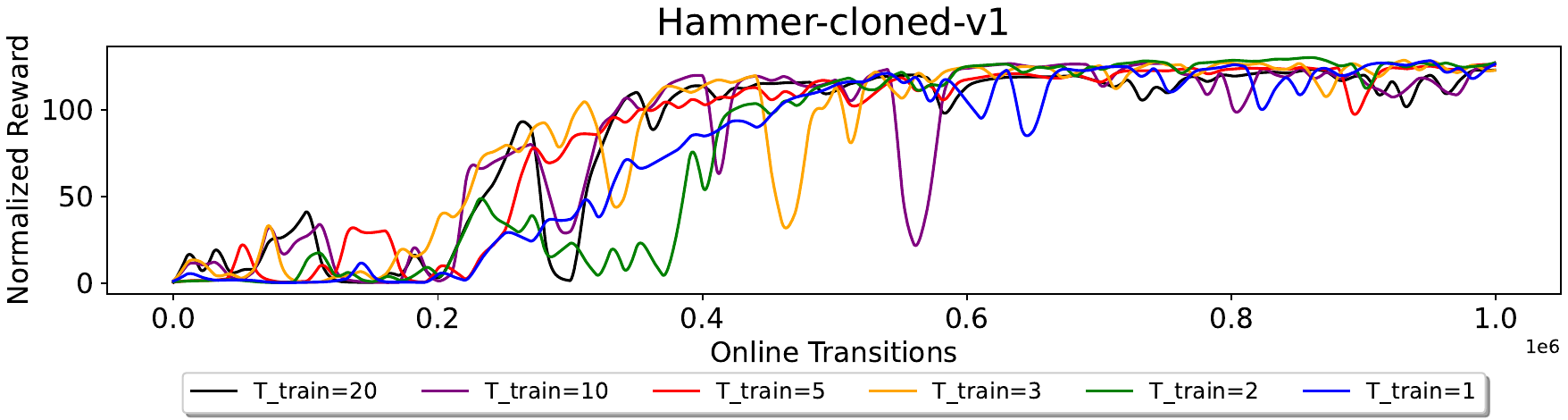}
    \caption{The reward curves on hammer-cloned-v1 with different $T_{\text{train}}$ and $T_{\text{eval}}=1$. While longer $T_{\text{train}}$ leads to faster convergence in this environment, runs with too long $T_{\text{train}}$ are also unstable.}
    \label{fig:ttrain-ablation}
\end{figure}
  
\subsection{Longer Training Process}

In some environments, such as hopper-random-v2, walker2d-random-v2 and ant-random-v2, our proposed method still seems to be improving after 500K online samples. In Fig.~\ref{fig:longer-ablation}, we show the finetuning result of our proposed method with more online transitions, which effectively shows that our method has greater potential in online finetuning when finetuned for more gradient steps.

\begin{figure}[t]
    \centering
    \includegraphics[width=\linewidth]{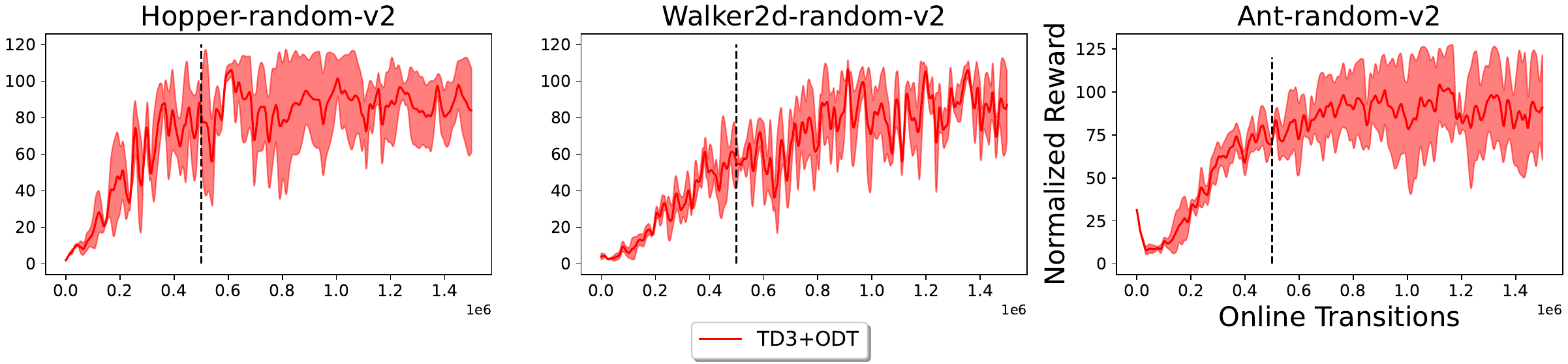}
    \caption{The reward curves of our method when finetuned for more steps (we only report curves until the black line in Fig.~\ref{fig:mujoco-curve}). It is clearly shown that our method has greater potential for improvement when finetuned for more steps.}
    \label{fig:longer-ablation}
\end{figure}

\subsection{The Effect of Layernorm}
\label{sec:layernorm_ablation}
As we have mentioned in Sec.~\ref{sec:hyperparameterdetail}, as suggested by Yue et al.~\cite{yue2024understanding}, we apply Layernorm~\cite{ba2016layer} to critic networks for environment other than MuJoCo for better stability in training. In our experiment, we found that it greatly stabilizes the critic on complicated environments such as Adroit, but makes online policy improvement less efficient on easier MuJoCo environments. Fig.~\ref{fig:layernorm-ablation} shows the performance and critic MSE loss comparison on some environments with and without layernorm; it is clearly shown that layernorm helps stabilizes online finetuning in some cases such as pen-cloned-v1, but hinders performance increase on other environments such as halfcheetah-medium-v2. 

\begin{figure}
    \centering
    \includegraphics[width=\linewidth]{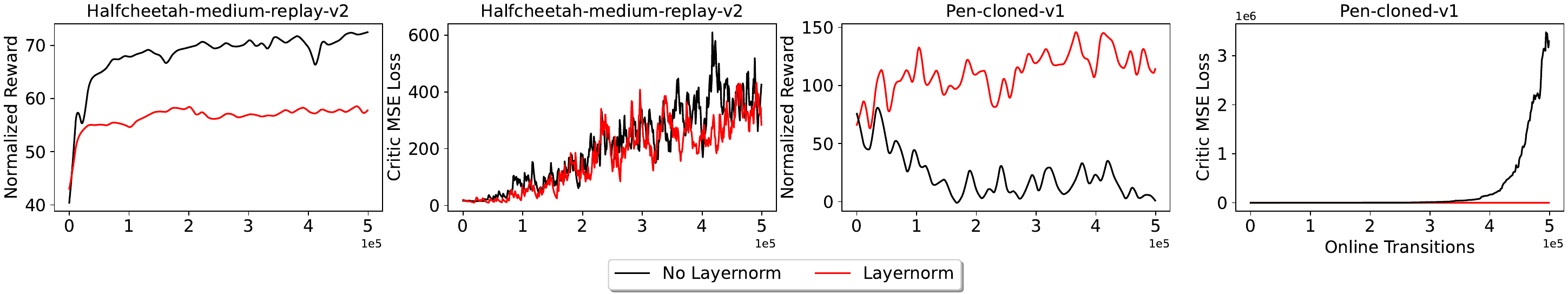}
    \caption{The reward curve and critic MSE loss comparison between runs with and without layernorm. Layernorm effectively stabilizes online finetuning in pen-cloned-v1, but hinders performance increase in halfcheetah-medium-v2.}
    \label{fig:layernorm-ablation}
\end{figure}

\subsection{Recurrent Critic}

As mentioned in Sec.~\ref{sec:RL_new}, we use reflexive critics (i.e., critics that only take the current state nand action) to add RL gradients to decision transformer, and this creates an average effect among policies generated by different context lengths (see Sec.~\ref{sec:whyavg} in the appendix for detail). In this section, we explore recurrent critic by substituting the MLP critic using a LSTM, such that for a trajectory segment, the evaluation for the $i$-th action $a_i$ is based on all state-action pairs $(s_1, a_1), \dots, (s_{i-1}, a_{i-1})$ and current state $s_i$. As shown in Fig.~\ref{fig:recurrent-ablation}, we found that recurrent critics are much less stable than reflexive critics, and the instability increases as the training context length $T_{\text{train}}$  grows; on the contrary, reflexive critic can well-handle the case where $T_{\text{train}}$ is long.

\begin{figure}
    \centering
    \includegraphics[width=0.7\linewidth]{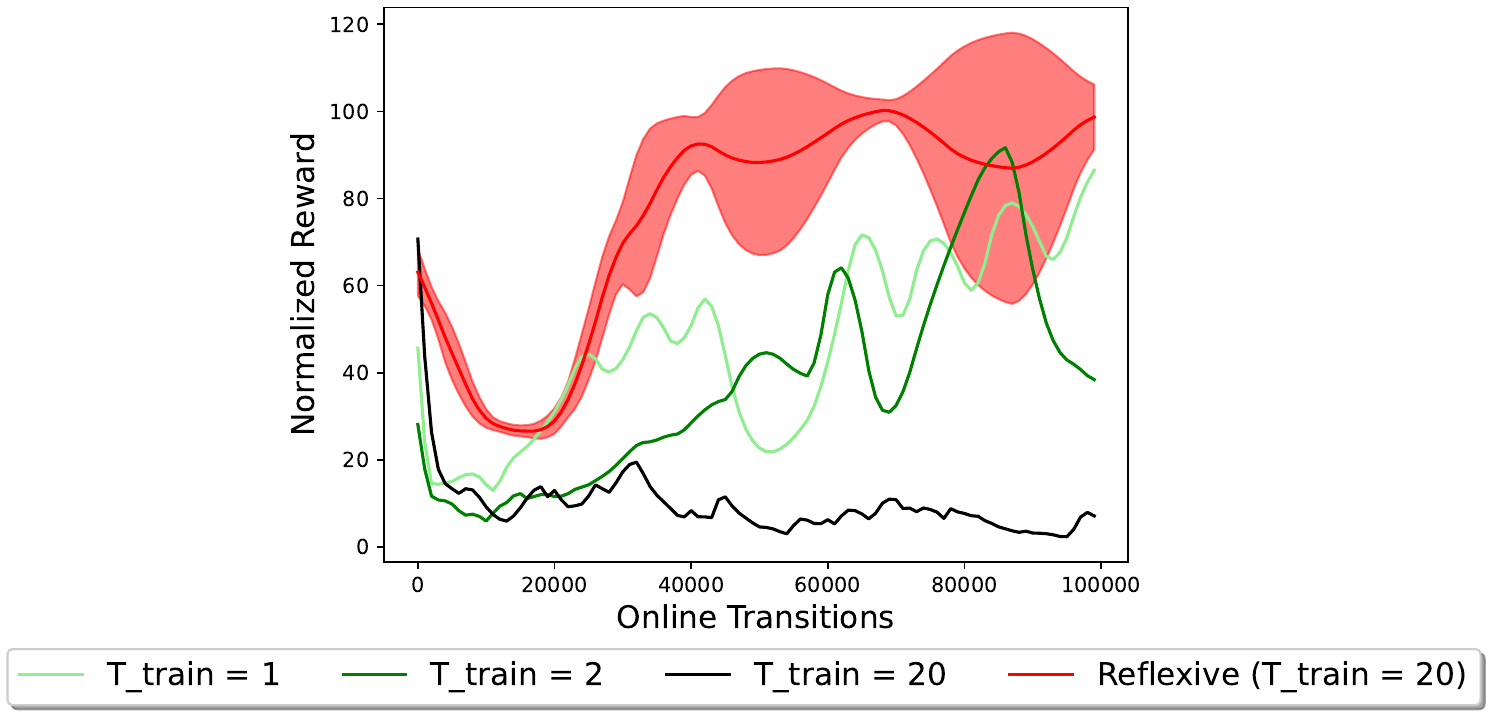}
    \caption{The performance of reflexive critic vs. recurrent critic on hopper-medium-v2. It is clearly shown that recurrent critic is much harder to train, and its performance decreases as the training context length $T_{\text{train}}$ grows.}
    \label{fig:recurrent-ablation}
\end{figure}

{\color{black}
\subsection{Regularizer for Pure TD3 Gradients}
\label{sec:regularizer}

In the Adroit environment results discussed in  Sec.~\ref{sec:exp}, we found that the baseline of ODT finetuned using pure TD3 gradients struggles due to catastrophic forgetting. Inspired by~\citet{wolczyk2024fine}, we test whether adding a KL regularizer can fix the forgetting problem. Though our policy is deterministic, we can approximately interpret the policy as Gaussian with a very small variance. Thus, a KL regularizer can be simply added using $c_0\cdot\|a-a_{\text{old}}\|^2$, where $a$ is the current action and $a_{\text{old}}$ is the action predicted by the pretrained policy. We set $c_0=0.05$ and test this method on the Adroit cloned and expert dataset. We illustrate the result in Fig.~\ref{fig:ablation-15}. We find that the KL regularizer effectively addresses the issue on expert environments for both TD3 and TD3+ODT. But it can sometimes hinder the policy improvement of TD3+ODT with low return during online finetuning.

\subsection{Other possible exploration improvement techniques}
\label{sec:more_explore}
In Sec.~\ref{sec:moti_new}, we state that ODT cannot explore the high-RTG region when pretrained with low-quality offline data, and we ran a simple experiment to verify this (Fig.~\ref{fig:improve}). In this section, we test two potential alternatives for addressing the exploration problem: JSRL~\citep{uchendu2023jump} and curriculum learning. 

For JSRL, an expert policy is used for the first $n$
 steps in an episode, before ODT takes over. We set $n=100$
 (100\% max episode length for adroit pen, 50\% max episode length for other adroit environments) initially, and apply an exponential decay rate of $0.99$
 for every episode. We test two settings of JSRL: the expert policy being the offline pretrained policy, and the expert policy being \textit{oracle}, i.e., an IQL policy trained on the Adroit expert dataset.
 
 For curriculum learning, we use ODT with a gradually increasing target RTG with the current RTG for rollouts being $\text{RTG}_{\text{eval}}-0.99^{N}(\text{RTG}_{\text{eval}}-\text{RTG}_{\text{data}})$. Here, $N$ is the number of episodes sampled in online stage, and $\text{RTG}_{\text{data}}$ is the average RTG of the offline dataset. 
 
 Results are summarized in Fig.~\ref{fig:ablation-15}. We found that curriculum RTG does not work, probably because the task is too hard and cannot be improved by random exploration without gradient guidance. Further, even with oracle exploration, ODT is not guaranteed to succeed: it fails on the hammer environment where TD3+ODT succeeds, probably because of insufficient expert-level data and an inability to improve with random exploration.

\begin{figure}[t]
    \centering
    \includegraphics[width=\linewidth]{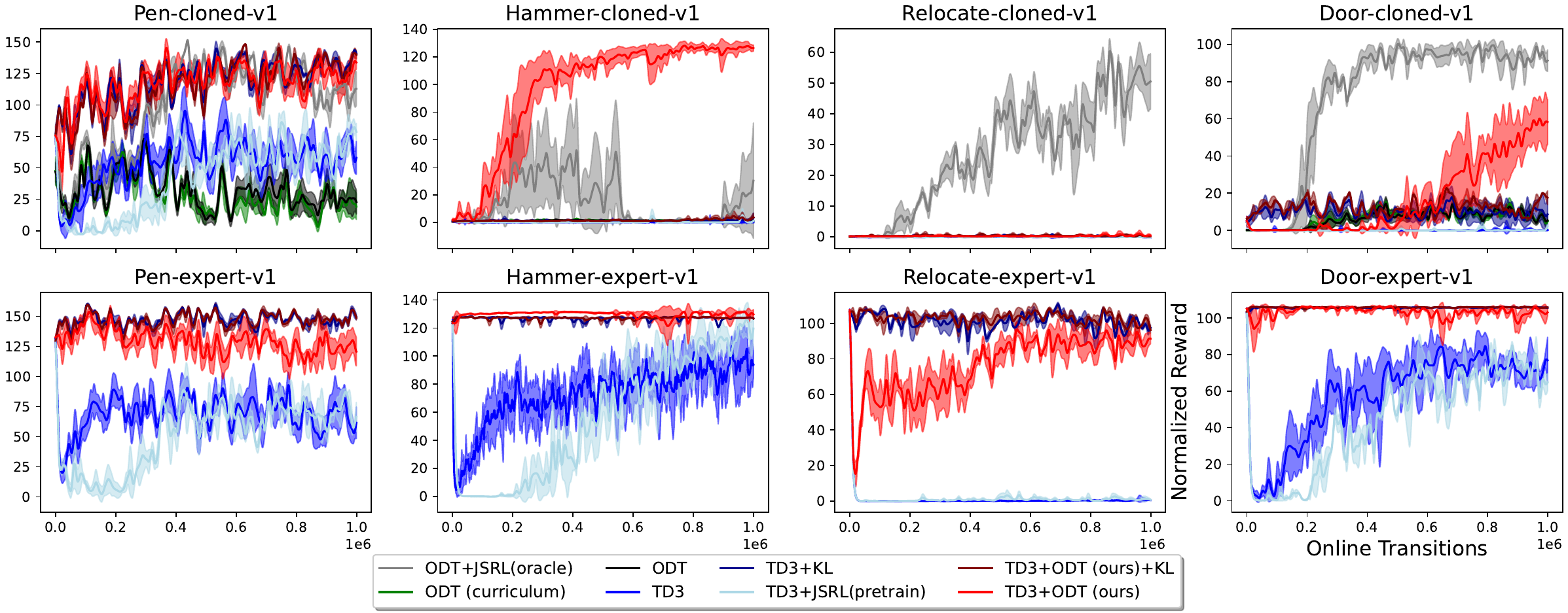}
    \caption{\color{black}The result of ODT with better exploration (only in cloned dataset) and TD3/TD3+ODT forgetting mitigation. The result shows that 1) ODT with curriculum RTG does not work; 2) even with exploration supported by an oracle, ODT can still fail on some environments such as hammer; 3) JSRL with the pretrained policy does not work for forgetting mitigation; 4) KL regularizer effectively addresses the issue on expert environments for both TD3 and TD3+ODT, but it can hinder the improvement of TD3+ODT with low return.}
    \label{fig:ablation-15}
\end{figure}

\subsection{Ablations on the Architecture}
\label{sec:arch_ablation}
In this section, we further examine the source of the performance gain of our method compared to TD3+BC. There are two key differences as stated in Sec.~\ref{sec:RL_new}: The architecture and RL via Supervised (RvS) learning~\citep{emmons2021rvs}. We can hence assess two more baselines: TD3+BC with our transformer architecture and TD3+RvS  using TD3+BC's architecture. We present the ablation result on the Adroit cloned environment in Fig.~\ref{fig:ablation-2367}. The result shows that only TD3+BC with our architecture works (albeit still worse than our method). We hypothesize that this is because a simple MLP is hard to model the complicated policy which takes both RTG and state into account. 

\begin{figure}
    \centering
    \includegraphics[width=\linewidth]{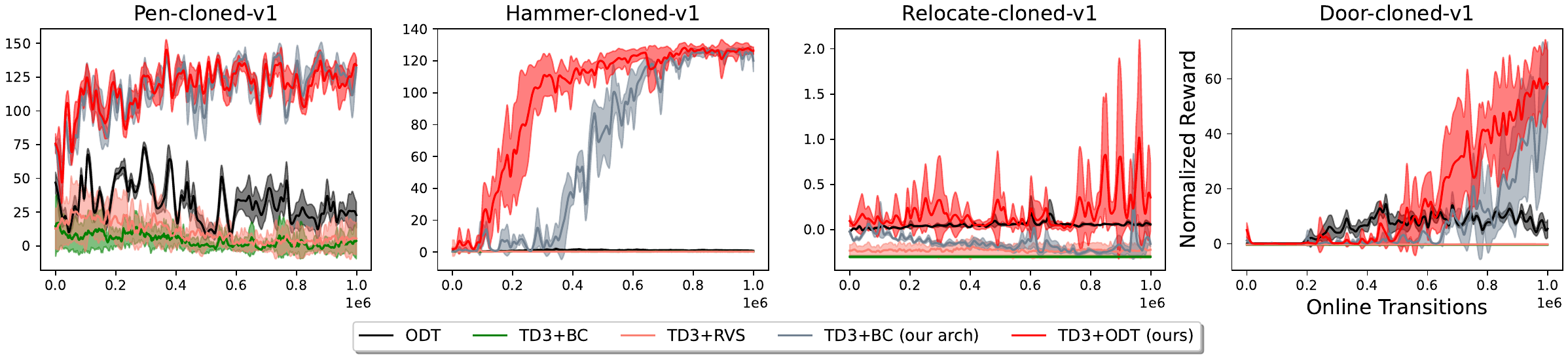}
    \caption{\color{black}The result of ODT and TD3+BC ablations (TD3+RVS, DDPG+ODT, TD3+BC with our architecture and curriculum RTG for ODT) on Adroit environments. The result shows that only TD3+BC with our architecture works. However, it remains worse than our method.}
    \label{fig:ablation-2367}
\end{figure}


To further assess if simply adding more layers to the MLP  works, we  conduct an ablation on the number of layers for TD3+RvS. The result is illustrated in Fig.~\ref{fig:layers}. It shows that simply adding a few layers to the MLP does not aid performance. We speculate that it is probably the transformer architecture that helps modeling the state-and-RTG-conditioned policy.

\begin{figure}
    \centering
    \includegraphics[width=\linewidth]{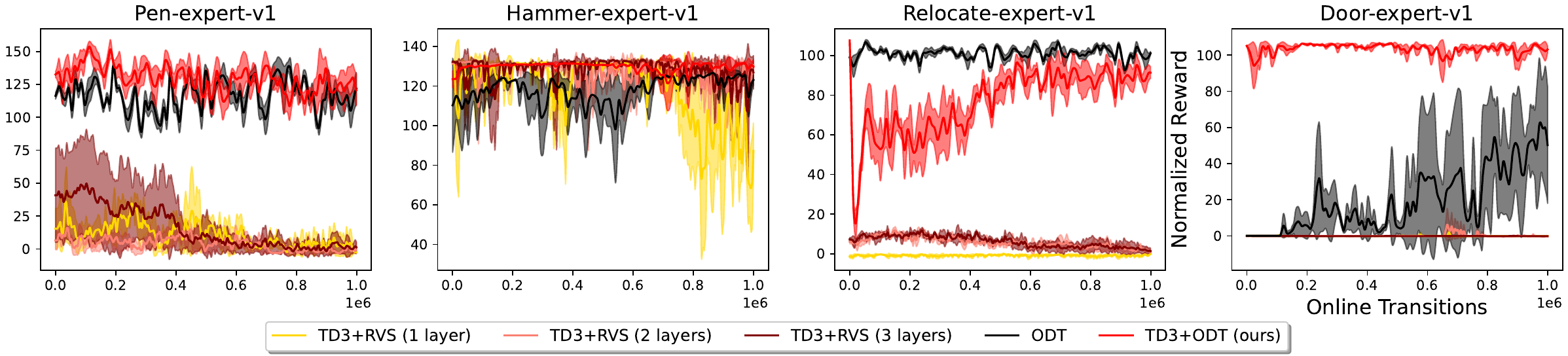}
    \caption{\color{black} Results of adding more layers to TD3+RvS. The result shows that simply adding MLP layers does not help TD3+RvS match the performance of  ODT and TD3+ODT.}
    \label{fig:layers}
\end{figure}

}

\section{Computational Resources}
\label{sec:compres}
We conduct all experiments with a single NVIDIA RTX 2080Ti GPU on an Ubuntu 18.04 server equipped with 72 Intel Xeon Gold 6254 CPUs @ 3.10GHz. Mujoco experiments takes about $6-8$ hours, and the bottleneck is the gradient update; about $50\%$ time is spent on backpropagation and update of parameters. Our critic appended to ODT only takes up about $20\%$ time to train, in which $90\%$ of the critic training time is spent on decision transformer inference to get action for ``next state''{\color{black}. For the actor, the training overhead of our method is negligible since it only contains an MLP critic inference to get the Q-value. Therefore, overall our method only uses $20\%$ extra time compared to ODT for training, but attains much better results.} 

\section{Dataset and Algorithm Licenses}
\label{sec:licenses}

Our code is developed upon multiple algorithm repositories and environment testbeds.

\textbf{Algorithm Repositories.} We implement our method on the basis of online decision transformer repository, which has a CC BY-NC 4.0 license. We also refer to IQL~\citep{kostrikov2021offline}, PDT~\citep{xie2023future} and TD3+BC~\citep{fujimoto2021minimalist} repository when running baselines, all of which have MIT licenses.

\textbf{Environment Testbeds.} We utilize OpenAI gym~\citep{1606.01540}, MuJoCo~\citep{todorov2012mujoco}, and D4RL~\citep{fu2020d4rl} as testbed, which have an MIT license, an Apache-2.0 license, and an Apache-2.0 license respectively.

\end{document}